\def\bI{{\mathbf I}}
\def\bx{{\mathbf x}}
\newcommand{\client}{k}
\newcommand{\indiv}{i}
\newcommand{\fed}{\textsc{Clb}}
\newcommand{\clientsup}{{(\client)}}
\newcommand{\clientzsup}{{(\client,\treatmentdata)}}
\newcommand{\clienttrtsup}{{(\client,\groupt)}}
\newcommand{\clientctrsup}{{(\client,\groupc)}}
\newcommand{\clientp}{r}
\newcommand{\pooled}{\textrm{Meta}}
\newcommand{\groupt}{1}
\newcommand{\groupc}{0}
\newcommand{\hajek}{\text{Hájek}}
\newcommand{\HT}{\text{HT}}
\newcommand{\clientset}{\mc{S}}
\newcommand{\clientnum}{{K}}
\newcommand{\clientdataset}{\mc{D}}
\newcommand{\dataset}{\mc{D}}
\newcommand{\fl}{\textit{FL}}
\newcommand{\outcome}{Y}
\newcommand{\covariates}{X}
\newcommand{\treatment}{Z}
\newcommand{\covariatesdata}{\bx}
\newcommand{\treatmentdata}{z}
\newcommand{\taudr}{\delta}
\newcommand{\modeloutcome}{m}
\newcommand{\modeltreated}{m_1}
\newcommand{\modelcontrol}{m_0}
\newcommand{\modelprop}{e}
\newcommand{\loss}{\ell}
\newcommand{\dr}{\textsc{aipw}}
\newcommand{\tarsrcratio}{\lambda}
\newcommand{\boundy}{M_\outcome}
\newcommand{\datasize}{N}
\newcommand{\datasizeall}{\datasize_\clientset}
\newcommand{\dimension}{d}
\newcommand{\ratio}{w}
\newcommand{\pslower}{c}
\newcommand{\sumtrtweights}{\sum_{r=1}^\clientnum \modelprop^{(r,1)}\paren{\covariates_\indiv}}
\newcommand{\sumctrweights}{\sum_{r=1}^\clientnum \modelprop^{(r,0)}\paren{\covariates_\indiv}}
\newcommand{\sumesttrtweights}{\sum_{r=1}^\clientnum \hat\modelprop^{(r,1)}\paren{\covariates_\indiv}}
\newcommand{\sumestctrweights}{\sum_{r=1}^\clientnum \hat\modelprop^{(r,0)}\paren{\covariates_\indiv}}
\newcommand{\sumrvtrtweights}{\sum_{r=1}^\clientnum \modelprop^{(r,1)}\paren{\covariates}}
\newcommand{\sumrvctrweights}{\sum_{r=1}^\clientnum \modelprop^{(r,0)}\paren{\covariates}}
\newcommand{\select}{S}
\newcommand{\alloct}{A}
\newcommand{\sumsite}{G}
\newcommand{\selectctrsup}{{\paren{\client,\groupc}}}
\newcommand{\selecttrtsup}{{\paren{\client,\groupt}}}
\newcommand{\ate}{\textsc{ate}}
\newcommand{\ipw}{\textsc{ipw}}
\newcommand{\poolipw}{\pooled-\textsc{ipw}}
\newcommand{\collabipw}{\fed-\textsc{ipw}}
\newcommand{\collab}{\fed}
\newcommand{\target}{\textrm{t}}
\newcommand{\targetsup}{{\paren{\textrm{t}}}}
\newcommand{\datasettarget}{\dataset^{\paren{\target}}}
\newcommand{\ratepower}{\xi}
\newcommand{\dml}{\textsc{aipw}}
\newcommand{\sumclient}{\sum_{\client=1}^\clientnum}
\title{Collaborative Heterogeneous Causal Inference Beyond Meta-analysis}
\author{
Tianyu Guo\footnotemark[1]\thanks{Department of Statistics, UC Berkeley. Email: \texttt{tianyu\_guo@berkeley.edu}}
\and
Sai Praneeth Karimireddy\footnotemark[2]\thanks{Department of EECS, UC Berkeley.}
\and
Michael I. Jordan\footnotemark[1] \footnotemark[2]
}
\begin{document}

\maketitle

\begin{abstract}
Collaboration between different data centers is often challenged by heterogeneity across sites. To account for the heterogeneity, the state-of-the-art method is to re-weight the covariate distributions in each site to match the distribution of the target population. Nevertheless, this method could easily fail when a certain site couldn't cover the entire population. Moreover, it still relies on the concept of traditional meta-analysis after adjusting for the distribution shift.

In this work, we propose a collaborative inverse propensity score weighting estimator for causal inference with heterogeneous data. Instead of adjusting the distribution shift separately, we use weighted propensity score models to collaboratively adjust for the distribution shift. Our method shows significant improvements over the methods based on meta-analysis when heterogeneity increases. To account for the vulnerable density estimation, we further discuss the double machine method and show the possibility of using nonparametric density estimation with $\dimension<8$ and a flexible machine learning method to guarantee asymptotic normality. We propose a federated learning algorithm to collaboratively train the outcome model while preserving privacy. Using synthetic and real datasets, we demonstrate the advantages of our method.
\end{abstract}
\section{Introduction}
The booming of Federated Learning (\fl) has drawn attention in medical and social sciences, where sharing datasets between data centers is often limited. However, their research focuses more on Causal Inference, in which prediction gets less attention, whereas valid inference is the main focus. For example, Meta-analysis takes the weighted mean of published estimators of the average treatment effect (\ate) and mainly focuses on choosing optimal weights and making inferences.

Given homogeneous data, how could Federated Learning help Causal Inference? The estimation of \ate~commonly incorporates \textit{nuisance prediction models}, e.g., the propensity score model.
Thanks to the homogeneity, we can use \fl~methods to train a shared propensity score model, then each site gets its own \ate~estimator, and finally, the central server uses Meta-analysis to take the weighted mean. 

Nevertheless, given heterogeneous data, Federated Learning seems to play a negligible role. Since propensity score models differ between sites, training a shared model is meaningless. As a result, all methods fall within the scope of Meta-analysis. For example, to estimate the \ate~for a target site, \citet{han_federated_2022} and \citet{han_multiply_2023} consider using density ratio to re-weight source sites and summarizing estimates from source sites with Meta-analysis.

We propose a novel method tailored for collaboration with heterogeneous data.
Suppose we have $\clientnum$ sites, denote the $\ate$ as $\tau$, the nuisance propensity model as $\modelprop$, the site-wise weight as $\eta_\client$. Instead of taking the weighted mean afterward, we directly take the weighted mean of nuisance models and get $\hat\tau_\client(\textstyle \sum_{r=1}^\clientnum \eta_r \hat{\modelprop}_r)$ in each site $\client$, which is inconsistent. Then,  we could recover a consistent estimator $\hat\tau_{\fed}$ by taking the average across all sites. Equations \eqref{eqn:old-est} and \eqref{eqn:new-est} summarize the previous and our estimators.
\begin{align}
&\hat\tau_{\text{homo}} = \sumclient \eta_\client \hat\tau_\client(\hat{\modelprop}_{\fl})
\quad
\hat\tau_{\text{heter}} = \sumclient \eta_\client \hat\tau_\client(\hat{\modelprop}_\client),
\label{eqn:old-est}\\
&\text{we propose:}\quad
\hat\tau_{\fed} = \sumclient \hat\tau_\client(\textstyle \sum_{r=1}^\clientnum \eta_r \hat{\modelprop}_r).\label{eqn:new-est}
\end{align}

Our method outperforms previous ones in several ways: first, it is the first method that allows collaboration across disjoint domains without additional assumptions; second, it achieves better accuracy than Meta-analysis; third, it remains stable even as the heterogeneity between sites increases, which encourages collaboration from a broader range. We provide theory and experiment to demonstrate these claims.

\section{Problem Setup}
\label{sec:setup}
We use  $\clientset = [\clientnum]$ to denote the set of sites, with $\dataset^\clientsup$ being the dataset of site $\client$. Let $\treatment$ be the binary treatment, $\covariates \in \R^\dimension$ be the covariates with dimension $\dimension$, $\outcome$ be the outcome. Let $\outcome(\treatmentdata)$ be the potential outcome under treatment $\treatmentdata\in\set{\groupt,\groupc}$. 
Classical causal inference only copes with the biased sampling of $\treatment$. However, we need to cope with multiple sites. We first present a motivating example from Meta-analysis to model the actual data-generating procedure. 
\begin{example}[Collaboration of Clinical Trails]\label{exp:rct}\citet{koesters_agomelatine_2013} reviews clinical trials of Agomelatine, an antidepressant drug approved by the European Medicines Agency in 2009.
The 13 included trials have different data sizes and demographic distributions. One study was carried out on individuals aged 60 or above, and the remaining is for all ages. Each study reports the mean difference in Hamilton Rating Scale for Depression (HRSD) scores between treatment and control groups. 
\end{example}

The target group in Example \ref{exp:rct} is the patients with depression. However, each clinical trial is a biased sample from the target population. Abstracting from this example, we propose a \textit{sampling-selecting} framework for collaborative causal inference:

\begin{enumerate}
\item \textit{Sampling}: 
Sample an individual $\indiv$ from the target distribution, let $\select_{\indiv} \in \set{\paren{\client,\treatment}\mid \client \in \clientset, \treatment \in \set{0,1}} \cup \emptyset$ be the selection indicator. If $\select = \paren{\client,1}$, individual $\indiv$ gets selected to site $\client$ and gets treated. If $\select = \emptyset$, the individual is eliminated from the dataset. 
Sample $(\covariates_\indiv,\outcome_\indiv(1),\outcome_\indiv(0),\select_{\indiv})$ i.i.d. from the target distribution according to Equation \eqref{eqn:def-prop} and get the pooled dataset \eqref{eqn:pooled-dataset}.
\begin{align}
\textstyle
& \P\paren{\select = \emptyset\mid \covariates}  = \modelprop^\emptyset\paren{\covariates},\nonumber\\
& \P\paren{\select = \paren{\client,\treatmentdata}\mid \covariates} = \modelprop^\clientzsup\paren{\covariates}\nonumber\\
& \text{ with } \modelprop^\emptyset(\covariates) + \sumclient \sum_{\treatmentdata \in\set{0,1}} \modelprop^\clientzsup(\covariates) = 1.\label{eqn:def-prop}\\
&\clientdataset_\pooled = \set{\paren{\covariates_\indiv,\outcome_\indiv(1),\outcome_\indiv(0),\select_\indiv} \mid \indiv \in [\datasize]}.\label{eqn:pooled-dataset}
\end{align}

\item \textit{Selecting:}
We use $\treatment\paren{\select_\indiv}$ to denote the treatment indicator corresponding to $\select_\indiv$. We follow the potential outcome framework and invoke the  Stable Unit Treatment Value Assumption (SUTVA). Therefore, $\outcome_\indiv = \treatment\paren{\select_\indiv} \outcome_\indiv(\groupt) + \set{1 - \treatment\paren{\select_\indiv}} \outcome_\indiv(\groupc) $.
Split $\clientdataset_\pooled$ to each site and treatment/control groups according to $\select$ and get
\begin{equation}\label{eqn:site-dataset}
\clientdataset^\clientsup = \set{\paren{\covariates_\indiv,\outcome_\indiv,\treatment_\indiv} \in \dataset_\pooled \mid \select_\indiv = \paren{\client,\treatment_\indiv}}.
\end{equation}
\end{enumerate}
Furthermore, census data commonly reflects the target distribution of covariates. Therefore, we assume there's a public dataset $\datasettarget$ that contains covariates information.
\begin{equation}
\dataset^\targetsup = \set{\paren{\covariates_\indiv}\mid \covariates_\indiv \text{ drawn $i.i.d.$ from target distribution}}.
\end{equation}
\begin{figure}
    \centering
    \begin{subfigure}{0.45\textwidth}
    \centering
    \caption{\textit{Sampling-Selecting} Framework.}
    \label{fig:sampling-then-selecting}
    \includegraphics[width=\linewidth]{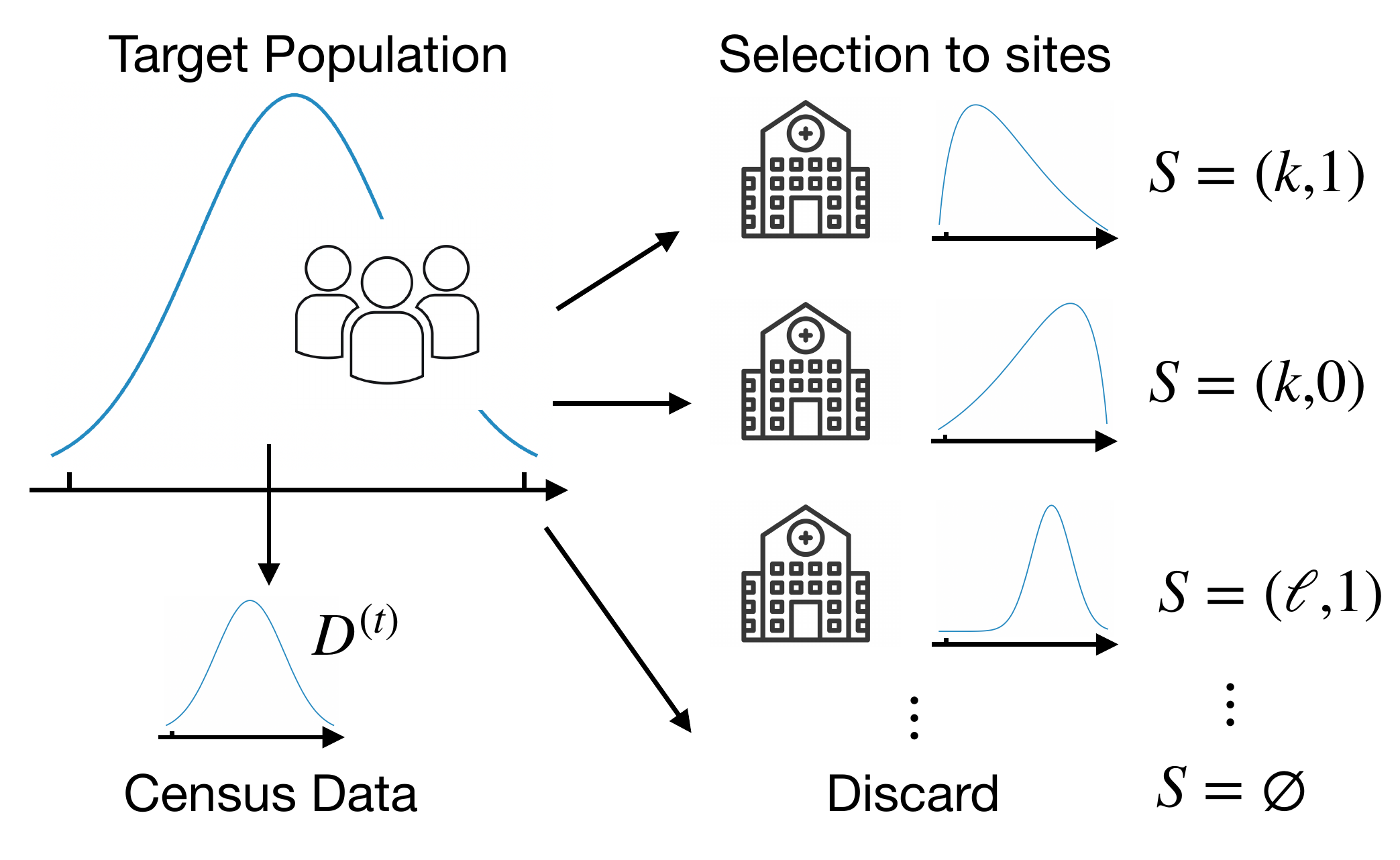}
\end{subfigure}
\hfill
\begin{subfigure}{0.45\textwidth}
    \centering
    \caption{KL-MSE curve of different estimators}
    \label{fig:kl-mse}
    \includegraphics[width=\linewidth]{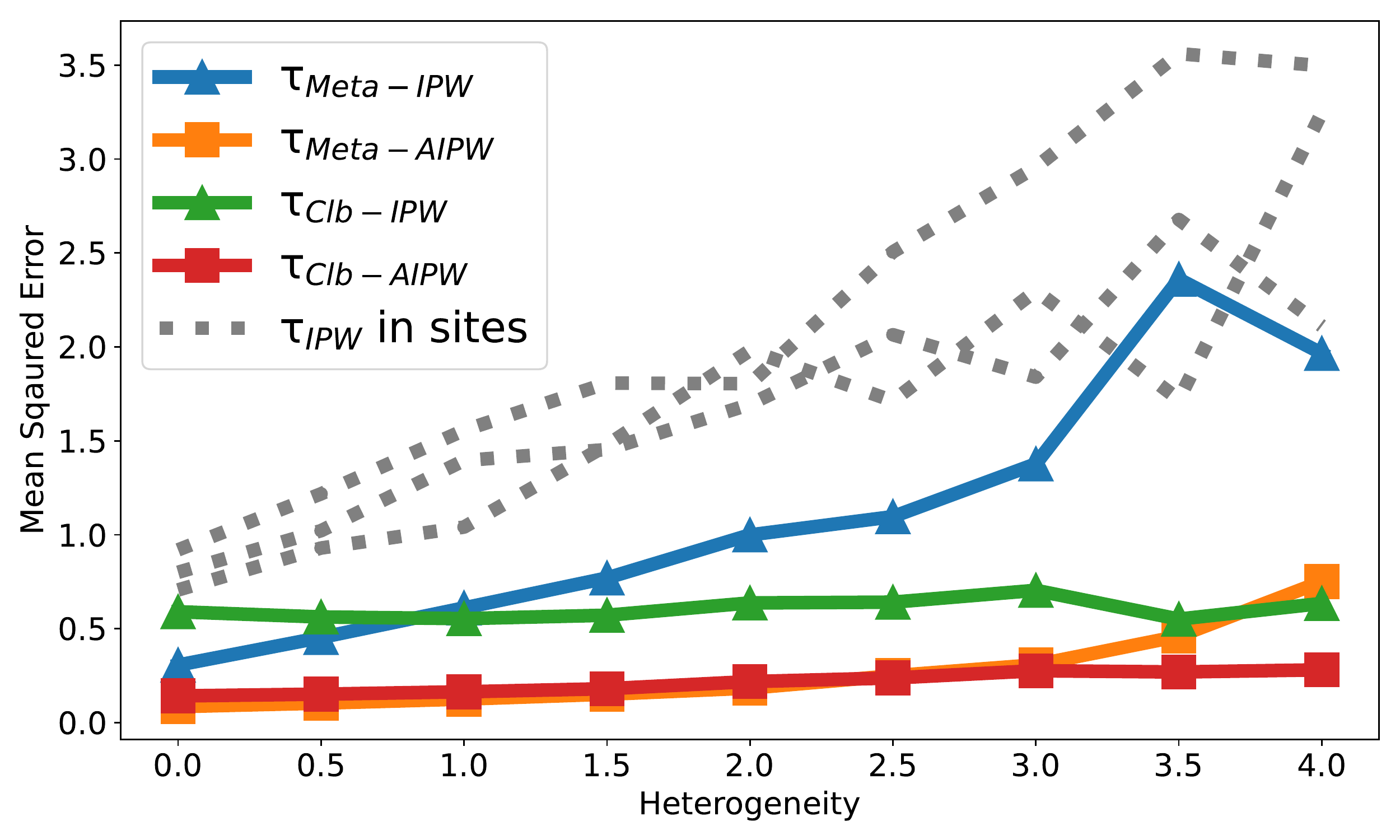}
\end{subfigure}
    \caption{Visualization of the data-generating process and the comparison of proposed estimators.}
    \label{fig:intro}
\end{figure}
Figure \ref{fig:sampling-then-selecting} visualizes the data-generating process. Each site selects from the target distribution in a different way. The selection indicator $\select = \paren{\client,z}$ describes the sampling mechanism of the $\treatment = z$ group in site $\client$. The figure shows distribution shifts with left and right tilting ($\select = \paren{\client,1}, \paren{\client,0}$), under coverage ($\select = \paren{\ell,1}$), and discarded data $\select=\emptyset$. \citet{xiong_federated_2022} assumes that i.e., $\P\paren{\select = \emptyset} = 0$ and consider the pooled dataset. In contrast, 
we allow $\select = \emptyset$ to reflect the biased sampling of the pooled dataset $\cup_{\client=1}^\clientnum \dataset^\clientsup$ from the target population, which is neglected by \citet{xiong_federated_2022}. For example, if all sites include fewer men in the dataset, we would have that $\P\paren{\select \neq \emptyset\mid \text{men}} < \P\paren{\select \neq \emptyset\mid \text{other genders}}$.

One may question that there's no real \textit{sampling-selecting} process since each site collects data independently. 
A possible answer is to recall the well-accepted quasi-experiment framework \citep{cook_experimental_2002}. For example, to understand the effect of gender on an outcome. The quasi-experiment framework imagines that individual $\covariates_\indiv$ firstly gets $i.i.d.$ sampled, then gets ``treated'' by gender $G_\indiv$, although there's no actual ``gender'' assignment process.
The \textit{sampling-selecting} framework extends the quasi-experiment to multiple-site settings.

The objective is to estimate the average causal effect on the target distribution $\tau = \E\brac{\outcome(1)-\outcome(0)}$. 
A foundation for identifying $\tau$ is Assumption \ref{ass:unconfoundedness}.
\begin{assumption}[Homogeneity and unconfoundedness]
\label{ass:unconfoundedness}
We have that 
\begin{equation}\label{eqn:homo-cate}
\paren{\outcome(1),\outcome(0)} \indep \select  \mid \covariates 
\end{equation}
\end{assumption}
More than unconfounded treatment assignment, Assumption \ref{ass:unconfoundedness} also implies that the individual treatment effects are the same across sites. In Example \ref{exp:rct}, when fixing $X$ for an individual $\indiv$, if the effect of Agomelatine still varies across sites, collaboration is meaningless due to unmeasured confounders. Violation of Assumption \ref{ass:unconfoundedness} is sometimes termed as anti-causal learning \citep{farahani_brief_2020}.

Another foundation for identifying the causal effect is the overlapping assumption. There are two kinds of overlapping assumptions. Given an individual, Assumption \ref{ass:indiv-overlap} requires each site to select them with non-zero probability, whereas Assumption \ref{ass:overall-overlap} only requires the overall selection probability to be non-zero.
\begin{assumption}[Individual-Overlapping]
\label{ass:indiv-overlap}
We have that
$$\min_{\covariatesdata,\client} \set{\P\paren{\treatment = 1, \select = 1 \mid \covariatesdata, \alloct = \client}} > \pslower > 0. $$
\end{assumption}
\begin{assumption}[Overall-Overlapping]
\label{ass:overall-overlap}
We have that $$\min_{\covariatesdata}\set{\P\paren{\treatment = 1, \select = 1 \mid \covariatesdata}} > \pslower > 0.$$
\end{assumption}

Revisiting Example \ref{exp:rct},  Assumption \ref{ass:unconfoundedness} is guaranteed by the experimental design and by the similar effect of drugs given sufficient demographic information. Assumption \ref{ass:indiv-overlap} fails since one site only includes senior patients. While Assumption \ref{ass:overall-overlap} holds since other sites collect data from all ages.

We provide a counter-example showing that Assumptions \ref{ass:unconfoundedness} and \ref{ass:overall-overlap} might not hold.

\begin{example}[Collaboration of Observational Studies with Unmeasured Confounder]\label{exp:obs}
\citet{betthauser_systematic_2023} reviews observational studies regarding the learning deficits of school-aged children during COVID-19. Among 42 included observational studies, four were from middle-income countries, and the remaining were from high-income countries. Over half of the studies didn't collect covariates and took the difference in means of grades before and after the pandemic. 

Since over half of the studies didn't collect covariates, there are unmeasured confounders. Therefore, Assumption \ref{ass:unconfoundedness} is unlikely to hold. Moreover, if the target distribution is school-aged children from the entire world, both \ref{ass:indiv-overlap} and \ref{ass:overall-overlap} fail since low-income countries are missing in the study. We suggest avoiding collaboration in this case.

We have some additional notations: Denote $\E\brac{\outcome_1}$ as $\mu_1$ and $\E\brac{\outcome_0}$ as $\mu_0$. Define $\datasize_\clientset = \sumclient \datasize^\clientsup$ with $\datasize^\clientsup = \abs{\dataset^\clientsup}$ being the sample size of dataset $\client$. Note that $\datasize_\clientset < \datasize$ since we drop the individuals with $\select = \emptyset$.

\end{example}

\subsection{Related Work}
There are extensive attempts in Meta-analysis literature to cope with heterogeneity \citep{borenstein_introduction_2007,borenstein_basic_2010,higgins_re-evaluation_2009}. For example, by assuming that the average treatment effect follows the normal distribution across sites, many propose using random effects models \citep{hedges_fixed-and_1998,riley_interpretation_2011} instead of fixed effects models \citep{tufanaru_fixed_2015}. There are other ways, such as using site-specific information and conduct Meta-regression \citep{van_houwelingen_advanced_2002, glynn_introduction_2010}, using quasi-likelihood \citep{tufanaru_fixed_2015}. More recently, \citet{cheng_adaptive_2021} propose a penalized method for integrating heterogeneous causal effects. However, all methods need strong parametric assumptions on the heterogeneity. It's still necessary to rely on qualitative understandings of heterogeneity based on summary statistics \citep{stroup_meta-analysis_2000}.

Causal Inference literature also has a growing interest in collaboration with concerns in external validity \citep{concato_randomized_2000,rothwell_external_2005,colnet_causal_2023}.
\cite{yang_combining_2020} propose a Rao-Blackwellization method for incorporating RCT and observational studies with unmeasured confounders to improve the estimation efficiency. Recently, more works try to incorporate federated learning in causal inference \citep{xiong_federated_2022,han_privacy-preserving_2023,guo_robust_2023,vo_federated_2023}. \citep{vo_adaptive_2022} proposes adaptive kernel methods under the causal graph model. Several focus on inference. For example, \citep{xiong_federated_2022,hu_collaborative_2022} assumes homogeneous models and proposes a collaboration framework that avoids direct data merging. \citet{han_federated_2022,han_multiply_2023} considers heterogeneous sample selection under parametric distribution shift assumptions. Nevertheless, most new methods still fall under the framework of Meta-analysis.

As a broader interest, our work also uses double machine learning.\cite{chernozhukov_doubledebiased_2018, athey_machine_2019}. It extends the doubly robust estimator \citep{bang_doubly_2005, glynn_introduction_2010,funk_doubly_2011} to non-parametric and machine learning methods  \citep{huang_correcting_2006,sugiyama_direct_2007,sugiyama_covariate_2007,wager_estimation_2017,tibshirani_regression_1996}. We adopt it in particular to mitigate the hardness of estimating density ratio \citep{farahani_brief_2020,hardle_nonparametric_2004}.

\section{Collaborative Inverse Propensity Score Weighting}
\label{sec:ipw}
The inverse propensity score weighting (\ipw) estimator plays a central role in causal inference. We generalize it to collaborative setting, thinking $\modelprop^\clientzsup\paren{\covariates} = \P\paren{\select = \clientzsup\mid\covariates}$ as a generalized version of propensity score. We begin with using the oracle propensity score models and then discuss how to estimate the models.
\subsection{The \fed-IPW~estimator}
As a benchmark, consider the method where each site calculates its own \ipw~estimator for \ate~and takes weighted sum, which is the standard method in Meta-analysis. Since we assume propensity score models are correct, it's not necessary to use $L_1$ penalty as in \citet{han_federated_2022}. Define
\begin{align}
& \hat\tau_{\pooled} = \sumclient \eta^\clientsup \paren{\hat{\mu}^\clientsup_{\pooled,\groupt} - \hat{\mu}^\clientsup_{\pooled,\groupc}},\quad \text{with} \nonumber \\
& 
\hat{\mu}^\clientsup_{\pooled,\groupt} = \frac{1}{\hat\datasize^\clientsup_{\pooled,\groupt}}\sum_{\indiv\in\clientdataset^\clientsup} \frac{\treatment_\indiv \outcome_\indiv}{\modelprop^\clienttrtsup\paren{\covariates_\indiv}}, \label{eqn:meta-ipw-t}\\
&\hat{\mu}^\clientsup_{\pooled,\groupc} = \frac{1}{\hat\datasize^\clientsup_{\pooled,\groupc}}\sum_{\indiv\in\clientdataset^\clientsup} \frac{\paren{1 - \treatment_\indiv} \outcome_\indiv}{\modelprop^\clientctrsup\paren{\covariates_\indiv}}, \label{eqn:meta-ipw-c}\\
&\hat{\datasize}^\clientsup_{\pooled,\groupt} = \sum_{\indiv\in\clientdataset^\clientsup} \frac{\treatment_\indiv }{\modelprop^\clienttrtsup\paren{\covariates_\indiv}}, \text{ and} \label{eqn:meta-ipw-n-1}\\
&\hat{\datasize}^\clientsup_{\pooled,\groupc} = \sum_{\indiv\in\clientdataset^\clientsup} \frac{1 - \treatment_\indiv }{\modelprop^\clientctrsup\paren{\covariates_\indiv}}. \label{eqn:meta-ipw-n-0}
\end{align}
Equations \eqref{eqn:meta-ipw-t}, \eqref{eqn:meta-ipw-c}, \eqref{eqn:meta-ipw-n-1}, and \eqref{eqn:meta-ipw-n-0} take the \hajek~form \cite{little_statistical_2019}, in which we use a consistent estimator $\hat\datasize$ for the sample size. It always achieves better numerical stability and smaller variance than directly using sample size. More importantly, as we will show later, we could only identify $\modelprop^\clientzsup\paren{\covariates}$ up to a constant factor, \hajek~form releases us from the identifiability issue.

The best choice of $\eta^\clientsup$ is the inverse variance. In specific, denoting $\Var\paren{\hat\tau^\clientsup_\pooled} = \paren{\sigma^\clientsup_\pooled}^2 $, the optimal weights are 
$
\eta^\clientsup \propto \paren{\sigma^\clientsup_\pooled}^{-2}.
$
See \citet{cheng_adaptive_2021, ye_debiased_2021} for more discussions.

\poolipw~is designed for review studies \citep{borenstein_introduction_2007} rather than collaboration. Each site must be able to obtain a valid estimator. But, a single site would commonly suffer from under-coverage of the entire population. Revisiting Example \ref{exp:rct}, one site only takes experiments for older people. Due to their under-coverage, they can never get the valid \ate~estimator for the entire population, so it's impossible to incorporate them into the \poolipw~estimator.

Alternatively, we introduce the \collabipw~estimator. \collabipw~directly takes the weighted mean of heterogeneous propensity score functions. In specific, to estimate $\mu_1$, we use
\begin{align}\label{eqn:collab-ipw-motivation}
\hat\mu_{\collab,\groupt}^\clientsup = \frac{1}{\hat\datasize_{\collab,1}^\clientsup}\sum_{\indiv\in\clientdataset^\clientsup} \frac{\eta^\clientsup \treatment_\indiv \outcome_\indiv}{\sum_{r=1}^\clientnum \eta^{(r)}\modelprop^{(r,1)}\paren{\covariates_\indiv}}, \\
\text{ with } \hat\datasize^\clientsup_{\fed,\groupt} = \sum_{\indiv\in\clientdataset^\clientsup} \frac{\eta^\clientsup \treatment_\indiv}{\sum_{r=1}^\clientnum \eta^{(r)}\modelprop^{(r,1)}\paren{\covariates_\indiv}}\nonumber
\end{align}
We have that
\[
\E\brac{\hat\mu^\clientsup_{\collab,\groupt}} = \E\bracBig{\frac{\eta^\clientsup \modelprop^\clienttrtsup\paren{\covariates}\outcome_1}{\sum_{r=1}^\clientnum \eta^{(r)}\modelprop^{(r,1)}\paren{\covariates}}},
\]
which means that it's not consistent for $\mu_1$. However, when we take summation of $\hat\mu^\clientsup_{\fed,\groupt}$ across $\client$, we get that
\begin{align*}
\E\brac{\hat\mu_{\fed,\groupt}} = \E\bracBig{\frac{\sumclient \eta^\clientsup \modelprop^\clienttrtsup\paren{\covariates}\outcome_1}{\sum_{r=1}^\clientnum \eta^{(r)}\modelprop^{(r,1)}\paren{\covariates}}} = \mu_1.
\end{align*}
It allows collaboration between disjoint domains. In Example \ref{exp:rct}, the site that only includes elders could compute $\hat\mu_{\collab,\groupt}^\clientsup$ without worrying about their under-coverage. Given a young patient $\covariates$ from other sites. We have that $\modelprop^\clienttrtsup\paren{\covariates} = 0$ but $ \modelprop^{(r,1)}\paren{\covariates} > 0$ for $r\neq \client$, which ensures a non-zero denominator. The estimators for $\mu_0$ follow the same manner, which we relegate to the appendix. We could compute $\hat\tau_{\fed}$ in a fully federated way, as presented in Algorithm \ref{alg:fed}.

\begin{algorithm}
\caption{\collabipw~Algorithm}
\label{alg:fed}
\begin{algorithmic}[1]
\REQUIRE $K$ datasets with $\clientdataset^\clientsup$ as shown in Equation \eqref{eqn:site-dataset}. Each site publishes their propensity score models $\modelprop^\clienttrtsup\paren{\covariates}$ and $\modelprop^\clientctrsup\paren{\covariates}$.
\FOR{$k = 1$ to $K$}
    \STATE At site $\client$, calculate $\hat\mu^\clientsup_{\collab,\groupt}$ , $\hat\mu^\clientsup_{\collab,\groupc}$,  $\hat\datasize_{\collab,\groupt}^\clientsup$, and $\hat\datasize_{\collab,\groupt}^\clientsup$ according to Equation \eqref{eqn:collab-ipw-motivation}. Send them to the central server.
\ENDFOR
\STATE Central server computes 
\begin{equation}
    \hat\tau_{\fed} = \hat\mu_{\fed,\groupt} - \hat\mu_{\fed,\groupc},
\end{equation}
where $\hat\mu_{\fed,\groupt}$ is the average of $\hat{\mu}_{\fed,\groupt}^\clientsup$ weighted by $\hat\datasize_{\fed,\groupt}^\clientsup$, with $\hat\mu_{\fed,\groupc}$ following the same manner.
\end{algorithmic}
\end{algorithm}

The best choice of $\eta^\clientsup$ is data-dependent and thus could not be obtained from one round of communication. Therefore, we suggest taking vanilla weights $\eta^\clientsup = 1$ for all $\client$. Notice that
\begin{equation}
\sumclient\modelprop^\clienttrtsup\paren{\covariates} = \P\paren{\treatment\paren{\select} = 1\mid\covariates},
\end{equation}
which means that the vanilla weights match the propensity score for $\treatment$ in the pooled dataset. More importantly, we find that the vanilla weights would already make the \collabipw~estimator uniformly better than \poolipw~estimator.

\begin{proposition}[\poolipw~Estimator]\label{prop:asymp-pooled}
Given Assumptions \ref{ass:unconfoundedness} and \ref{ass:indiv-overlap}, using inverse variance weighting, as $\datasize \to \infty$, we have that
\begin{align*}
\sqrt{\datasize}(\hat\tau_{\pooled} - \tau)\converge{d}\normal\paren{0,v^2_{\pooled}},
\end{align*}
where
\begin{align*}
v^2_{\pooled} = \setBig{ \sumclient \E\bracBig{\frac{\paren{\outcome_\groupt - \mu_1}^2}{\modelprop^\selecttrtsup\paren{\covariates}} + \frac{\paren{\outcome_\groupc - \mu_0}^2}{\modelprop^\selectctrsup\paren{\covariates}}}}^\inv.
\end{align*}
\end{proposition}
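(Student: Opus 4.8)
The plan is to treat $\hat\tau_{\pooled}$ as an inverse-variance-weighted average of $\clientnum$ site-level Hájek estimators that are \emph{asymptotically independent}, and to show that inverse-variance weighting collapses their individual asymptotic variances into the single harmonic sum claimed for $v^2_{\pooled}$. The first step is bookkeeping: I would rewrite each site sum over $\clientdataset^\clientsup$ in \eqref{eqn:meta-ipw-t}--\eqref{eqn:meta-ipw-n-0} as a sum over the entire pooled sample of $\datasize$ i.i.d.\ draws, replacing the $\treatment_\indiv$-restricted sums by the selection indicators $\mathbf{1}\{\select_\indiv=\selecttrtsup\}$ and $\mathbf{1}\{\select_\indiv=\selectctrsup\}$. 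This restores a clean i.i.d.\ structure across the full sample, removes any need to condition on the random site sizes $\datasizeclient$, and makes $\sqrt{\datasize}$ the natural scaling.

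\textbf{Single-site asymptotics.} Fix a site $\client$ and write $\hat\mu^\clientsup_{\pooled,\groupt}=\bar A_\client/\bar B_\client$ with $A=\mathbf{1}\{\select=\selecttrtsup\}\,\outcome/\modelprop^\selecttrtsup(\covariates)$ and $B=\mathbf{1}\{\select=\selecttrtsup\}/\modelprop^\selecttrtsup(\covariates)$. Using Assumption \ref{ass:unconfoundedness} and conditioning on $\covariates$, I would verify $\E[A]=\mu_1$ and $\E[B]=1$, so the Hájek normalization satisfies $\hat\datasize^\clientsup_{\pooled,\groupt}/\datasize\to 1$ and automatically re-inflates the biased site sample back to the full target population; this is why the estimator is consistent for the target $\mu_1$ rather than a site-specific mean. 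A delta-method linearization of the ratio then gives influence function $\psi_1^\clientsup=\mathbf{1}\{\select=\selecttrtsup\}(\outcome-\mu_1)/\modelprop^\selecttrtsup(\covariates)$, and conditioning on $\covariates$ yields $\Var(\psi_1^\clientsup)=\E[(\outcome_\groupt-\mu_1)^2/\modelprop^\selecttrtsup(\covariates)]$. The same computation for the control arm, together with the pointwise identity $\psi_1^\clientsup\psi_0^\clientsup\equiv 0$ (the events $\{\select=\selecttrtsup\}$ and $\{\select=\selectctrsup\}$ are disjoint), gives $(\sigma^\clientsup_\pooled)^2=\E[(\outcome_\groupt-\mu_1)^2/\modelprop^\selecttrtsup(\covariates)]+\E[(\outcome_\groupc-\mu_0)^2/\modelprop^\selectctrsup(\covariates)]$ and hence $\sqrt{\datasize}(\hat\tau^\clientsup_\pooled-\tau)\converge{d}\normal\paren{0,(\sigma^\clientsup_\pooled)^2}$.

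\textbf{Joint independence and combination.} The structural crux is that for $\client\neq\clientp$ the cross products of influence functions also vanish pointwise, again because the selection events are mutually exclusive. Consequently the vector of site influence functions has diagonal covariance, and the multivariate CLT gives joint asymptotic normality of $\sqrt{\datasize}(\hat\tau^{(1)}_\pooled-\tau,\dots,\hat\tau^{(\clientnum)}_\pooled-\tau)$ with covariance $\mathrm{diag}((\sigma^\clientsup_\pooled)^2)$. Substituting the inverse-variance weights $\eta^\clientsup\propto(\sigma^\clientsup_\pooled)^{-2}$, normalized so that $\sumclient\eta^\clientsup=1$ (which keeps $\hat\tau_\pooled$ consistent), a one-line computation of $\sumclient(\eta^\clientsup)^2(\sigma^\clientsup_\pooled)^2$ collapses to $\paren{\sumclient(\sigma^\clientsup_\pooled)^{-2}}^{-1}$, which is exactly $v^2_\pooled$. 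Finally, Slutsky's theorem absorbs the replacement of the unknown optimal weights by any consistent plug-in estimates, so the stated limit is unaffected.

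\textbf{Main obstacle.} The main work is the single-site Hájek analysis: controlling the linearization remainder of the ratio estimator and verifying the relevant second moments are finite. Both hinge on Assumption \ref{ass:indiv-overlap}, which bounds each $\modelprop^\selecttrtsup(\covariates)$ below by $\pslower>0$ and thereby bounds the inverse weights (finite outcome second moments, e.g.\ bounded $\outcome$, handle the rest). I would emphasize that this is precisely where the qualitative weakness of \poolipw~lives: individual overlap forces \emph{every} site to cover the entire target support---the requirement that fails in Example \ref{exp:rct}---whereas the weaker Overall-Overlapping (Assumption \ref{ass:overall-overlap}) does not suffice, since any site with $\modelprop^\selecttrtsup(\covariates)=0$ on part of the support yields an infinite $(\sigma^\clientsup_\pooled)^2$ and cannot enter the inverse-variance combination at all.
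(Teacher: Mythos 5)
Your proof is correct and follows essentially the same route as the paper's: you rewrite the site sums over the pooled sample via selection indicators, linearize the H\'ajek ratios (your delta-method step is exactly the content of the paper's Lemma \ref{lem:hajek-to-ht}), compute means and variances by conditioning on $\covariates$ under Assumption \ref{ass:unconfoundedness}, use disjointness of the selection events to kill the cross terms within and across sites, and combine the asymptotically independent site estimators with inverse-variance weights. The only caveat is cosmetic: like the paper's own proof, your computation yields the harmonic form $v^2_{\pooled}=\setBig{\sumclient\paren{\sigma^\clientsup_\pooled}^{-2}}^\inv$ with $\paren{\sigma^\clientsup_\pooled}^{2}=\E\bracBig{\frac{\paren{\outcome_\groupt-\mu_1}^2}{\modelprop^\selecttrtsup\paren{\covariates}}+\frac{\paren{\outcome_\groupc-\mu_0}^2}{\modelprop^\selectctrsup\paren{\covariates}}}$, so the proposition's printed display (which lacks the inner inverse on the per-site expectation) is a typo in the paper rather than a discrepancy in your argument.
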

\begin{theorem}[\collabipw~Estimator]\label{thm:fed-hajek} 
Given Assumptions \ref{ass:unconfoundedness} and \ref{ass:overall-overlap}, using vanilla weights for \collabipw, as $\datasize \to \infty$, we have that 
\begin{equation}
\sqrt{\datasize}(\hat\tau_{\fed} - \tau) \converge{d} \normal(0,v^2_\collab), \label{eqn:asymp-known-ps}
\end{equation}
where
\begin{align*}
v^2_\collab & = \E\bracBig{\frac{\paren{\outcome_1 - \mu_1}^2}{\sumclient  \modelprop^\clientsup\paren{\covariates}} + \frac{\paren{\outcome_0 - \mu_0}^2}{\sumrvctrweights}} 
\end{align*}
Moreover, we have that \[v^2_\fed \leq v^2_\pooled.\]
\end{theorem}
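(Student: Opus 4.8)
The plan is to establish the two assertions separately: the $\sqrt\datasize$-limit of $\hat\tau_\fed$, and then the variance comparison $v^2_\fed\le v^2_\pooled$.

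\emph{Asymptotic normality.} The first step is to observe that, under vanilla weights $\eta^\clientsup=1$, the server-side estimator is a single Hájek (self-normalized) estimator over the entire pooled sample of size $\datasize$. Summing the site numerators and denominators of Equation~\eqref{eqn:collab-ipw-motivation} over $\client$, every treated unit is counted exactly once (units with $\select=\emptyset$ contribute zero), so with $P\paren\covariates=\sumrvtrtweights$ one gets
\[
\hat\mu_{\fed,\groupt}=\frac{\datasize^{-1}\sum_{\indiv=1}^\datasize \treatment\paren{\select_\indiv}\,\outcome_\indiv/P\paren{\covariates_\indiv}}{\datasize^{-1}\sum_{\indiv=1}^\datasize \treatment\paren{\select_\indiv}/P\paren{\covariates_\indiv}},
\]
and symmetrically $\hat\mu_{\fed,\groupc}$ with $Q\paren\covariates=\sumrvctrweights$ and the control-selection indicator $D_\indiv=\mathbf 1\set{\select_\indiv\neq\emptyset,\ \treatment\paren{\select_\indiv}=0}$ replacing $\treatment\paren{\select_\indiv}$. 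I would apply the bivariate CLT to the numerator/denominator averages, whose second moments are finite because Assumption~\ref{ass:overall-overlap} lower-bounds $P,Q$ by $\pslower$ and the outcomes are bounded, and then linearize via the delta method. Using $\E\brac{\treatment\paren\select/P\paren\covariates}=1$ and $\E\brac{\treatment\paren\select\,\outcome/P\paren\covariates}=\mu_1$, the ratio is asymptotically linear with influence functions $\psi_\groupt=\treatment\paren\select\paren{\outcome_1-\mu_1}/P\paren\covariates$ and $\psi_\groupc=D\paren{\outcome_0-\mu_0}/Q\paren\covariates$, so that $\sqrt\datasize\paren{\hat\tau_\fed-\tau}=\datasize^{-1/2}\sum_\indiv\paren{\psi_\groupt-\psi_\groupc}+o_P(1)$ and the CLT delivers \eqref{eqn:asymp-known-ps}.

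\emph{The variance formula.} Conditioning on $\covariates$ and invoking unconfoundedness (Assumption~\ref{ass:unconfoundedness}), $\treatment\paren\select\indep\outcome_1\mid\covariates$ with $\E\brac{\treatment\paren\select\mid\covariates}=P\paren\covariates$, whence $\Var\paren{\psi_\groupt}=\E\bracBig{\paren{\outcome_1-\mu_1}^2/P\paren\covariates}$, and symmetrically $\Var\paren{\psi_\groupc}=\E\bracBig{\paren{\outcome_0-\mu_0}^2/Q\paren\covariates}$. The essential simplification is that a unit cannot be selected into both a treatment and a control arm, so $\treatment\paren\select\,D\equiv 0$, hence $\psi_\groupt\psi_\groupc\equiv 0$ and the cross term vanishes; therefore $v^2_\collab=\Var\paren{\psi_\groupt}+\Var\paren{\psi_\groupc}$, exactly the stated expression.

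\emph{The comparison.} The key idea is to write both estimators in one common, pairwise-orthogonal basis. Define the per-site, per-arm scores $\phi_\groupt^\clientsup=\mathbf 1\set{\select=\selecttrtsup}\paren{\outcome_1-\mu_1}/\modelprop^\selecttrtsup\paren\covariates$ and $\phi_\groupc^\clientsup=\mathbf 1\set{\select=\selectctrsup}\paren{\outcome_0-\mu_0}/\modelprop^\selectctrsup\paren\covariates$. Since distinct selection labels are mutually exclusive, these scores have vanishing pairwise products and are therefore uncorrelated. The meta influence function is $\sumclient\eta^\clientsup\paren{\phi_\groupt^\clientsup-\phi_\groupc^\clientsup}$ with scalar weights summing to one, while $\psi_\groupt-\psi_\groupc=\sumclient\paren{\modelprop^\selecttrtsup/P}\phi_\groupt^\clientsup-\sumclient\paren{\modelprop^\selectctrsup/Q}\phi_\groupc^\clientsup$; both thus lie in the class $\set{\sumclient\alpha_\client\phi_\groupt^\clientsup-\sumclient\beta_\client\phi_\groupc^\clientsup:\sumclient\alpha_\client\equiv\sumclient\beta_\client\equiv 1}$ of covariate-dependent weightings, where the unit-sum constraint is precisely the consistency requirement for $\mu_1$ and $\mu_0$. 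By orthogonality the variance of any member splits as $\E\bracBig{\sumclient\alpha_\client^2\paren{\outcome_1-\mu_1}^2/\modelprop^\selecttrtsup\paren\covariates}+\E\bracBig{\sumclient\beta_\client^2\paren{\outcome_0-\mu_0}^2/\modelprop^\selectctrsup\paren\covariates}$, and minimizing each integrand pointwise in $\covariates$ under $\sumclient\alpha_\client=1$ is a Cauchy--Schwarz problem solved by $\alpha_\client=\modelprop^\selecttrtsup/P$ (resp.\ $\beta_\client=\modelprop^\selectctrsup/Q$), with minimum value $v^2_\collab$. Since the optimal inverse-variance meta weighting behind $v^2_\pooled$ in Proposition~\ref{prop:asymp-pooled} is only a feasible, scalar (and arm-coupled, $\alpha_\client=\beta_\client$) point of this class, $v^2_\fed\le v^2_\pooled$ follows immediately.

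\emph{Main obstacle.} The technical crux for normality is justifying the delta-method linearization uniformly despite the random arm sizes $\hat\datasize^\clientsup$ in the self-normalization; the conceptual crux for the inequality is spotting the common orthogonal basis and realizing that the consistency constraint is exactly $\sumclient\alpha_\client\equiv 1$, which turns the whole comparison into a pointwise Cauchy--Schwarz minimization that the meta estimator cannot beat because it is restricted to constant, arm-coupled weights.
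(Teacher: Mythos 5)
Your proposal is correct. For the asymptotic-normality half you follow essentially the paper's own route: under vanilla weights the site-wise sums collapse so that $\hat\tau_\fed$ is a single self-normalized (\hajek) \ipw~estimator over the pooled sample with inverse weights $\P\paren{\treatment\paren{\select}=z\mid\covariates}=\sum_{r}\modelprop^{(r,z)}\paren{\covariates}$; the paper then linearizes via its Lemma \ref{lem:hajek-to-ht} while you invoke the bivariate CLT plus the delta method, which is the same standard argument, and your variance computation (condition on $\covariates$, use Assumption \ref{ass:unconfoundedness}, kill the cross term because a unit cannot lie in both arms) matches the paper's. Where you genuinely diverge is the inequality $v^2_\fed\le v^2_\pooled$. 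The paper proves it analytically: Lemma \ref{lem:concave} shows by an explicit Hessian computation that $f\paren{t_1,\ldots,t_\clientnum}=\paren{t_1^\inv+\cdots+t_\clientnum^\inv}^\inv$ is concave, and the claim follows from superadditivity of $f$ plus two applications of Jensen's inequality. You instead give a variational argument: expand both influence functions in the mutually orthogonal per-site, per-arm scores $\phi_z^\clientsup$ (orthogonality holding identically since the selection indicators are exclusive), note that both estimators are weightings $\set{\alpha_\client,\beta_\client}$ constrained by $\sumclient\alpha_\client\equiv\sumclient\beta_\client\equiv 1$, and minimize the split variance pointwise in $\covariates$ by Cauchy--Schwarz, the minimizer being exactly the \collabipw~weighting $\alpha_\client=\modelprop^\selecttrtsup\paren{\covariates}/\sum_r\modelprop^{(r,1)}\paren{\covariates}$. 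Your route buys a strictly stronger and more interpretable conclusion: \collabipw~is optimal over the \emph{entire} class of covariate-dependent, arm-decoupled unit-sum weightings, and \poolipw~loses precisely because its weights are constant in $\covariates$ and coupled across arms --- the conceptual point the paper only states informally after Theorem \ref{thm:fed-hajek}; the paper's route, in exchange, is self-contained calculus requiring no identification of a common orthogonal basis. One small point to make explicit in your write-up: for a general class member, the individual terms $\alpha_\client\phi_1^\clientsup$ are not mean zero when $\alpha_\client$ depends on $\covariates$ (their mean is $\E\brac{\alpha_\client\paren{\covariates}\E\brac{\outcome_1-\mu_1\mid\covariates}}$), so the identity ``variance $=$ expected square'' that your orthogonal splitting uses relies on the unit-sum constraint to center the \emph{sum}; since that constraint is exactly what defines your class, the argument goes through.
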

There are two ways to understand why $\hat\tau_\fed$ is better: First, \poolipw~takes the weighted mean site-wise, whereas \collab-\ipw~takes the weighted mean individual-wise. Given each individual $\covariates_\indiv$, \collab-\ipw~adaptively puts more weights on sites with larger $\modelprop^\clientsup\paren{\covariates_\indiv}$. Whereas \pooled-\ipw~uses the same weights for any $\covariates_\indiv$. Second, \collabipw~utilizes coarser \textit{balancing scores} \cite{imbens_causal_2015}. \textit{Balancing score} is a generalization of the propensity score. Any function of covariates is sufficient for adjusting the confoundingness between $\treatment$ and $\outcome$. 
The \poolipw~uses $\P\paren{\select\mid\covariates}$ as its inverse weights, and \collabipw~uses $\P\paren{\treatment\paren{\select}\mid\covariates}$. Theorem \ref{thm:balancing} shows that they are both balancing scores.
\begin{theorem}\label{thm:balancing}
We have that
\begin{align*}
&\paren{\outcome(1),\outcome(0)} \indep \treatment\paren{\select} \mid  \P\paren{\select\mid\covariates},\text{ and}\\
& \paren{\outcome(1),\outcome(0)} \indep \treatment\paren{\select} \mid \P\paren{\treatment\paren{\select}\mid\covariates}.
\end{align*}
\end{theorem}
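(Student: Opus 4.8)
The plan is to derive both displayed independences from a single generalized Rosenbaum--Rubin balancing-score argument, applied to the two statistics $b_1(\covariates)=\P\paren{\select\mid\covariates}$ and $b_2(\covariates)=\P\paren{\treatment\paren{\select}\mid\covariates}$. The unifying observation is that in \emph{both} claims the variable being made independent of the potential outcomes is the treatment indicator $\treatment\paren{\select}$, which is a deterministic function of the selection label $\select$. So first I would record the preliminary reduction: since $\treatment\paren{\select}$ is measurable with respect to $\sigma(\select)$, Assumption \ref{ass:unconfoundedness} (independence from the full label $\select$) immediately implies the coarser unconfoundedness $\paren{\outcome(1),\outcome(0)}\indep\treatment\paren{\select}\mid\covariates$. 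This is the only consequence of Assumption \ref{ass:unconfoundedness} that either claim uses.

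Next I would prove an abstract lemma: if $b(\covariates)$ is any statistic such that the conditional law of $\treatment\paren{\select}$ given $\covariates$ factors through $b(\covariates)$ --- equivalently $\P\paren{\treatment\paren{\select}=1\mid\covariates}$ is $\sigma\paren{b(\covariates)}$-measurable --- then $b(\covariates)$ is a balancing score and unconfoundedness is inherited, i.e.\ $\paren{\outcome(1),\outcome(0)}\indep\treatment\paren{\select}\mid b(\covariates)$. The proof is the classical iterated-expectation computation: for each value $t$,
\[
\P\paren{\treatment\paren{\select}=t\mid\outcome(1),\outcome(0),b(\covariates)} = \E\brac{\P\paren{\treatment\paren{\select}=t\mid\outcome(1),\outcome(0),\covariates}\mid\outcome(1),\outcome(0),b(\covariates)}.
\]
By the preliminary reduction the inner probability equals $\P\paren{\treatment\paren{\select}=t\mid\covariates}$, which by hypothesis is a function of $b(\covariates)$ and hence survives the outer conditional expectation unchanged, yielding $\P\paren{\treatment\paren{\select}=t\mid b(\covariates)}$. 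As the right-hand side does not depend on $\paren{\outcome(1),\outcome(0)}$, the desired conditional independence follows.

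Finally I would check the measurability hypothesis for each statistic. For $b_2=\P\paren{\treatment\paren{\select}\mid\covariates}$ it holds by definition, since $b_2$ is exactly the conditional law of the target --- the coarsest (pooled) propensity score. For $b_1=\P\paren{\select\mid\covariates}$, i.e.\ the collection of selection probabilities $\modelprop^\emptyset\paren{\covariates}$ and $\modelprop^\clientzsup\paren{\covariates}$ over all $\client$ and $\treatmentdata\in\set{0,1}$, I would use the identity $\P\paren{\treatment\paren{\select}=1\mid\covariates}=\sumclient\modelprop^\clienttrtsup\paren{\covariates}$, which exhibits the pooled propensity as a coordinate sum of the entries of $b_1(\covariates)$ and is therefore $\sigma\paren{b_1(\covariates)}$-measurable. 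Both statistics meet the hypothesis, so applying the lemma twice proves the two independences.

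The hard part is measure-theoretic bookkeeping rather than any genuine difficulty: the conditioning objects $b_1$ and $b_2$ are vector- (indeed distribution-) valued functions of $\covariates$ instead of scalars, and $\select$ ranges over the enriched set $\set{\paren{\client,\treatmentdata}}\cup\emptyset$ that includes the discarded category. Stating the lemma directly in terms of $\sigma\paren{b(\covariates)}$-measurability of the pooled propensity avoids any scalar reduction and handles the multi-category $\select$ and the $\emptyset$ mass uniformly; the fact that $\treatment\paren{\select}$ is a many-to-one coarsening of $\select$ is precisely what makes the finer statistic $b_1$ a balancing score alongside the coarsest one $b_2$.
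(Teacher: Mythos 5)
Your proof is correct and is essentially the paper's own argument: both rest on the classical Rosenbaum--Rubin iterated-expectation/tower-property computation, combining Assumption \ref{ass:unconfoundedness} with the fact that the relevant conditional law of $\treatment\paren{\select}$ (respectively $\select$) given $\covariates$ is measurable with respect to the conditioning statistic. The only difference is organizational---you package the computation as one abstract balancing-score lemma applied twice, verifying for $\P\paren{\select\mid\covariates}$ the coordinate-sum identity $\P\paren{\treatment\paren{\select}=1\mid\covariates}=\sumclient \modelprop^\clienttrtsup\paren{\covariates}$, whereas the paper runs the density-factorization computation separately for each statistic (proving along the way the slightly stronger fact that $\P\paren{\select\mid\covariates}$ balances $\select$ itself)---which is a matter of presentation, not substance.
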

Notice that $\P\paren{\select\mid\covariates}$ has an auxiliary variable $\client\paren{\covariates}$ comparing to $\P\paren{\treatment\paren{\select}\mid\covariates}$. But $\client\paren{\covariates}$ is superfluous since it doesn't affect $\paren{\outcome_1,\outcome_0}$. As a result, \collabipw~gets better efficiency by maintaining a smaller model. 
A simpler model benefits us by maintaining fewer variables to adjust for, thus attaining better efficiency.
Similar ideas occur extensively in model selection literature \cite{raschka_model_2020}.

\subsection{Estimation of propensity score models}
We start from the identification of $\modelprop^\clientzsup\paren{\covariates}$. Since we have no information on the dropped set $\clientdataset_\emptyset = \set{\indiv\mid\select_\indiv = \emptyset}$, it's impossible to identify all parameters. For instance, multiplying $\datasize$ by a factor $2$ and dividing $\modelprop^\clientzsup\paren{\covariates}$ by $2$ would lead to the same observed distribution. However, identifiability is guaranteed up to a constant factor. And thanks to the \hajek~forms of our \ipw~estimators, identification up to a constant is enough.
\begin{proposition}
\label{prop:identification}
We have that
\begin{align}
\modelprop^\clientzsup\paren{\covariates}=r^\clientzsup\paren{\covariates}  \P\paren{\select = \clientzsup\mid \select\neq \emptyset} \P\paren{\select = \emptyset}\nonumber.\label{eqn:identifiability-modelprop}
\end{align}
where $r^\clientsup\paren{\covariates} = {p(\covariates\mid \select = \clientzsup) }/{p(\covariates)}$ is the density ratio function, which is identifiable. Meanwhile, $\P\paren{\select = \clientzsup\mid \select\neq \emptyset}$ is identifiable by taking $\datasize^\clientsup/\datasizeall$. Only $\P\paren{\select = \emptyset}$ is not identifiable.
\end{proposition}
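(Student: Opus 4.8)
The plan is to read the decomposition off from a single Bayes inversion followed by one conditioning step, then to certify each of the three resulting factors as identifiable except the last. Writing the generalized propensity score as $\modelprop^\clientzsup(\covariates) = \P\paren{\select = \clientzsup \mid \covariates}$ and letting $p$ denote the target covariate density, Bayes' rule gives
\[
\modelprop^\clientzsup(\covariates) = \frac{p\paren{\covariates \mid \select = \clientzsup}}{p\paren{\covariates}}\,\P\paren{\select = \clientzsup} = r^\clientzsup(\covariates)\,\P\paren{\select = \clientzsup},
\]
which is precisely where the density-ratio factor $r^\clientzsup$ enters. I would then split the marginal selection mass by conditioning on the retained event $\set{\select \neq \emptyset}$, namely $\P\paren{\select = \clientzsup} = \P\paren{\select = \clientzsup \mid \select \neq \emptyset}\,\P\paren{\select \neq \emptyset}$, and record that $\P\paren{\select \neq \emptyset} = 1 - \P\paren{\select = \emptyset}$, so that the only piece tied to the unobserved dropped set is $\P\paren{\select = \emptyset}$.

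Next I would audit identifiability factor by factor. The numerator $p\paren{\covariates \mid \select = \clientzsup}$ is the covariate law within the $\treatmentdata$-arm of site $\client$, read off from $\dataset^\clientsup$, while the denominator $p\paren{\covariates}$ is supplied by the census sample $\datasettarget$; hence $r^\clientzsup$ is identifiable. The conditional cell probability $\P\paren{\select = \clientzsup \mid \select \neq \emptyset}$ is the relative frequency of that cell among all retained units, consistently estimated by $\datasize^\clientsup / \datasizeall$. This leaves only $\P\paren{\select = \emptyset}$ unaccounted for.

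The crux, and the only step beyond bookkeeping, is to show $\P\paren{\select = \emptyset}$ is genuinely non-identifiable; I would make the informal rescaling remark precise by exhibiting a one-parameter family of generating processes that induce identical observed laws. For each constant $c \in (0,1]$, rescale the pool size $\datasize \mapsto \datasize/c$ and every intensity $\modelprop^\clientzsup \mapsto c\,\modelprop^\clientzsup$, absorbing the slack into $\modelprop^\emptyset \mapsto (1-c) + c\,\modelprop^\emptyset$ so that the normalization in \eqref{eqn:def-prop} still holds. I would then check that the expected cell count $\datasize \int \modelprop^\clientzsup(\covariates) p(\covariates)\,d\covariates$ is invariant, that each within-cell density $p\paren{\covariates \mid \select = \clientzsup} \propto \modelprop^\clientzsup(\covariates) p(\covariates)$ is unchanged because $c$ cancels in its normalization, and that the census law $p(\covariates)$ is untouched. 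Consequently the joint distribution of $\paren{\dataset^{(1)}, \dots, \dataset^{\paren{\clientnum}}, \datasettarget}$ is constant along the family, whereas $\P\paren{\select = \emptyset} = \int \modelprop^\emptyset(\covariates) p(\covariates)\,d\covariates$ moves with $c$. The main obstacle is exactly this last step: fixing the correct notion of ``observed law'' (the joint law of the retained site data together with the census sample) and confirming that the transformation preserves it while genuinely shifting the target quantity. The Bayes inversion and the counting arguments are routine by comparison.
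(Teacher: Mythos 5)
Your proof is correct, and its engine---a single Bayes inversion giving $\modelprop^\clientzsup\paren{\covariates} = r^\clientzsup\paren{\covariates}\,\P\paren{\select = \clientzsup}$---is the same one the paper uses, but the two arguments are organized differently and yours goes one step further. The paper argues by proportionality: it posits a second process generating the same laws $p\paren{\covariatesdata \mid \select = \clientzsup}$ and $p\paren{\covariatesdata}$, applies Bayes to both, and concludes that the two propensity scores differ exactly by the constant $\P\paren{\select' = \clientzsup}/\P\paren{\select = \clientzsup}$; the non-identifiability of $\P\paren{\select = \emptyset}$ itself is then left at the level of the informal ``rescale $\datasize$ by $2$'' remark in the main text. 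You instead read the factorization off directly, audit each factor, and then make the non-identifiability claim rigorous with an explicit one-parameter family ($\datasize \mapsto \datasize/c$, $\modelprop^\clientzsup \mapsto c\,\modelprop^\clientzsup$, $\modelprop^\emptyset \mapsto (1-c) + c\,\modelprop^\emptyset$), which is precisely the piece the paper never writes down; you also silently repair what is evidently a typo in the proposition's display (the last factor must be $\P\paren{\select \neq \emptyset}$, not $\P\paren{\select = \emptyset}$, or the identity is false). One caveat on your family: with a deterministic pool size the observed cell counts are Binomial with parameters $\paren{\datasize, \P\paren{\select = \clientzsup}}$ in the original process versus $\paren{\datasize/c,\, c\,\P\paren{\select = \clientzsup}}$ in the rescaled one; these have equal means and identical conditional splits given the retained total, but different higher moments, so the joint laws of the datasets are not \emph{exactly} equal, and $\datasize/c$ must moreover be an integer. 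Exact invariance holds once you take the observables to be what the paper implicitly takes them to be---the conditional covariate densities, the census density, and the relative cell frequencies (equivalently, condition on the retained sample sizes)---or if you randomize the pool size to be Poisson, under which the observed cell counts become independent Poissons whose laws are exactly invariant. With that bookkeeping fixed, your argument is complete and strictly more informative than the paper's.
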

We focus on estimating density ratio $r^\clientzsup\paren{\covariates}$.
We suggest two methods from the large literature on density ratio estimation. \citet{han_federated_2022} applies a parametric exponential tilting model. They assumes that $r^\clientzsup\paren{\covariates} = \exp\paren{\psi\paren{\covariates}^\top \gamma^{\clientzsup}}$ for a given representation function $\psi$ (such as $\psi(\covariatesdata) = \covariatesdata$) and unknown parameter $\gamma^{\clientzsup}$. We could estimate $\gamma$ through the method of moments, i.e., finding $\hat\gamma^\clientzsup$ that solves
\begin{align*}
 &~\sum_{\indiv\in\clientdataset^\clientsup} \treatment_\indiv \psi\paren{\covariates_\indiv} \exp\paren{\psi\paren{\covariates}^\top \gamma^{\clientzsup}} \\
 = &~\sum_{\indiv\in\dataset^\target} \psi\paren{\covariates_\indiv} \exp\paren{\psi\paren{\covariates}^\top \gamma^{\clientzsup}},
\end{align*}
which is equivalent to entropy balancing \cite{zhao_entropy_2017}. Recently, motivated by Matching \cite{abadie_matching_2016} and K-Nearest Neighbour \cite{zhang_efficient_2018}, \citet{lin_estimation_2021} propose a minimax nonparametric way to estimate the density ratio. Using their method, we have that 
\begin{equation}
\hat{r}^\clientzsup\paren{\covariatesdata} = \frac{\datasize^\targetsup}{\sum_{\indiv\in\dataset^\clientsup}\treatment_\indiv} \frac{M}{W\paren{\covariatesdata;\dataset^\target,\dataset^\clientzsup}},
\end{equation}
where $W\paren{\covariatesdata;\dataset^\target,\dataset^\clientzsup}$ means the total number of units in $\dataset^\target$ that $\covariatesdata$ is close to $\covariates_\indiv$ than its $M$-nearest neighbour in $\dataset^\clientzsup$. See \citet{lin_estimation_2021} for more detail.
We have the following convergence rates for them
\begin{proposition}[Point-wise error of density estimation]\label{prop:est-acc}
Given $\covariatesdata\in\R^d$, if the exponential tilting model is correctly specified, we have that
\begin{equation}
\E\bracBig{\abs{\exp\paren{\psi\paren{\covariatesdata}^\top \hat\gamma^\clientzsup} - r^\clientzsup\paren{\covariatesdata}}} = O(N^{-1/2}).
\end{equation}
For the nonparametric method, we have that
\begin{equation}
\E\bracBig{\abs{\hat{r}^\clientzsup\paren{\covariatesdata} - r^\clientzsup\paren{\covariatesdata}}} = O(N^{-1/(2+\dimension)}).
\end{equation}
\end{proposition}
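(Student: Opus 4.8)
The estimator $\hat\gamma^\clientzsup$ is a Z-estimator, defined as the root of the empirical estimating equation displayed above; its population analogue $\Psi(\gamma)=\vzero$ vanishes at the true tilting parameter $\gamma^\star$ for group $\clientzsup$, by correct specification together with the importance-weighting identity $\E_{\select=\clientzsup}\brac{g(\covariates)}=\E\brac{g(\covariates)\,r^\clientzsup(\covariates)}$ relating the source and target covariate laws. First I would check the standard regularity conditions: (i) $\Psi$ is continuously differentiable with nonsingular Jacobian $\dot\Psi(\gamma^\star)$, which holds because the outer-product matrix $\E\brac{\psi(\covariates)\psi(\covariates)^\top r^\clientzsup(\covariates)}$ is positive definite and the overlap assumption keeps $r^\clientzsup$ bounded away from $0$ and $\infty$; and (ii) the score functions are Lipschitz in $\gamma$ near $\gamma^\star$, hence form a Donsker class. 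Standard Z-estimator asymptotics (e.g.\ van der Vaart, Theorem~5.21) then give $\sqrt{\datasize}\paren{\hat\gamma^\clientzsup-\gamma^\star}\converge{d}\normal\paren{\vzero,\Sigma}$, and with an accompanying uniform-integrability bound $\E\brac{\|\hat\gamma^\clientzsup-\gamma^\star\|}=O(\datasize^{-1/2})$.

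\textbf{From $\hat\gamma$ to the ratio.} Fix $\covariatesdata$. Writing $r^\clientzsup(\covariatesdata)=\exp\paren{\psi(\covariatesdata)^\top\gamma^\star}$ and applying a mean-value expansion of $\gamma\mapsto\exp\paren{\psi(\covariatesdata)^\top\gamma}$, there is $\tilde\gamma$ on the segment joining $\gamma^\star$ and $\hat\gamma^\clientzsup$ with
\[
\abs{\exp\paren{\psi(\covariatesdata)^\top\hat\gamma^\clientzsup}-r^\clientzsup(\covariatesdata)}=\exp\paren{\psi(\covariatesdata)^\top\tilde\gamma}\,\abs{\psi(\covariatesdata)^\top\paren{\hat\gamma^\clientzsup-\gamma^\star}}.
\]
On $\set{\|\hat\gamma^\clientzsup-\gamma^\star\|\le 1}$ the prefactor is bounded by a constant $C(\covariatesdata)$ depending only on the fixed $\covariatesdata$, so the right-hand side is at most $C(\covariatesdata)\,\|\hat\gamma^\clientzsup-\gamma^\star\|$; on the complementary rare event I would apply Cauchy--Schwarz, using finite moments of $\exp\paren{\psi(\covariatesdata)^\top\hat\gamma^\clientzsup}$ together with the fast decay of $\P\paren{\|\hat\gamma^\clientzsup-\gamma^\star\|>1}$. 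Taking expectations and inserting $\E\brac{\|\hat\gamma^\clientzsup-\gamma^\star\|}=O(\datasize^{-1/2})$ yields the first claim.

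\textbf{Nonparametric rate.} For the matching estimator of \citet{lin_estimation_2021}, the governing quantity is the random radius $R_M(\covariatesdata)$, the distance from $\covariatesdata$ to its $M$-th nearest neighbour in $\dataset^\clientzsup$; the count $W$ is the number of target points in the ball $B\paren{\covariatesdata,R_M}$, and $\hat r^\clientzsup(\covariatesdata)$ compares $W$ to the source count $M$. I would split the pointwise $L_1$ error into a smoothing bias and a stochastic fluctuation. Conditionally on $R_M$, the count $W$ is, up to the usual Poissonization, a Binomial variable with mean of order $M$, so $M/W$ recovers $r^\clientzsup(\covariatesdata)$ up to (a) a bias set by the variation of $r^\clientzsup$ and of the densities over $B\paren{\covariatesdata,R_M}$, and (b) a relative fluctuation of order $M^{-1/2}$. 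Under a Lipschitz (or H\"older) condition on $r^\clientzsup$ and densities bounded above and below, the bias is $O(R_M)$, while $R_M$ concentrates around $\paren{M/\datasize}^{1/\dimension}$, making the bias $O\paren{(M/\datasize)^{1/\dimension}}$.

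\textbf{Balancing and the main obstacle.} Equating the bias $O\paren{(M/\datasize)^{1/\dimension}}$ with the standard deviation $O(M^{-1/2})$ forces $M\sim\datasize^{2/(2+\dimension)}$, at which both terms are of order $\datasize^{-1/(2+\dimension)}$, giving $\E\brac{\abs{\hat r^\clientzsup(\covariatesdata)-r^\clientzsup(\covariatesdata)}}=O\paren{\datasize^{-1/(2+\dimension)}}$. The parametric half is routine M-estimation; the main obstacle is the nonparametric half, where the smoothing radius $R_M$ is itself a nearest-neighbour distance, so the count $W$ and the radius are statistically coupled. Making the Binomial/Poisson approximation rigorous requires first establishing concentration of $R_M$ and then a conditional local-limit argument for the coupled counts. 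As these are precisely the steps carried out by \citet{lin_estimation_2021}, I would invoke their pointwise bound for the nonparametric claim and reserve the exposition for identifying the effective bandwidth and optimizing $M$.
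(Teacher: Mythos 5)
Your proposal is correct in outline, and it is worth knowing that the paper does not actually prove this proposition: its proof section states only that the reader should see \citet{zhao_entropy_2017} for the parametric claim and \citet{lin_estimation_2021} for the nonparametric one. You therefore supply strictly more than the paper does. Your parametric half --- treating $\hat\gamma^\clientzsup$ as a Z-estimator, a mean-value expansion of $\gamma \mapsto \exp(\psi(\covariatesdata)^\top\gamma)$ at the fixed point $\covariatesdata$, and a Cauchy--Schwarz patch on the rare event --- is the standard route and is essentially what the cited entropy-balancing analysis carries out; your nonparametric half correctly identifies the bias $O((M/\datasize)^{1/\dimension})$, the fluctuation $O(M^{-1/2})$, and the balancing choice $M \asymp \datasize^{2/(2+\dimension)}$ that yields the exponent $-1/(2+\dimension)$, while (like the paper) delegating the genuinely hard step, the statistical coupling between the nearest-neighbour radius and the counts, to \citet{lin_estimation_2021}. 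What your version buys is a self-contained account of where both rates come from; what the paper's pure citation buys is not having to restate regularity conditions, at the cost of leaving the reader to check that the cited results apply in this setting. Two soft spots remain in your sketch, both of which you name but neither of which you discharge: (i) the proposition asserts an expectation ($L_1$) rate, and asymptotic normality of $\hat\gamma^\clientzsup$ alone does not give $\E\bigl[\|\hat\gamma^\clientzsup - \gamma^\star\|\bigr] = O(\datasize^{-1/2})$ --- you need uniform integrability of $\sqrt{\datasize}\,\|\hat\gamma^\clientzsup - \gamma^\star\|$, which in practice comes from a nonasymptotic tail bound on the estimating equation (e.g.\ under bounded $\psi$ or a compact parameter set); and (ii) your Cauchy--Schwarz step on the complementary event needs $\P\bigl(\|\hat\gamma^\clientzsup - \gamma^\star\| > 1\bigr) = O(\datasize^{-1})$, which follows from the same tail bound. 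Neither is a flaw in the approach; they are precisely the hypotheses under which the results you (and the paper) cite are stated.
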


\section{Incorporating Outcome Models}
\label{sec:dml}
Density ratio estimation is challenging and can easily fail under mis-specification or due to the curse of dimensionality. Therefore, it is essential to incorporate outcome models to mitigate the errors caused by density ratio estimation. To maintain consistent structure with Section \ref{sec:ipw}, we first discuss how to incorporate outcome models in the estimator and then discuss how to learn the outcome models.
\subsection{Decoupled AIPW estimator}
The augmented inverse propensity score weighted (\dml) estimator \cite{bang_doubly_2005} employs Neyman orthogonality to construct an asymptotically normal estimator even if nuisance models converge at slower rates. We introduce their idea to the collaboration setting.

How to use outcome models? Due to the biased selection of $\select$, directly taking the mean across all source data renders the estimator inconsistent. A natural idea is to use the inverse propensity score to adjust the distribution and get that
\begin{align*}
\hat\tau_{\text{adjust}}  = \frac{1}{\datasize} \sum_{\select_\indiv\neq\emptyset} \bracBig{\frac{\hat\modeltreated\paren{\covariates_\indiv}}{\hat\modelprop^\clienttrtsup\paren{\covariates_\indiv}} - \frac{\hat\modelcontrol\paren{\covariates_\indiv}}{\hat\modelprop^\clientctrsup\paren{\covariates_\indiv}}}.
\end{align*}
This is the choice of \citet{han_federated_2022}.
However, the consistency of $\hat\tau_{\text{adjust}}$ substantially depends on the density ratio function, making the regression model useless. Alternatively, we make use of the public census dataset $\clientdataset^\targetsup$. As discussed in Section \ref{sec:setup}, $\clientdataset^\targetsup$ provides public information for $\covariates$ in the target distribution. Utilizing it, we propose a \textit{decoupled} \dml~estimator.
\begin{equation}\label{eqn:def of dml}
\hat\tau_{\dml} = \frac{1}{\datasize^\targetsup} \sum_{\indiv=1}^{\datasize^\targetsup} \bracBig{\hat\modeltreated\paren{\covariates_\indiv^\targetsup} - \hat\modelcontrol\paren{\covariates_\indiv^\targetsup}} + \sumclient \hat\delta^\clientsup_{\dml},
\end{equation}
with $\hat\delta^\clientsup_{\dml}$ having two versions:
\begin{align*}
&\hat\delta^\clientsup_{\pooled-\dml} = \sumclient \eta^\clientsup \bracBig{\hat{\delta}_{\pooled-\dml,1}^\clientsup - \hat{\delta}_{\pooled-\dml,0}^\clientsup},\\
&\delta^\clientsup_{\fed-\dml} = \sumclient \hat\ratio^\clientsup_{\fed,1} \hat{\delta}_{\fed-\dml,1}^\clientsup - \sumclient \hat\ratio^\clientsup_{\fed,0} \hat{\delta}_{\fed-\dml,0}^\clientsup,\\
&\text{ with }\hat\ratio^\clientsup_{\collab,1} \propto \hat\datasize_{\collab,1}^\clientsup,\quad \hat\ratio^\clientsup_{\collab,0} \propto \hat\datasize_{\collab,0}^\clientsup.
\end{align*}
Here $\hat\delta^\clientsup_{\pooled-\dml}$ and $\hat\delta^\clientsup_{\fed-\dml}$ are residual versions of the corresponding \ipw~estimators, changing all $\outcome$ to $\outcome - \modeloutcome\paren{\covariates}$ in the formula. We only present the formula for the $\hat\delta_1$'s and relegate $\hat\delta_0$'s to the appendix.
\begin{align*}
&
\hat{\delta}_{\pooled-\dml,1}^\clientsup = \frac{1}{\hat\datasize_{\pooled,1}^\clientsup}\sum_{\indiv\in\dataset^\clientsup} 
\frac{\treatment_\indiv \brac{\outcome_\indiv - \modeltreated\paren{\covariates_\indiv}}}{\modelprop^\clienttrtsup\paren{\covariates_\indiv}},\\
&
\hat{\delta}_{\fed-\dml,1}^\clientsup = \frac{1}{\hat\datasize_{\collab,1}^\clientsup}\sum_{\indiv\in\dataset^\clientsup} 
\frac{\treatment_\indiv \brac{\outcome_\indiv - \modeltreated\paren{\covariates_\indiv}}}{\sumtrtweights}.\\
\end{align*}
The proposed estimator computes the difference in mean of outcome models only in $\datasettarget$ and the correction terms only in $\dataset^\clientsup$'s. Though being decoupled, it preserves the robustness of the \dml~estimator. We summarize its properties in Theorem \ref{thm:dml}.

\begin{theorem}\label{thm:dml}
    Suppose that
    \begin{enumerate}
        \item The estimated models $\hat\modeltreated$, $\hat\modelcontrol$ and $\hat\modelprop$ are independent\footnote{We could achieve independence by using sampling splitting, see \citet{chernozhukov_doubledebiased_2018} for more detailed discussion.} with $\datasettarget$ and $\dataset^\clientsup$'s.
        \item They have convergence rates
    \begin{align}
    \E\brac{\ltwo{\hat\modeltreated - \modeltreated}}&, \E\brac{\ltwo{\hat\modelcontrol - \modeltreated}} = O(1/\datasize^{-\xi_\modeloutcome}),\label{eqn:om-rates}\\
    \text{ and }\E\brac{\ltwo{\hat\modelprop - \modelprop}} &= O(1/\datasize^{-\xi_\modelprop}),\label{eqn:ps-rates}
    \end{align}
    with $\xi_\modeloutcome \xi_\modelprop > 1/2$.
        \item The models $\hat\modelprop$, $\hat\modeloutcome$, $\modelprop$, and $\modeloutcome$ are bounded.
    \end{enumerate}
     Further supposing that $\datasize^\targetsup/\datasizeall \converge{} \tarsrcratio$, we have that 
    \begin{equation}
        \sqrt{\datasize}\paren{\hat\tau_{\collab-\dr} - \tau }\converge{d} \normal\paren{0,v_{\fed-\dr}^2},
    \end{equation}
    with
    \begin{align*}
        v_{\fed-\dr}^2 = &~ \tarsrcratio^\inv \E\bracBig{\brac{\modeltreated\paren{\covariates} - \modelcontrol\paren{\covariates}}^2} -\tarsrcratio^\inv \tau^2 \\
        &~+ \E\bracBig{\frac{\paren{\outcome_\groupt - \modeltreated\paren{\covariates}}^2}{\P\paren{\treatment\paren{\select}=1\mid\covariates}} + \frac{\paren{\outcome_\groupc - \modelcontrol\paren{\covariates}}^2}{\P\paren{\treatment\paren{\select}=0\mid\covariates}}}.
    \end{align*}
\end{theorem}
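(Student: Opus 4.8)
The plan is to follow the standard double machine learning template, adapted to the decoupled, self-normalized design: linearize $\hat\tau_{\dml}$ around an oracle built from the true nuisances, prove a central limit theorem for the oracle, and show the plug-in error is $o_p(\datasize^{-1/2})$. Let $\hat\tau^{\oracle}_{\dml}$ denote $\hat\tau_{\dml}$ with $\hat\modeltreated,\hat\modelcontrol,\hat\modelprop$ replaced by the true $\modeltreated,\modelcontrol,\modelprop$, and split $\sqrt{\datasize}(\hat\tau_{\dml}-\tau)=\sqrt{\datasize}(\hat\tau^{\oracle}_{\dml}-\tau)+\sqrt{\datasize}(\hat\tau_{\dml}-\hat\tau^{\oracle}_{\dml})$. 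By the sample-splitting condition I condition on the held-out fits, so each average is an i.i.d.\ sum given the fits. I abbreviate $\pi_\groupt(\covariates)=\sumrvtrtweights=\P\paren{\treatment\paren{\select}=\groupt\mid\covariates}$ and $\pi_\groupc$ analogously; Assumption \ref{ass:overall-overlap} gives $\pi_\groupt,\pi_\groupc\geq\pslower$, and the \hajek~normalization renders every term invariant to the unidentifiable constant of Proposition \ref{prop:identification}.

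For the oracle term, Assumption \ref{ass:unconfoundedness} gives $\tau=\E\brac{\modeltreated\paren{\covariates}-\modelcontrol\paren{\covariates}}$, so the $\datasettarget$-average contributes a mean-zero i.i.d.\ sum with per-unit variance $\E\brac{(\modeltreated-\modelcontrol)^2}-\tau^2$. The pooled correction is a \hajek~average, over all sampled units, of $\treatment\paren{\select}\paren{\outcome-\modeltreated}/\pi_\groupt$ minus the control analogue (dropped units contribute zero); its summands have conditional mean zero, since unconfoundedness gives $\E\brac{\treatment\paren{\select}(\outcome-\modeltreated)\mid\covariates}=\pi_\groupt\modeltreated-\pi_\groupt\modeltreated=0$. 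The target and correction samples are independent, so their variances add; a Lyapunov CLT, whose moment bound holds by the boundedness assumption and the overlap floor $\pslower$, yields $\sqrt{\datasize}(\hat\tau^{\oracle}_{\dml}-\tau)\converge{d}\normal(0,v^2_{\fed-\dr})$, with the $\tarsrcratio^{-1}$ prefactor arising from the limiting relative sizes of $\datasettarget$ and the pooled sample and the \hajek~denominators converging to $1$ and dropping out by Slutsky.

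The crux, and the main obstacle, is $\sqrt{\datasize}(\hat\tau_{\dml}-\hat\tau^{\oracle}_{\dml})=o_p(1)$, i.e.\ verifying Neyman orthogonality across the decoupled pieces. The plug-in error of the $\datasettarget$-term has conditional mean $\E_\distcovariates\brac{(\hat\modeltreated-\modeltreated)-(\hat\modelcontrol-\modelcontrol)}$. The treated correction's plug-in error has conditional mean $-\E_\distcovariates\brac{(\pi_\groupt/\hat\pi_\groupt)(\hat\modeltreated-\modeltreated)}$; the inverse-propensity weight re-expresses this as an integral against the target marginal $\distcovariates$, which is precisely what makes it comparable to the target term, and writing $\pi_\groupt/\hat\pi_\groupt=1+(\pi_\groupt-\hat\pi_\groupt)/\hat\pi_\groupt$ peels off $-\E_\distcovariates\brac{\hat\modeltreated-\modeltreated}$, cancelling the target term's leading piece (symmetrically for control). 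First-order errors in $\hat\modelprop$ vanish by double robustness, the correction having mean zero for any $\hat\modelprop$ once the outcome model is correct. The survivor is the cross term $\E_\distcovariates\brac{(\hat\pi_\groupt-\pi_\groupt)(\hat\modeltreated-\modeltreated)/\hat\pi_\groupt}$, bounded via Cauchy--Schwarz and $\pslower$ by $O(\ltwo{\hat\modelprop-\modelprop}\,\ltwo{\hat\modeltreated-\modeltreated})=O(\datasize^{-\xi_\modelprop-\xi_\modeloutcome})=o(\datasize^{-1/2})$ under the product-rate condition.

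The demeaned fluctuations of these error terms are $o_p(\datasize^{-1/2})$ by conditional-variance bounds that shrink with $\ltwo{\hat\modeltreated-\modeltreated}$ and $\ltwo{\hat\modelprop-\modelprop}$, and the \hajek~normalizer's deviation from $1$ only rescales already-negligible numerators, hence is higher order. The main difficulty to get right is the decoupled cancellation in the third step: because the outcome-model average lives on $\datasettarget$ while the debiasing correction lives on the $\dataset^\clientsup$, the correction's bias must be re-expressed as an integral against $\distcovariates$ (via the $1/\pi_\groupt$ weighting) so that it aligns with the target term rather than with the site-conditional laws; once this alignment is established, the residual interaction is controlled by the product-rate assumption exactly as in classical \dml. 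Combining the oracle CLT with the remainder bound via Slutsky gives the claim.
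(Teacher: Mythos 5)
Your proposal is correct and shares the skeleton of the paper's proof: couple $\hat\tau_{\fed-\dml}$ to an oracle estimator built from the true nuisances (the paper's $\tilde\tau_{\fed-\dr}$ and Lemma \ref{lem:coupling}), prove a CLT for the oracle with the two independent samples contributing additive variances and the $\tarsrcratio^\inv$ prefactor, and dispose of the \hajek~normalization via the \hajek-to-\HT~equivalence (Lemma \ref{lem:hajek-to-ht}; your Slutsky step). The genuine difference is in how the coupling remainder is controlled, and there your route is the sharper one --- it in fact repairs a gap in the paper's own write-up. Writing $\pi_\groupt\paren{\covariates}=\P\paren{\treatment\paren{\select}=\groupt\mid\covariates}$ and $\hat\pi_\groupt$ for its estimated counterpart, the paper isolates the term $\Delta_1=\frac{1}{\datasize}\sum_\indiv\brac{{\treatment\paren{\select_\indiv}}/{\pi_\groupt\paren{\covariates_\indiv}}-{\treatment\paren{\select_\indiv}}/{\hat\pi_\groupt\paren{\covariates_\indiv}}}\paren{\outcome_\indiv-\hat\modeltreated\paren{\covariates_\indiv}}$ and bounds $\E\brac{\sqrt{\datasize}\abs{\Delta_1}}$ by an empirical Cauchy--Schwarz; as displayed, that yields $\sqrt{\datasize}\cdot O(\datasize^{-\xi_\modelprop})\cdot\sqrt{\E\brac{\paren{\outcome(1)-\modeltreated\paren{\covariates}}^2}}$, whose last factor is the conditional outcome noise --- a constant, not $O(\datasize^{-\xi_\modeloutcome})$ --- so the claimed exponent $\datasize^{1/2-\xi_\modeloutcome-\xi_\modelprop}$ does not follow, and that step taken literally would require $\xi_\modelprop>1/2$ on its own, making the outcome model superfluous. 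Your treatment is the correct instantiation of Neyman orthogonality: split $\Delta_1$ into its conditional mean given the fits, which by $\E\brac{\treatment\paren{\select}\paren{\outcome-\hat\modeltreated\paren{\covariates}}\mid\covariates}=\pi_\groupt\paren{\covariates}\paren{\modeltreated\paren{\covariates}-\hat\modeltreated\paren{\covariates}}$ is exactly the cross term $\E\brac{\paren{\hat\pi_\groupt-\pi_\groupt}\paren{\hat\modeltreated-\modeltreated}/\hat\pi_\groupt}=O(\datasize^{-\xi_\modeloutcome-\xi_\modelprop})=o(\datasize^{-1/2})$ by Cauchy--Schwarz and the overlap floor $\pslower$, plus a demeaned fluctuation whose variance is $O(\datasize^{-2\xi_\modelprop})$ and so vanishes after scaling by $\sqrt{\datasize}$. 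This mean/fluctuation split is what makes the rate condition (the theorem's ``$\xi_\modeloutcome\xi_\modelprop>1/2$'' is evidently a typo for $\xi_\modeloutcome+\xi_\modelprop>1/2$, which is the condition both your argument and the paper's displayed exponent actually use) do real work, and your explicit ``peeling off'' of $\E\brac{\hat\modeltreated-\modeltreated}$ against the target-sample term makes transparent the bias cancellation that the paper achieves only implicitly by centering its terms $\Delta_3$--$\Delta_6$. In short: same architecture, but your handling of the key remainder is the one that actually delivers the stated theorem.
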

The assumptions in Theorem \ref{thm:dml} are standard in the literature \citep{chernozhukov_doubledebiased_2018,athey_policy_2020}. If we use the K-NN density ratio estimation \citep{lin_estimation_2021}, we get that $\xi_\modelprop = 2/(2+\dimension)$. Therefore, taking any outcome model with $\xi_\modeloutcome \geq 1/2 - 2/(2+\dimension)$ would guarantee the asymptotic normality of $\hat\tau_{\fed-\dml}$.

\subsection{Estimation of outcome models}
It's worth noting the convergence rates in Equation \eqref{eqn:om-rates} are taking average over the target population. 
To achieve low excess risk in the target population, we adopt the domain adaptation part from orthogonal statistical learning \cite{foster_orthogonal_2020}. Consider the loss function re-weighted through inverse propensity scores:
\begin{align}
L\paren{\modeltreated;\set{\clientdataset^\clientsup}_{\client\in\clientset}} =&~ 
\sumclient  L^\clientsup\paren{\modeltreated;\clientdataset^\clientsup}\nonumber\\
\text{ with } L^\clientsup =&~\sum_{\indiv\in\clientdataset^\clientsup}\frac{\treatment_\indiv \loss\paren{\outcome_\indiv, \modeltreated\paren{\covariates_\indiv}}}{\sumesttrtweights}.\label{eqn:loss-domain-adaptation}
\end{align}
We want to compare it with training directly on the target distribution, i.e., using loss function $\Tilde{L}$
\begin{equation}\label{eqn:loss-target}
\Tilde{L}\paren{\modeltreated;\dataset} = \sum_{\indiv=1}^\datasize \loss\paren{\outcome_\indiv(1), \modeltreated\paren{\covariates_\indiv}}.
\end{equation}
\begin{theorem}\label{thm:orthogonal}
Suppose that 
\begin{enumerate}
    \item The estimated propensity score model $\hat\modelprop\paren{\covariates}$ satisfies Equation \eqref{eqn:ps-rates}.
    \item Using loss function \eqref{eqn:loss-target}, $\hat\modeltreated$ satisfies Equation \eqref{eqn:ps-rates}.
\end{enumerate}
Then, using loss function \eqref{eqn:loss-domain-adaptation}, we have that
\begin{equation}
\E\brac{\ltwo{\hat{\modeltreated}\paren{\covariates} - \modeltreated\paren{\covariates}}} \leq O(1/N^{-\xi_\modeloutcome}) + O(1/N^{-4\xi_\modelprop}).
\end{equation}
\end{theorem}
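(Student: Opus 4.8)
The plan is to recognize Theorem~\ref{thm:orthogonal} as an instance of orthogonal statistical learning in the sense of \citet{foster_orthogonal_2020}, where the propensity model $\modelprop$ plays the role of a nuisance and the reweighted loss \eqref{eqn:loss-domain-adaptation} is the plug-in objective. The whole point is that, although the inverse weights are formed from an estimated nuisance $\hat\modelprop$, the target parameter $\modeltreated$ is insensitive to first-order perturbations of that nuisance, so the propensity error is demoted to a higher-order remainder. I would organize the argument in three movements: (i) identify the population minimizer of the reweighted objective; (ii) verify Neyman orthogonality with respect to $\modelprop$; and (iii) invoke the excess-risk guarantee for orthogonal losses and convert it to the stated $L_2$ rate.

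First I would compute the population version of \eqref{eqn:loss-domain-adaptation} under the true propensity. Conditioning on $\covariates$ and using that an individual selected into any treated arm has outcome $\outcome(1)$, together with the matching identity $\sumclient\modelprop^\clienttrtsup\paren{\covariates} = \P\paren{\treatment\paren{\select}=1\mid\covariates}$ from the vanilla-weights discussion, the inverse weight $1/\sum_r\modelprop^{(r,1)}\paren{\covariates}$ exactly cancels the aggregate selection probability. This shows that $\E\brac{\Loss\paren{\modeltreated;\modelprop}}$ equals the target risk $\E\brac{\loss\paren{\outcome(1),\modeltreated\paren{\covariates}}}$ up to terms independent of $\modeltreated$, so the reweighted objective and the target objective \eqref{eqn:loss-target} share the same minimizer $\modeltreated^\star$. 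Crucially, because $\modeltreated^\star$ minimizes the conditional risk pointwise, the loss derivative $\partial_2\loss\paren{\outcome(1),\modeltreated^\star\paren{\covariates}}$ has conditional mean zero given $\covariates$.

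Second, I would use this conditional-mean-zero property to establish orthogonality. Differentiating the population reweighted risk first in $\modeltreated$ (direction $h$) and then in the aggregate propensity $\sum_r\modelprop^{(r,1)}$ (direction $g$), the perturbation of the weight factors out a term $\E\brac{\partial_2\loss\paren{\outcome(1),\modeltreated^\star\paren{\covariates}}\mid\covariates}=0$, so the mixed Gateaux derivative vanishes at $\paren{\modeltreated^\star,\modelprop}$. This is precisely first-order Neyman orthogonality: the stationarity condition defining $\modeltreated^\star$ is preserved under \emph{any} reweighting, so substituting $\hat\modelprop$ injects no first-order bias into the normal equations, and the minimizer of the $\hat\modelprop$-reweighted population loss remains $\modeltreated^\star$ to leading order.

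Finally, I would apply the orthogonal-learning excess-risk bound of \citet{foster_orthogonal_2020}, using cross-fitting so that $\hat\modelprop$ is independent of the data defining $\hat\modeltreated$ as in Theorem~\ref{thm:dml}, and invoking the strong convexity and smoothness afforded by the boundedness hypotheses. Orthogonality forces the nuisance to enter only through a second-order remainder governed by a product of $\ltwo{\hat\modelprop-\modelprop}$ with the outcome-parameter error; resolving the resulting quadratic inequality and translating the excess risk back into squared-$L_2$ parameter error via strong convexity yields $\E\brac{\ltwo{\hat\modeltreated-\modeltreated}} \lesssim N^{-\xi_\modeloutcome} + N^{-4\xi_\modelprop}$, where the first term is the oracle rate supplied by the second hypothesis (training under the target loss \eqref{eqn:loss-target}) and the fourth power reflects the second-order appearance of $\ltwo{\hat\modelprop-\modelprop}$ composed with the squared-norm convention of \eqref{eqn:om-rates}. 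The main obstacle — and the step I would treat most carefully — is the orthogonality bookkeeping for the multi-site, Hájek-normalized objective: the weight involves the \emph{sum} $\sum_r\modelprop^{(r,1)}$ rather than a single site propensity, and I must show that the self-normalization through $\hat\datasize^\clientsup_{\fed,\groupt}$ does not reintroduce a first-order nuisance term, after which propagating the error through the strong-convexity conversion pins down the exponent $4\xi_\modelprop$.
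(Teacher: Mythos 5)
Your proposal follows essentially the same route as the paper: the paper's entire proof of Theorem~\ref{thm:orthogonal} is a one-line citation to Appendix B.2 of \citet{foster_orthogonal_2020}, i.e., it treats the reweighted loss \eqref{eqn:loss-domain-adaptation} as an orthogonal statistical learning problem with the propensity as the nuisance, which is precisely the argument you reconstruct (population-minimizer identification, Neyman orthogonality of the weight, excess-risk bound with fourth-power nuisance dependence, strong-convexity conversion). Your write-up is a correct unpacking of the details the paper delegates to that reference, including the source of the $N^{-4\xi_\modelprop}$ exponent.
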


\subsection{Federated Learning Algorithm}
The estimation of the outcome model requires federated learning. We could optimize the loss function by using FedAvg \cite{li_convergence_2020} or SCAFFOLD \cite{karimireddy_scaffold_2020}. We present the process, including computing $\hat\tau_{\dml}$ in Algorithm \ref{alg:dml}.
\begin{algorithm}
\caption{\fed-\dml~Algorithm}
\label{alg:dml}
\begin{algorithmic}[1]
\REQUIRE $K$ datasets $\set{\clientdataset^\clientsup}_{\client\in\clientset}$ and $\datasettarget$.
\STATE (Locally) estimate $\hat\modelprop^\clientzsup\paren{\covariates}$.
\WHILE{not converged}
\STATE Train model $\modeltreated$ and $\modelcontrol$ using the FedAvg Algorithm with Loss function in \eqref{eqn:loss-domain-adaptation}.
\ENDWHILE
\STATE (Locally) update $\outcome_\indiv$'s by $\outcome_\indiv\rightarrow \outcome_\indiv - \modeloutcome_{\treatment_\indiv}\paren{\covariates_\indiv}$.
\STATE Use Algorithm \ref{alg:fed} to get $\sumclient \delta^\clientsup_{\fed-\dml}$. 
\STATE Construct the \fed-\dml~estimator using Equation \eqref{eqn:def of dml}.
\end{algorithmic}
\end{algorithm}

As a result, using Algorithm \ref{alg:dml}, if we combine Theorems \ref{thm:dml} and \ref{thm:orthogonal}, we could get that $\hat\tau_{\fed-\dml}$ is asymptotic normal given that $\xi_\modeloutcome \xi_\modelprop < 1/2$ and $\xi_\modelprop^5 < 1/2$. Using Proposition \ref{prop:est-acc}, it suffices to utilize the K-NN density ratio estimation method with $\dimension\leq 8$ and find an outcome model with $\xi_\modeloutcome \geq 1/2 - 2/(2+\dimension)$. This avoids the problem of the misspecification of the exponential tilting model.

It is worth noting that our discussion of \dml~is is from the point of view of learning theory. If we adopt the classical double robustness framework, when the outcome model is correctly specified, there's no need to adjust the distribution of the covariates. The \dml~estimator is asymptotically normal even when the propensity score model completely fails. We would demonstrate its robustness in the simulation.

\section{Experiments}
\subsection{Synthetic Dataset}
We conduct the experiment using synthetic dataset. 
Echoing the discussion in Section \ref{sec:setup}, to show that the \textit{sampling}-\textit{selecting} procedure is not necessarily to truly happen, we fix sample sizes and generate the covariates using different distributions. Consider three source datasets, with $\datasize^{(\client)} = 1000,~2000,~3000$. The target dataset contains $\datasize^\targetsup = 10000$ data points.   In specific, we generate the target distribution through $\covariates \sim\normal\paren{\mu^\targetsup,\sigma^2 \bI_3}$ with $\mu^\targetsup = -0.1$ and $\sigma=2$. 

In the source dataset, we fix the treatment assignment mechanism and take the true propensity score as 
\[\P\paren{\treatment^\clientsup = 1\mid \covariates^\clientsup} = 1/\brac{1+\exp\paren{[1.2; 0.3; -1.2]^\top\covariates^\clientsup}}.\]
Take the true potential outcomes as 
\begin{align*}
   &~~Y(1) = [1.2; 1.8; 1.4]^\top \covariates^\clientsup\quad\\
   \text{and}&~~
Y(0) = [0.6; 0.7; 0.6]^\top \covariates^\clientsup. 
\end{align*}
We also choose normal distribution for source datasets. Suppose that $\covariates^\clientsup \sim \normal\paren{\mu^\clientsup,\sigma^2}$, with $\sigma=2$. We use the mean $KL-$divergence between source datasets to the target dataset as a measure for the heterogeneity across sites, which is given by 
\[
d_{\text{KL}}\paren{\datasettarget,\set{\dataset^\clientsup}_{\client\in[3]}} = \sum_{\client=1}^3 \frac{1}{2\sigma^2} \paren{\mu^\clientsup - \mu}^2.
\]
We increase $d_{\text{KL}}$ from 0 to 4. Fixing each $d_{\text{KL}}$, we choose $\mu^\clientsup$ uniformly and randomly assign negative sign to one of them. In the estimation process, we use the exponential tilting model for density ratio estimation and the linear model for outcome regression. We calculate the mean squared error (MSE) of \poolipw,~\collabipw,~and \pooled-\dml, and \fed-\dml~through 2000 Monte Carlo Simulations, with four replications of different $\set{\mu^\clientsup}$'s. 
Figure \ref{fig:kl-mse} shows the $d_{\text{KL}}-$MSE curve. We mark the $\ipw$ estimators in each single site with dotted line. Although outperforming each individual sites, the \poolipw~estimator still suffers from the increasing of heterogeneity. In contrast, both \collabipw~and \dml~remain stable when heterogeneity increases.

We further demonstrate the robustness of the \dml~estimator with four combinations of specifications of propensity score and outcome models. We relegate the detail of how to construct mis-specified model to the appendix. 
Figure \ref{fig:2x2grid} shows the 95\% C.I. of the \poolipw, \collabipw, \pooled-\dml, and \fed-\dml~estimators. We choose the case with the mean $KL$-distance being $3$. In all cases, \collabipw~estimator has tighter confidence intervals. 
When propensity score model is misspecified, both \poolipw~and \collabipw~fail due to incorrect weighting. In contrast, \dml~estimators remain consistent as long as outcome model is correct. When both models get misspecified, there is no hope to obtain consitent result.

\begin{figure}[t!]
    \centering

    \begin{subfigure}[t]{0.45\textwidth}
    \caption{Synthetic dataset}
    \label{fig:2x2grid}
    \includegraphics[width=\linewidth]{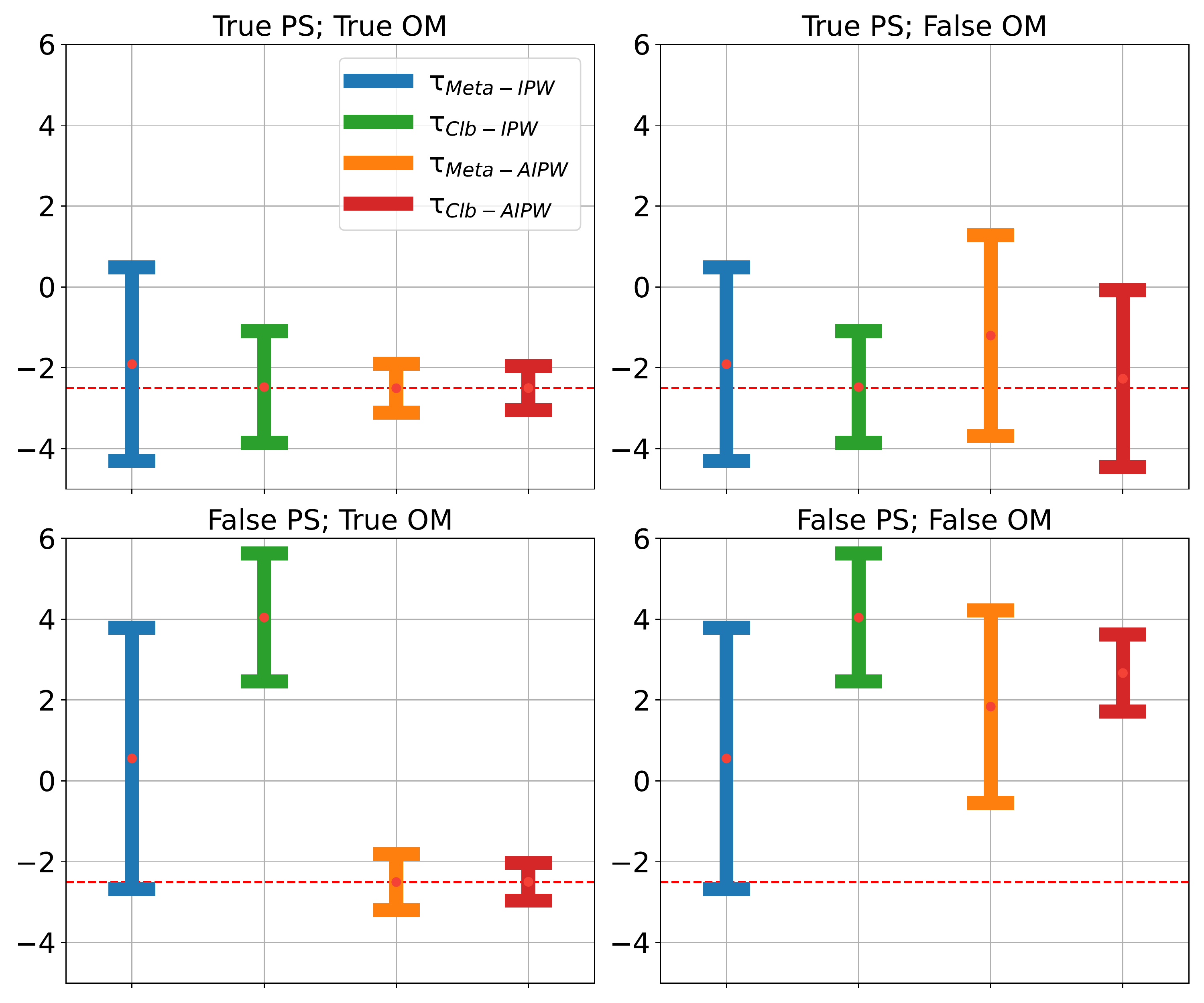}
    \end{subfigure}
\quad\quad
    \begin{subfigure}[t]{0.45\textwidth}
    \caption{Real dataset}
    \label{fig:application}
    \includegraphics[width = \linewidth]{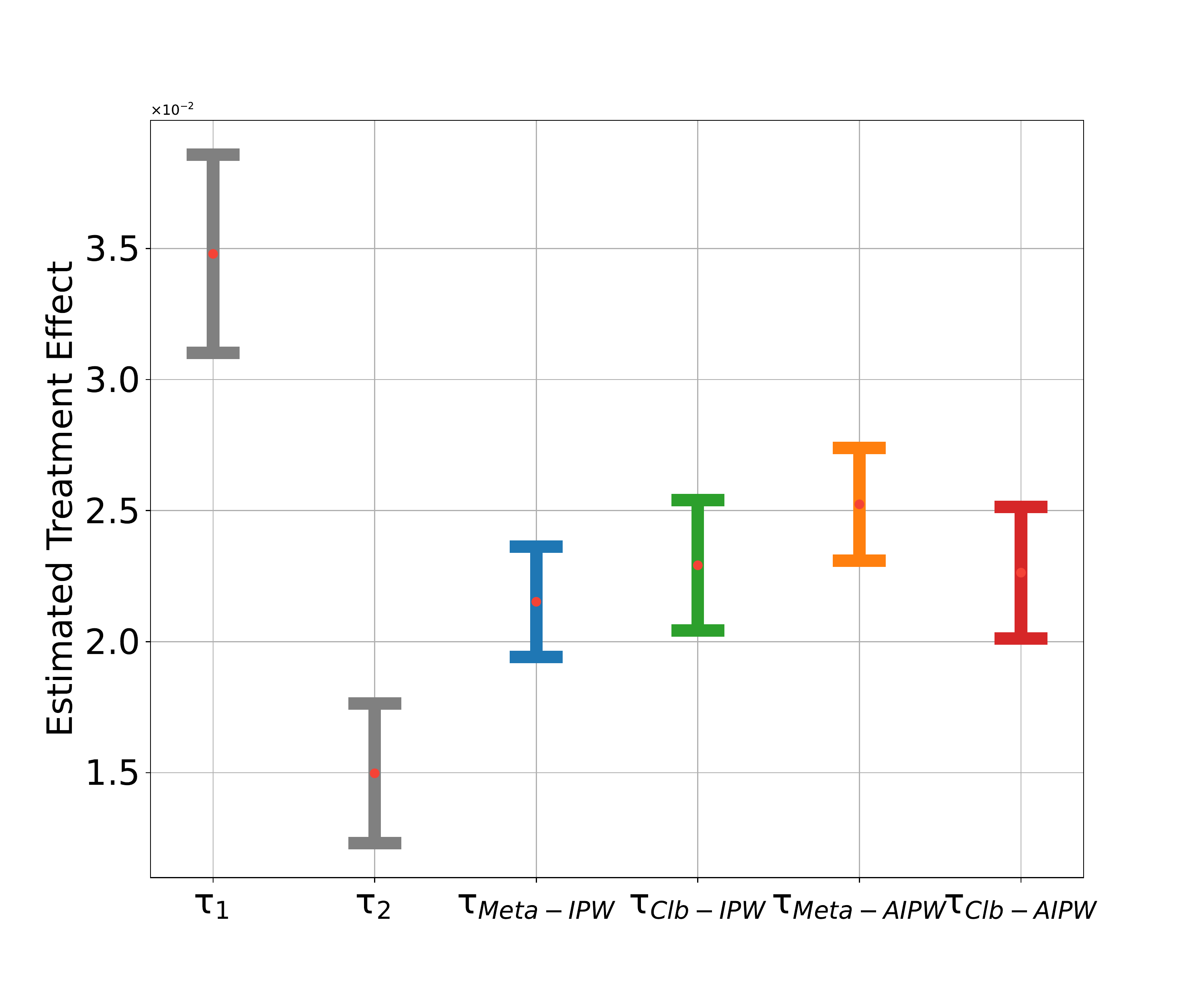}
    \end{subfigure}
\caption{The 95\% confidence intervals for synthetic dataset and the real dataset. The red dots mark the true effect size. In Figure \ref{fig:2x2grid}, \collabipw~shows smaller variance than \poolipw~under all scenarios. The \dml~estimator remains consistent when either of the PS or OM model is correctly specified. In Figure \ref{fig:application}, $\tau_1$ denotes the estimated causal effect in \citet{pennycook_fighting_2020}, and $\tau_2$ denotes \citet{roozenbeek_how_2021}. We find that \poolipw, \collabipw, \pooled-\dml, and \fed-\dml~estimators have similar performance, with \poolipw~ and \pooled-\dml~showing slightly larger effect sizes.}
\label{fig:CIs}
\end{figure}

\subsection{Real world application}
We present a real-world application of our method. Our data comes from two studies about preventing sharing fake news during COVID-19. \citet{roozenbeek_how_2021} replicates the experiment of \citet{pennycook_fighting_2020} to study the effect of a nudge intervention on preventing the sharing of fake news. Both of the two studies sample participants according to U.S. census through online platforms. The outcome is measured by the difference of sharing intentions between true and false headlines about COVID-19 (truth discernment score). 
They find that a simple accuracy reminder could increase the truth discernment score ($\hat\tau = 0.034,$ $p < 0.001$). Using the same design and analysis procedures, \citet{roozenbeek_how_2021} replicates their findings, though with a less significant effect size ($\hat\tau = 0.015,$ $p \approx 0.017$).

Although two studies both try to sample from the target distribution and their heterogeneity is well-controlled, as suggested by \citet{jin_diagnosing_2023}, we still use exponential tilting method to adjust the covariates shift. We adjust the distribution for the mean and variance of the Cognitive Reflection Test (CRT) score, the scientific knowledge quiz score, the Medical Maximizer-Minimizer Scale (MMS), distribution of self-reported political leanings, gender, and age. Figure \ref{fig:application} presents the 95\% C.I.s for the two datasets and three estimators. Due to that the two datasets are close, we find close results. But \collabipw~and \dml~show slightly larger effect size, matching the  conclusion of the original study.
\begin{figure}

\end{figure}

\section{Conclusion}
In this work, we propose a collaborative inverse propensity score estimator that is suitable for heterogeneous data. Along the way, we utilize the \textit{sampling}-\textit{selecting} framework to describe the heterogeneity across sites. We show that the \collabipw~estimator outperforms Meta-analysis-based estimator both in theory and in simulation.
To account for the difficulty of density estimation, we borrow ideas from AIPW and orthogonal statistical learning literature, and provide the necessary convergence rates for nuisance models. As a future direction, it is worth while to explore the communication-efficient method for the optimal weighting of propensity score models.

\clearpage

\bibliography{references}

\begin{thebibliography}{52}
\providecommand{\natexlab}[1]{#1}
\providecommand{\url}[1]{\texttt{#1}}
\expandafter\ifx\csname urlstyle\endcsname\relax
  \providecommand{\doi}[1]{doi: #1}\else
  \providecommand{\doi}{doi: \begingroup \urlstyle{rm}\Url}\fi

\bibitem[Abadie \& Imbens(2016)Abadie and Imbens]{abadie_matching_2016}
Abadie, A. and Imbens, G.~W.
\newblock Matching on the {Estimated} {Propensity} {Score}.
\newblock \emph{Econometrica}, 84\penalty0 (2):\penalty0 781--807, 2016.
\newblock ISSN 0012-9682.
\newblock \doi{10.3982/ECTA11293}.
\newblock URL \url{https://www.econometricsociety.org/doi/10.3982/ECTA11293}.

\bibitem[Athey \& Imbens(2019)Athey and Imbens]{athey_machine_2019}
Athey, S. and Imbens, G.
\newblock Machine {Learning} {Methods} {Economists} {Should} {Know} {About}, March 2019.
\newblock URL \url{http://arxiv.org/abs/1903.10075}.
\newblock arXiv:1903.10075 [econ, stat].

\bibitem[Athey \& Wager(2020)Athey and Wager]{athey_policy_2020}
Athey, S. and Wager, S.
\newblock Policy {Learning} with {Observational} {Data}, September 2020.
\newblock URL \url{http://arxiv.org/abs/1702.02896}.
\newblock arXiv:1702.02896 [cs, econ, math, stat].

\bibitem[Bang \& Robins(2005)Bang and Robins]{bang_doubly_2005}
Bang, H. and Robins, J.~M.
\newblock Doubly {Robust} {Estimation} in {Missing} {Data} and {Causal} {Inference} {Models}.
\newblock \emph{Biometrics}, 61\penalty0 (4):\penalty0 962--973, 2005.
\newblock ISSN 1541-0420.
\newblock \doi{10.1111/j.1541-0420.2005.00377.x}.
\newblock URL \url{https://onlinelibrary.wiley.com/doi/abs/10.1111/j.1541-0420.2005.00377.x}.
\newblock \_eprint: https://onlinelibrary.wiley.com/doi/pdf/10.1111/j.1541-0420.2005.00377.x.

\bibitem[Betthäuser et~al.(2023)Betthäuser, Bach-Mortensen, and Engzell]{betthauser_systematic_2023}
Betthäuser, B.~A., Bach-Mortensen, A.~M., and Engzell, P.
\newblock A systematic review and meta-analysis of the evidence on learning during the {COVID}-19 pandemic.
\newblock \emph{Nature Human Behaviour}, 7\penalty0 (3):\penalty0 375--385, March 2023.
\newblock ISSN 2397-3374.
\newblock \doi{10.1038/s41562-022-01506-4}.
\newblock URL \url{https://www.nature.com/articles/s41562-022-01506-4}.
\newblock Number: 3 Publisher: Nature Publishing Group.

\bibitem[Borenstein et~al.(2007)Borenstein, Hedges, and Rothstein]{borenstein_introduction_2007}
Borenstein, M., Hedges, L., and Rothstein, H.
\newblock \emph{Introduction to {Meta}-{Analysis}}.
\newblock 2007.

\bibitem[Borenstein et~al.(2010)Borenstein, Hedges, Higgins, and Rothstein]{borenstein_basic_2010}
Borenstein, M., Hedges, L.~V., Higgins, J.~P., and Rothstein, H.~R.
\newblock A basic introduction to fixed-effect and random-effects models for meta-analysis.
\newblock \emph{Research Synthesis Methods}, 1\penalty0 (2):\penalty0 97--111, 2010.
\newblock ISSN 1759-2887.
\newblock \doi{10.1002/jrsm.12}.
\newblock URL \url{https://onlinelibrary.wiley.com/doi/abs/10.1002/jrsm.12}.
\newblock \_eprint: https://onlinelibrary.wiley.com/doi/pdf/10.1002/jrsm.12.

\bibitem[Cheng \& Cai(2021)Cheng and Cai]{cheng_adaptive_2021}
Cheng, D. and Cai, T.
\newblock Adaptive {Combination} of {Randomized} and {Observational} {Data}, November 2021.
\newblock URL \url{http://arxiv.org/abs/2111.15012}.
\newblock arXiv:2111.15012 [stat].

\bibitem[Chernozhukov et~al.(2018)Chernozhukov, Chetverikov, Demirer, Duflo, Hansen, Newey, and Robins]{chernozhukov_doubledebiased_2018}
Chernozhukov, V., Chetverikov, D., Demirer, M., Duflo, E., Hansen, C., Newey, W., and Robins, J.
\newblock Double/debiased machine learning for treatment and structural parameters.
\newblock \emph{The Econometrics Journal}, 21\penalty0 (1):\penalty0 C1--C68, February 2018.
\newblock ISSN 1368-4221, 1368-423X.
\newblock \doi{10.1111/ectj.12097}.
\newblock URL \url{https://academic.oup.com/ectj/article/21/1/C1/5056401}.

\bibitem[Colnet et~al.(2023)Colnet, Mayer, Chen, Dieng, Li, Varoquaux, Vert, Josse, and Yang]{colnet_causal_2023}
Colnet, B., Mayer, I., Chen, G., Dieng, A., Li, R., Varoquaux, G., Vert, J.-P., Josse, J., and Yang, S.
\newblock Causal inference methods for combining randomized trials and observational studies: a review, January 2023.
\newblock URL \url{http://arxiv.org/abs/2011.08047}.
\newblock arXiv:2011.08047 [stat].

\bibitem[Concato et~al.(2000)Concato, Shah, and Horwitz]{concato_randomized_2000}
Concato, J., Shah, N., and Horwitz, R.~I.
\newblock Randomized, {Controlled} {Trials}, {Observational} {Studies}, and the {Hierarchy} of {Research} {Designs}.
\newblock \emph{New England Journal of Medicine}, 342\penalty0 (25):\penalty0 1887--1892, June 2000.
\newblock ISSN 0028-4793, 1533-4406.
\newblock \doi{10.1056/NEJM200006223422507}.
\newblock URL \url{http://www.nejm.org/doi/abs/10.1056/NEJM200006223422507}.

\bibitem[Cook et~al.(2002)Cook, Campbell, and Shadish]{cook_experimental_2002}
Cook, T.~D., Campbell, D.~T., and Shadish, W.
\newblock \emph{Experimental and quasi-experimental designs for generalized causal inference}, volume 1195.
\newblock Houghton Mifflin Boston, MA, 2002.

\bibitem[Farahani et~al.(2020)Farahani, Voghoei, Rasheed, and Arabnia]{farahani_brief_2020}
Farahani, A., Voghoei, S., Rasheed, K., and Arabnia, H.~R.
\newblock A {Brief} {Review} of {Domain} {Adaptation}, October 2020.
\newblock URL \url{http://arxiv.org/abs/2010.03978}.
\newblock arXiv:2010.03978 [cs].

\bibitem[Foster \& Syrgkanis(2020)Foster and Syrgkanis]{foster_orthogonal_2020}
Foster, D.~J. and Syrgkanis, V.
\newblock Orthogonal {Statistical} {Learning}, September 2020.
\newblock URL \url{http://arxiv.org/abs/1901.09036}.
\newblock arXiv:1901.09036 [cs, econ, math, stat].

\bibitem[Funk et~al.(2011)Funk, Westreich, Wiesen, Stürmer, Brookhart, and Davidian]{funk_doubly_2011}
Funk, M.~J., Westreich, D., Wiesen, C., Stürmer, T., Brookhart, M.~A., and Davidian, M.
\newblock Doubly {Robust} {Estimation} of {Causal} {Effects}.
\newblock \emph{American Journal of Epidemiology}, 173\penalty0 (7):\penalty0 761--767, April 2011.
\newblock ISSN 1476-6256, 0002-9262.
\newblock \doi{10.1093/aje/kwq439}.
\newblock URL \url{https://academic.oup.com/aje/article-lookup/doi/10.1093/aje/kwq439}.

\bibitem[Glynn \& Quinn(2010)Glynn and Quinn]{glynn_introduction_2010}
Glynn, A.~N. and Quinn, K.~M.
\newblock An {Introduction} to the {Augmented} {Inverse} {Propensity} {Weighted} {Estimator}.
\newblock \emph{Political Analysis}, 18\penalty0 (1):\penalty0 36--56, 2010.
\newblock ISSN 1047-1987, 1476-4989.
\newblock \doi{10.1093/pan/mpp036}.
\newblock URL \url{https://www.cambridge.org/core/product/identifier/S1047198700012304/type/journal_article}.

\bibitem[Guo et~al.(2023)Guo, Li, Han, and Cai]{guo_robust_2023}
Guo, Z., Li, X., Han, L., and Cai, T.
\newblock Robust {Inference} for {Federated} {Meta}-{Learning}, January 2023.
\newblock URL \url{http://arxiv.org/abs/2301.00718}.
\newblock arXiv:2301.00718 [stat].

\bibitem[Han et~al.(2022)Han, Hou, Cho, Duan, and Cai]{han_federated_2022}
Han, L., Hou, J., Cho, K., Duan, R., and Cai, T.
\newblock Federated {Adaptive} {Causal} {Estimation} ({FACE}) of {Target} {Treatment} {Effects}, April 2022.
\newblock URL \url{http://arxiv.org/abs/2112.09313}.
\newblock arXiv:2112.09313 [math, stat].

\bibitem[Han et~al.(2023{\natexlab{a}})Han, Li, Niknam, and Zubizarreta]{han_privacy-preserving_2023}
Han, L., Li, Y., Niknam, B.~A., and Zubizarreta, J.~R.
\newblock Privacy-{Preserving}, {Communication}-{Efficient}, and {Target}-{Flexible} {Hospital} {Quality} {Measurement}, February 2023{\natexlab{a}}.
\newblock URL \url{http://arxiv.org/abs/2203.00768}.
\newblock arXiv:2203.00768 [stat].

\bibitem[Han et~al.(2023{\natexlab{b}})Han, Shen, and Zubizarreta]{han_multiply_2023}
Han, L., Shen, Z., and Zubizarreta, J.
\newblock Multiply {Robust} {Federated} {Estimation} of {Targeted} {Average} {Treatment} {Effects}, September 2023{\natexlab{b}}.
\newblock URL \url{http://arxiv.org/abs/2309.12600}.
\newblock arXiv:2309.12600 [cs, math, stat].

\bibitem[Hedges \& Vevea(1998)Hedges and Vevea]{hedges_fixed-and_1998}
Hedges, L.~V. and Vevea, J.~L.
\newblock Fixed-and random-effects models in meta-analysis.
\newblock \emph{Psychological methods}, 3\penalty0 (4):\penalty0 486, 1998.
\newblock Publisher: American Psychological Association.

\bibitem[Higgins et~al.(2009)Higgins, Thompson, and Spiegelhalter]{higgins_re-evaluation_2009}
Higgins, J. P.~T., Thompson, S.~G., and Spiegelhalter, D.~J.
\newblock A {Re}-{Evaluation} of {Random}-{Effects} {Meta}-{Analysis}.
\newblock \emph{Journal of the Royal Statistical Society Series A: Statistics in Society}, 172\penalty0 (1):\penalty0 137--159, January 2009.
\newblock ISSN 0964-1998, 1467-985X.
\newblock \doi{10.1111/j.1467-985X.2008.00552.x}.
\newblock URL \url{https://academic.oup.com/jrsssa/article/172/1/137/7084465}.

\bibitem[Hu et~al.(2022)Hu, Shi, and Song]{hu_collaborative_2022}
Hu, M., Shi, X., and Song, P. X.-K.
\newblock Collaborative causal inference with a distributed data-sharing management, April 2022.
\newblock URL \url{http://arxiv.org/abs/2204.00857}.
\newblock arXiv:2204.00857 [stat].

\bibitem[Huang et~al.(2006)Huang, Gretton, Borgwardt, Schölkopf, and Smola]{huang_correcting_2006}
Huang, J., Gretton, A., Borgwardt, K., Schölkopf, B., and Smola, A.
\newblock Correcting {Sample} {Selection} {Bias} by {Unlabeled} {Data}.
\newblock In \emph{Advances in {Neural} {Information} {Processing} {Systems}}, volume~19. MIT Press, 2006.
\newblock URL \url{https://proceedings.neurips.cc/paper/2006/hash/a2186aa7c086b46ad4e8bf81e2a3a19b-Abstract.html}.

\bibitem[Härdle et~al.(2004)Härdle, Müller, Sperlich, Werwatz, and {others}]{hardle_nonparametric_2004}
Härdle, W., Müller, M., Sperlich, S., Werwatz, A., and {others}.
\newblock \emph{Nonparametric and semiparametric models}, volume~1.
\newblock Springer, 2004.

\bibitem[Imbens \& Rubin(2015)Imbens and Rubin]{imbens_causal_2015}
Imbens, G.~W. and Rubin, D.~B.
\newblock \emph{Causal {Inference} for {Statistics}, {Social}, and {Biomedical} {Sciences}: {An} {Introduction}}.
\newblock Cambridge University Press, 2015.

\bibitem[Jin et~al.(2023)Jin, Guo, and Rothenhäusler]{jin_diagnosing_2023}
Jin, Y., Guo, K., and Rothenhäusler, D.
\newblock Diagnosing the role of observable distribution shift in scientific replications, September 2023.
\newblock URL \url{http://arxiv.org/abs/2309.01056}.
\newblock arXiv:2309.01056 [stat].

\bibitem[Karimireddy et~al.(2020)Karimireddy, Kale, Mohri, Reddi, Stich, and Suresh]{karimireddy_scaffold_2020}
Karimireddy, S.~P., Kale, S., Mohri, M., Reddi, S., Stich, S., and Suresh, A.~T.
\newblock {SCAFFOLD}: {Stochastic} {Controlled} {Averaging} for {Federated} {Learning}.
\newblock In \emph{Proceedings of the 37th {International} {Conference} on {Machine} {Learning}}, pp.\  5132--5143. PMLR, November 2020.
\newblock URL \url{https://proceedings.mlr.press/v119/karimireddy20a.html}.
\newblock ISSN: 2640-3498.

\bibitem[Koesters et~al.(2013)Koesters, Guaiana, Cipriani, Becker, and Barbui]{koesters_agomelatine_2013}
Koesters, M., Guaiana, G., Cipriani, A., Becker, T., and Barbui, C.
\newblock Agomelatine efficacy and acceptability revisited: systematic review and meta-analysis of published and unpublished randomised trials.
\newblock \emph{British Journal of Psychiatry}, 203\penalty0 (3):\penalty0 179--187, September 2013.
\newblock ISSN 0007-1250, 1472-1465.
\newblock \doi{10.1192/bjp.bp.112.120196}.
\newblock URL \url{https://www.cambridge.org/core/product/identifier/S0007125000052533/type/journal_article}.

\bibitem[Li et~al.(2020)Li, Huang, Yang, Wang, and Zhang]{li_convergence_2020}
Li, X., Huang, K., Yang, W., Wang, S., and Zhang, Z.
\newblock On the {Convergence} of {FedAvg} on {Non}-{IID} {Data}, June 2020.
\newblock URL \url{http://arxiv.org/abs/1907.02189}.
\newblock arXiv:1907.02189 [cs, math, stat].

\bibitem[Lin et~al.(2021)Lin, Ding, and Han]{lin_estimation_2021}
Lin, Z., Ding, P., and Han, F.
\newblock Estimation based on nearest neighbor matching: from density ratio to average treatment effect, December 2021.
\newblock URL \url{http://arxiv.org/abs/2112.13506}.
\newblock arXiv:2112.13506 [econ, math, stat].

\bibitem[Little \& Rubin(2019)Little and Rubin]{little_statistical_2019}
Little, R.~J. and Rubin, D.~B.
\newblock \emph{Statistical analysis with missing data}, volume 793.
\newblock John Wiley \& Sons, 2019.

\bibitem[Pennycook et~al.(2020)Pennycook, McPhetres, Zhang, Lu, and Rand]{pennycook_fighting_2020}
Pennycook, G., McPhetres, J., Zhang, Y., Lu, J.~G., and Rand, D.~G.
\newblock Fighting {COVID}-19 {Misinformation} on {Social} {Media}: {Experimental} {Evidence} for a {Scalable} {Accuracy}-{Nudge} {Intervention}.
\newblock \emph{Psychological Science}, 31\penalty0 (7):\penalty0 770--780, July 2020.
\newblock ISSN 0956-7976.
\newblock \doi{10.1177/0956797620939054}.
\newblock URL \url{https://doi.org/10.1177/0956797620939054}.
\newblock Publisher: SAGE Publications Inc.

\bibitem[Raschka(2020)]{raschka_model_2020}
Raschka, S.
\newblock Model {Evaluation}, {Model} {Selection}, and {Algorithm} {Selection} in {Machine} {Learning}, November 2020.
\newblock URL \url{http://arxiv.org/abs/1811.12808}.
\newblock arXiv:1811.12808 [cs, stat].

\bibitem[Riley et~al.(2011)Riley, Higgins, and Deeks]{riley_interpretation_2011}
Riley, R.~D., Higgins, J. P.~T., and Deeks, J.~J.
\newblock Interpretation of random effects meta-analyses.
\newblock \emph{BMJ}, 342:\penalty0 d549, February 2011.
\newblock ISSN 0959-8138, 1468-5833.
\newblock \doi{10.1136/bmj.d549}.
\newblock URL \url{https://www.bmj.com/content/342/bmj.d549}.
\newblock Publisher: British Medical Journal Publishing Group Section: Research Methods \&amp; Reporting.

\bibitem[Roozenbeek et~al.(2021)Roozenbeek, Freeman, and Linden]{roozenbeek_how_2021}
Roozenbeek, J., Freeman, A. L.~J., and Linden, S. v.~d.
\newblock How {Accurate} {Are} {Accuracy}-{Nudge} {Interventions}? {A} {Preregistered} {Direct} {Replication} of {Pennycook} et al. (2020).
\newblock \emph{Psychological Science}, 32\penalty0 (7):\penalty0 1169--1178, 2021.
\newblock \doi{10.1177/09567976211024535}.
\newblock URL \url{https://doi.org/10.1177/09567976211024535}.
\newblock \_eprint: https://doi.org/10.1177/09567976211024535.

\bibitem[Rothwell(2005)]{rothwell_external_2005}
Rothwell, P.~M.
\newblock External validity of randomised controlled trials: “{To} whom do the results of this trial apply?”.
\newblock \emph{The Lancet}, 365\penalty0 (9453):\penalty0 82--93, January 2005.
\newblock ISSN 01406736.
\newblock \doi{10.1016/S0140-6736(04)17670-8}.
\newblock URL \url{https://linkinghub.elsevier.com/retrieve/pii/S0140673604176708}.

\bibitem[Stroup(2000)]{stroup_meta-analysis_2000}
Stroup, D.~F.
\newblock Meta-analysis of {Observational} {Studies} in {EpidemiologyA} {Proposal} for {Reporting}.
\newblock \emph{JAMA}, 283\penalty0 (15):\penalty0 2008, April 2000.
\newblock ISSN 0098-7484.
\newblock \doi{10.1001/jama.283.15.2008}.
\newblock URL \url{http://jama.jamanetwork.com/article.aspx?doi=10.1001/jama.283.15.2008}.

\bibitem[Sugiyama et~al.(2007{\natexlab{a}})Sugiyama, Krauledat, and Müller]{sugiyama_covariate_2007}
Sugiyama, M., Krauledat, M., and Müller, K.-R.
\newblock Covariate shift adaptation by importance weighted cross validation.
\newblock \emph{Journal of Machine Learning Research}, 8\penalty0 (5), 2007{\natexlab{a}}.

\bibitem[Sugiyama et~al.(2007{\natexlab{b}})Sugiyama, Nakajima, Kashima, Buenau, and Kawanabe]{sugiyama_direct_2007}
Sugiyama, M., Nakajima, S., Kashima, H., Buenau, P., and Kawanabe, M.
\newblock Direct {Importance} {Estimation} with {Model} {Selection} and {Its} {Application} to {Covariate} {Shift} {Adaptation}.
\newblock In \emph{Advances in {Neural} {Information} {Processing} {Systems}}, volume~20. Curran Associates, Inc., 2007{\natexlab{b}}.
\newblock URL \url{https://proceedings.neurips.cc/paper_files/paper/2007/hash/be83ab3ecd0db773eb2dc1b0a17836a1-Abstract.html}.

\bibitem[Tibshirani(1996)]{tibshirani_regression_1996}
Tibshirani, R.
\newblock Regression {Shrinkage} and {Selection} {Via} the {Lasso}.
\newblock \emph{Journal of the Royal Statistical Society: Series B (Methodological)}, 58\penalty0 (1):\penalty0 267--288, January 1996.
\newblock ISSN 0035-9246, 2517-6161.
\newblock \doi{10.1111/j.2517-6161.1996.tb02080.x}.
\newblock URL \url{https://rss.onlinelibrary.wiley.com/doi/10.1111/j.2517-6161.1996.tb02080.x}.

\bibitem[Tufanaru et~al.(2015)Tufanaru, Munn, Stephenson, and Aromataris]{tufanaru_fixed_2015}
Tufanaru, C., Munn, Z., Stephenson, M., and Aromataris, E.
\newblock Fixed or random effects meta-analysis? {Common} methodological issues in systematic reviews of effectiveness.
\newblock \emph{JBI Evidence Implementation}, 13\penalty0 (3):\penalty0 196, September 2015.
\newblock ISSN 2691-3321.
\newblock \doi{10.1097/XEB.0000000000000065}.
\newblock URL \url{https://journals.lww.com/ijebh/fulltext/2015/09000/fixed_or_random_effects_meta_analysis__common.12.aspx}.

\bibitem[van Houwelingen et~al.(2002)van Houwelingen, Arends, and Stijnen]{van_houwelingen_advanced_2002}
van Houwelingen, H.~C., Arends, L.~R., and Stijnen, T.
\newblock Advanced methods in meta-analysis: multivariate approach and meta-regression.
\newblock \emph{Statistics in Medicine}, 21\penalty0 (4):\penalty0 589--624, 2002.
\newblock ISSN 1097-0258.
\newblock \doi{10.1002/sim.1040}.
\newblock URL \url{https://onlinelibrary.wiley.com/doi/abs/10.1002/sim.1040}.
\newblock \_eprint: https://onlinelibrary.wiley.com/doi/pdf/10.1002/sim.1040.

\bibitem[Vershynin(2018)]{vershynin_high-dimensional_2018}
Vershynin, R.
\newblock \emph{High-dimensional probability: {An} introduction with applications in data science}, volume~47.
\newblock Cambridge university press, 2018.

\bibitem[Vo et~al.(2022)Vo, Bhattacharyya, Lee, and Leong]{vo_adaptive_2022}
Vo, T.~V., Bhattacharyya, A., Lee, Y., and Leong, T.-Y.
\newblock An adaptive kernel approach to federated learning of heterogeneous causal effects.
\newblock \emph{Advances in Neural Information Processing Systems}, 35:\penalty0 24459--24473, 2022.

\bibitem[Vo et~al.(2023)Vo, lee, and Leong]{vo_federated_2023}
Vo, T.~V., lee, Y., and Leong, T.-Y.
\newblock Federated {Learning} of {Causal} {Effects} from {Incomplete} {Observational} {Data}, August 2023.
\newblock URL \url{http://arxiv.org/abs/2308.13047}.
\newblock arXiv:2308.13047 [cs, stat].

\bibitem[Wager \& Athey(2017)Wager and Athey]{wager_estimation_2017}
Wager, S. and Athey, S.
\newblock Estimation and {Inference} of {Heterogeneous} {Treatment} {Effects} using {Random} {Forests}, July 2017.
\newblock URL \url{http://arxiv.org/abs/1510.04342}.
\newblock arXiv:1510.04342 [math, stat].

\bibitem[Xiong et~al.(2022)Xiong, Koenecke, Powell, Shen, Vogelstein, and Athey]{xiong_federated_2022}
Xiong, R., Koenecke, A., Powell, M., Shen, Z., Vogelstein, J.~T., and Athey, S.
\newblock Federated {Causal} {Inference} in {Heterogeneous} {Observational} {Data}, December 2022.
\newblock URL \url{http://arxiv.org/abs/2107.11732}.
\newblock arXiv:2107.11732 [cs, econ, q-bio, stat].

\bibitem[Yang \& Ding(2020)Yang and Ding]{yang_combining_2020}
Yang, S. and Ding, P.
\newblock Combining {Multiple} {Observational} {Data} {Sources} to {Estimate} {Causal} {Effects}.
\newblock \emph{Journal of the American Statistical Association}, 115\penalty0 (531):\penalty0 1540--1554, 2020.
\newblock ISSN 0162-1459.
\newblock \doi{10.1080/01621459.2019.1609973}.
\newblock URL \url{https://www.ncbi.nlm.nih.gov/pmc/articles/PMC7571608/}.

\bibitem[Ye et~al.(2021)Ye, Shao, and Kang]{ye_debiased_2021}
Ye, T., Shao, J., and Kang, H.
\newblock Debiased inverse-variance weighted estimator in two-sample summary-data {Mendelian} randomization.
\newblock \emph{The Annals of Statistics}, 49\penalty0 (4):\penalty0 2079--2100, August 2021.
\newblock ISSN 0090-5364, 2168-8966.
\newblock \doi{10.1214/20-AOS2027}.
\newblock URL \url{https://projecteuclid.org/journals/annals-of-statistics/volume-49/issue-4/Debiased-inverse-variance-weighted-estimator-in-two-sample-summary-data/10.1214/20-AOS2027.full}.
\newblock Publisher: Institute of Mathematical Statistics.

\bibitem[Zhang et~al.(2018)Zhang, Li, Zong, Zhu, and Wang]{zhang_efficient_2018}
Zhang, S., Li, X., Zong, M., Zhu, X., and Wang, R.
\newblock Efficient {kNN} {Classification} {With} {Different} {Numbers} of {Nearest} {Neighbors}.
\newblock \emph{IEEE Transactions on Neural Networks and Learning Systems}, 29\penalty0 (5):\penalty0 1774--1785, May 2018.
\newblock ISSN 2162-237X, 2162-2388.
\newblock \doi{10.1109/TNNLS.2017.2673241}.
\newblock URL \url{http://ieeexplore.ieee.org/document/7898482/}.

\bibitem[Zhao \& Percival(2017)Zhao and Percival]{zhao_entropy_2017}
Zhao, Q. and Percival, D.
\newblock Entropy balancing is doubly robust.
\newblock \emph{Journal of Causal Inference}, 5\penalty0 (1):\penalty0 20160010, September 2017.
\newblock ISSN 2193-3685, 2193-3677.
\newblock \doi{10.1515/jci-2016-0010}.
\newblock URL \url{http://arxiv.org/abs/1501.03571}.
\newblock arXiv:1501.03571 [stat].

\end{thebibliography}
\newpage
\appendix
\onecolumn
\makeatletter
\def\renewtheorem#1{%
  \expandafter\let\csname#1\endcsname\relax
  \expandafter\let\csname c@#1\endcsname\relax
  \gdef\renewtheorem@envname{#1}
  \renewtheorem@secpar
}
\def\renewtheorem@secpar{\@ifnextchar[{\renewtheorem@numberedlike}{\renewtheorem@nonumberedlike}}
\def\renewtheorem@numberedlike[#1]#2{\newtheorem{\renewtheorem@envname}[#1]{#2}}
\def\renewtheorem@nonumberedlike#1{  
\def\renewtheorem@caption{#1}
\edef\renewtheorem@nowithin{\noexpand\newtheorem{\renewtheorem@envname}{\renewtheorem@caption}}
\renewtheorem@thirdpar
}
\def\renewtheorem@thirdpar{\@ifnextchar[{\renewtheorem@within}{\renewtheorem@nowithin}}
\def\renewtheorem@within[#1]{\renewtheorem@nowithin[#1]}
\makeatother

\renewtheorem{theorem}{Theorem}[section]
\renewtheorem{lemma}[theorem]{Lemma}
\renewtheorem{remark}{Remark}
\renewtheorem{corollary}[theorem]{Corollary}
\renewtheorem{corollary*}{Corollary}
\renewtheorem{observation}[theorem]{Observation}
\renewtheorem{proposition}[theorem]{Proposition}
\renewtheorem{definition}[theorem]{Definition}
\renewtheorem{claim}[theorem]{Claim}
\renewtheorem{fact}[theorem]{Fact}
\renewtheorem{assumption}{Assumption}%
\renewcommand{\theassumption}{\Alph{assumption}}
\renewtheorem{conjecture}[theorem]{Conjecture}
\section{Proofs}
\subsection{Preliminaries}
\begin{definition}
Given i.i.d. weights $w_\indiv$ and outcomes $\outcome_\indiv$, take their weighted sum 
$
\hat{\sumsite} = \sumn w_\indiv \outcome_\indiv.
$
We call an estimator is ``\hajek" type if it uses $\paren{\sumn w_\indiv}^\inv$ to normalize, and ``Horvitz-Thompson" (\HT) type if it uses $\paren{n \E\brac{w}}^\inv$, i.e.,
\[
\hat\mu_{\hajek} = \frac{1}{\sumn w_\indiv} \hat{\sumsite}\quad \mu_{\HT} = \frac{1}{n\E\brac{w}} \hat{\sumsite}.
\]
\end{definition}
We begin with relating the asymptotic behaviour of \hajek-type \ipw~estimator with the \HT-type. In specific, we have that
\begin{lemma}\label{lem:hajek-to-ht}
The ``Hajek"-type weighted mean estimator is asymptotically equivalent to the centralized ``Horvitz-Thompson"-type weighted mean estimator
\begin{equation}
\hat\mu_\HT = \mu + \frac{1}{n\E\brac{w}} \sumn w_\indiv \paren{\outcome_\indiv - \mu},
\end{equation}
i.e., we have that
\[
\sqrt{n}\paren{\hat\mu_\hajek - \hat\mu_\HT} = o_P(1).
\]
\end{lemma}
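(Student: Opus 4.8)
The plan is to reduce the claim to a one-line Slutsky argument by first producing an \emph{exact} algebraic factorization of $\hat\mu_\hajek - \hat\mu_\HT$ as a product of a term that is $O_P(1)$ after scaling by $\sqrt{n}$ and a term that is $o_P(1)$. First I would fix notation: let $\bar S_n = \frac1n \sum_{\indiv=1}^n w_\indiv \outcome_\indiv$ and $\bar W_n = \frac1n \sum_{\indiv=1}^n w_\indiv$ denote the empirical averages, and let $s = \E\brac{w\outcome}$, $\omega = \E\brac{w}$ be their population limits. Because the Hájek estimator is self-normalizing, its estimand is $\mu = s/\omega$ and $\hat\mu_\hajek = \bar S_n / \bar W_n$; by the definition of $\mu$ we have $s - \mu\omega = 0$, so the centering constant appearing in the statement is consistent with this $\mu$.

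The key step is the elementary identity
\begin{equation}
\bar S_n - \mu \bar W_n = \frac1n \sum_{\indiv=1}^n w_\indiv \paren{\outcome_\indiv - \mu},\nonumber
\end{equation}
which lets me rewrite both centered estimators with a common numerator:
\begin{equation}
\hat\mu_\hajek - \mu = \frac{1}{\bar W_n}\cdot\frac1n \sum_{\indiv=1}^n w_\indiv\paren{\outcome_\indiv - \mu},\qquad \hat\mu_\HT - \mu = \frac{1}{\omega}\cdot\frac1n \sum_{\indiv=1}^n w_\indiv\paren{\outcome_\indiv - \mu}.\nonumber
\end{equation}
Subtracting the second from the first yields the factorization
\begin{equation}
\hat\mu_\hajek - \hat\mu_\HT = \paren{\frac1n \sum_{\indiv=1}^n w_\indiv\paren{\outcome_\indiv - \mu}}\paren{\frac{1}{\bar W_n} - \frac{1}{\omega}}.\nonumber
\end{equation}

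To finish I would bound the two factors separately. The numerator has mean zero, since $\E\brac{w\paren{\outcome - \mu}} = s - \mu\omega = 0$; under the overlap assumption the weights $w$ are bounded (the propensity score is bounded away from $0$) and the outcomes are bounded, so $w\paren{\outcome - \mu}$ has finite variance and the central limit theorem gives $\sqrt{n}\cdot\frac1n\sum_{\indiv=1}^n w_\indiv\paren{\outcome_\indiv - \mu} = O_P(1)$. For the second factor, the law of large numbers gives $\bar W_n \converge{} \omega > 0$, hence $\frac{1}{\bar W_n} - \frac1\omega = \frac{\omega - \bar W_n}{\omega\, \bar W_n} = o_P(1)$. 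Multiplying an $O_P(1)$ quantity by an $o_P(1)$ quantity gives $\sqrt{n}\paren{\hat\mu_\hajek - \hat\mu_\HT} = o_P(1)$, as claimed. The only real subtlety — and the step I would be most careful about — is verifying the regularity conditions ($\omega > 0$ and finiteness of the second moment of $w\paren{\outcome-\mu}$) that license the CLT and LLN; these follow from the boundedness and overlap assumptions used throughout, but they must be confirmed for each concrete instantiation (the $\mu_1$ and $\mu_0$ estimators, with their respective inverse-propensity weights) to which the lemma is later applied.
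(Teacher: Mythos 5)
Your proof is correct and follows essentially the same route as the paper's: both rewrite $\hat\mu_\hajek - \mu$ with the common numerator $\frac1n\sum_{\indiv=1}^n w_\indiv(\outcome_\indiv-\mu)$, then combine the law of large numbers for $\bar W_n$ with the CLT-based $O_P(1)$ bound on the scaled numerator to absorb the difference into an $o_P(1)$ term. Your version merely makes explicit what the paper leaves implicit — the exact product factorization $\paren{\tfrac{1}{\bar W_n}-\tfrac{1}{\E\brac{w}}}\cdot O_P(1)$ and the mean-zero condition $\E\brac{w(\outcome-\mu)}=0$ needed for the CLT — which is a useful clarification but not a different argument.
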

\begin{proof}
We subtract $\mu$ from $\hat\mu_\hajek$ and get that
\begin{align*}
\sqrt{n}\paren{\hat\mu_\hajek - \mu} & = \frac{\sqrt{n}}{\sumn w_\indiv} \sumn w_\indiv \paren{\outcome_\indiv - \mu}\\
& = \frac{1}{\sumn w_\indiv / n} \dsqn \sumn w_\indiv \paren{\outcome_\indiv - \mu} \\
& = \frac{1}{\E\brac{w}} \dsqn \sumn w_\indiv\paren{\outcome_\indiv - \mu} + o_P(1)\\
& = \sqrt{n}\paren{\hat\mu_\HT - \mu} + o_P(1).
\end{align*}
The second to the third line is by combining the fact that $\sumn w_\indiv / n = \E\brac{w} + o_P(1)$ and $\sumn w_\indiv\paren{\outcome_\indiv - \mu}/\sqrt{n} = O_P(1)$, through law of large numbers and CLT.
\end{proof}
\subsection{Proof of Proposition \ref{prop:asymp-pooled}}
We first define several useful intermediate values. We use $\hat{\sumsite}$ to denote un-normalized $\ipw$ summations and $\hat{\datasize}$ to denote the estimated data sizes.
\begin{align*}
&\hat\sumsite_\pooled^\clientsup = \sum_{\indiv\in\dataset^\clientsup} \frac{ \treatment_\indiv\outcome_\indiv}{\modelprop^\selecttrtsup\paren{\covariates}} - \frac{\paren{1-\treatment_\indiv}\outcome_\indiv}{\modelprop^\selectctrsup\paren{\covariates}},\quad
\hat\sumsite_{\pooled,1}^\clientsup = \sum_{\indiv\in\dataset^\clientsup} \frac{ \treatment_\indiv\outcome_\indiv}{\modelprop^\selecttrtsup\paren{\covariates}},\quad\text{and}\quad
\hat\sumsite_{\pooled,0}^\clientsup = \sum_{\indiv\in\dataset^\clientsup} \frac{ \paren{1-\treatment_\indiv}\outcome_\indiv}{\modelprop^\selectctrsup\paren{\covariates}}.
\end{align*}
In the main paper, we use that
\begin{align*}
\hat\mu_{\pooled,1}^\clientsup = \frac{1}{\hat\datasize^\clientsup_{\fed,1}} \hat\sumsite_{\pooled,1}^\clientsup \quad
\hat\mu_{\pooled,0}^\clientsup = \frac{1}{\hat\datasize^\clientsup_{\fed,0}} \hat\sumsite_{\pooled,0}^\clientsup.
\end{align*}
\begin{proof}
We first re-write $\hat\sumsite_\pooled^\clientsup$ as
\[
\hat\sumsite_\pooled^\clientsup = \sum_{\indiv=1}^\datasize \frac{\indic{\select_\indiv = \paren{\client,\groupt}} \outcome_\indiv}{\modelprop^\selecttrtsup\paren{\covariates}} - \frac{\indic{\select = \paren{\client,0}}\outcome_\indiv}{\modelprop^\selectctrsup\paren{\covariates}}.
\]
Use Lemma \ref{lem:hajek-to-ht}, we only need to consider
\begin{align*}
\hat\tau_{\pooled-\HT}^\clientsup = \frac{1}{\datasize} \setBig{\frac{\indic{\select_\indiv = \paren{\client,\groupt}} \paren{\outcome_\indiv - \mu_1}}{\modelprop^\selecttrtsup\paren{\covariates}} - \frac{\indic{\select = \paren{\client,0}}\paren{\outcome_\indiv - \mu_0}}{\modelprop^\selectctrsup\paren{\covariates}}} + \tau 
\end{align*}
Note that
\begin{align*}
&~\E\setBig{\frac{\indic{\select = \paren{\client,\groupt}} \paren{\outcome - \mu_1}}{\modelprop^\selecttrtsup\paren{\covariates}} - \frac{\indic{\select = \paren{\client,0}}\paren{\outcome - \mu_0}}{\modelprop^\selectctrsup\paren{\covariates}}}\\ = &~ \E\bracBig{\E\bracBig{\frac{\P\set{\select = \paren{\client,\groupt}\mid\covariates} \paren{\outcome_\groupt - \mu_1}}{\modelprop^\selecttrtsup\paren{\covariates}} - \frac{\P\set{\select = \paren{\client,0}\mid\covariates}\paren{\outcome_\groupc - \mu_0}}{\modelprop^\selectctrsup\paren{\covariates}}\mid \covariates}}\\
= &~ \E\brac{\outcome_\groupt - \mu_\groupt - \paren{\outcome_\groupc - \mu_\groupc}} = 0.
\end{align*}
We also have that
\begin{align*}
&~\Var\setBig{\frac{\indic{\select = \paren{\client,\groupt}} \paren{\outcome - \mu_1}}{\modelprop^\selecttrtsup\paren{\covariates}} - \frac{\indic{\select = \paren{\client,0}}\paren{\outcome - \mu_0}}{\modelprop^\selectctrsup\paren{\covariates}}}\\ = &~ \E\bracBig{\bracBig{\frac{\indic{\select = \paren{\client,\groupt}} \paren{\outcome_\groupt - \mu_1}}{\modelprop^\selecttrtsup\paren{\covariates}} - \frac{\indic{\select = \paren{\client,0}}\paren{\outcome_\groupc - \mu_0}}{\modelprop^\selectctrsup\paren{\covariates}}}^2}\\
= &~ \E\bracBig{\frac{\indic{\select = \paren{\client,\groupt}} \paren{\outcome_\groupt - \mu_1}^2}{\modelprop^\selecttrtsup\paren{\covariates}^2} + \frac{\indic{\select = \paren{\client,0}}\paren{\outcome_\groupc - \mu_0}^2}{\modelprop^\selectctrsup\paren{\covariates}^2}}\\
= &~ \E\bracBig{\frac{\indic{\select = \paren{\client,\groupt}} \paren{\outcome_\groupt - \mu_1}^2}{\modelprop^\selecttrtsup\paren{\covariates}^2} + \frac{\indic{\select = \paren{\client,0}}\paren{\outcome_\groupc - \mu_0}^2}{\modelprop^\selectctrsup\paren{\covariates}^2}}\\
= &~ \E\bracBig{\frac{\paren{\outcome_\groupt - \mu_1}^2}{\modelprop^\selecttrtsup\paren{\covariates}} + \frac{\paren{\outcome_\groupc - \mu_0}^2}{\modelprop^\selectctrsup\paren{\covariates}}}.
\end{align*}
Therefore, using CLT, we get that
\begin{equation}
\sqrt{\datasize} \paren{\hat\tau^\clientsup - \tau} \converge{d} \normal\paren{0,\paren{v_\pooled^\clientsup}^2},
\end{equation}
with
\begin{equation}
\paren{v_\pooled^\clientsup}^2 = \frac{1}{\datasize}\E\bracBig{\frac{\paren{\outcome_\groupt - \mu_1}^2}{\modelprop^\selecttrtsup\paren{\covariates}} + \frac{\paren{\outcome_\groupc - \mu_0}^2}{\modelprop^\selectctrsup\paren{\covariates}}}.
\end{equation}
Therefore, we have that 
\begin{align*}
\sqrt{\datasize} \paren{\hat\tau_\pooled - \tau} & =
\sumclient\bracBig{ \eta^\clientsup \sqrt{\datasize}\paren{\hat\tau^\clientsup_\pooled - \tau}} \converge{d} \normal\parenBig{0,\sumclient \frac{\paren{\eta^\clientsup}^2}{\datasize}\E\bracBig{\frac{\paren{\outcome_\groupt - \mu_1}^2}{\modelprop^\selecttrtsup\paren{\covariates}} + \frac{\paren{\outcome_\groupc - \mu_0}^2}{\modelprop^\selectctrsup\paren{\covariates}}}},
\end{align*}
with
\begin{align*}
v_\pooled^2 & = \sumclient \frac{\paren{\eta^\clientsup}^2\paren{v_\pooled^\clientsup}^2}{\datasize}\\
& \geq \frac{1}{\datasize \sumclient \E\bracBig{\frac{\paren{\outcome_\groupt - \mu_1}^2}{\modelprop^\selecttrtsup\paren{\covariates}} + \frac{\paren{\outcome_\groupc - \mu_0}^2}{\modelprop^\selectctrsup\paren{\covariates}}}^\inv},
\end{align*}
where the equality holds if and only if $\eta^\clientsup \propto \paren{v_\pooled^\clientsup}^\inv$.

\end{proof}

\subsection{Proof of Theorem \ref{thm:fed-hajek}}
We first provide the entire formula for \collabipw~estimator. We define
\begin{align*}
&\hat\sumsite_{\fed,1}^\clientsup = \sum_{\indiv\in\dataset^\clientsup} \frac{ \treatment_\indiv\outcome_\indiv}{\sumtrtweights}
\quad
\hat\sumsite_{\fed,0}^\clientsup = \sum_{\indiv\in\dataset^\clientsup} \frac{ \paren{1-\treatment_\indiv}\outcome_\indiv}{\sumctrweights}\\
&\hat{\datasize}_{\fed,1}^\clientsup = \sum_{\indiv\in\dataset^\clientsup} \frac{ \treatment_\indiv\outcome_\indiv}{\sumtrtweights}\quad
\hat{\datasize}_{\fed,1}^\clientsup = \sum_{\indiv\in\dataset^\clientsup} \frac{ \treatment_\indiv\outcome_\indiv}{\sumtrtweights}.
\end{align*}
Then, we have that
\begin{align*}
\hat\tau_{\fed} = \frac{\sumclient \hat\sumsite_{\fed,1}^\clientsup}{\sumclient \hat{\datasize}_{\fed,1}^\clientsup} - \frac{\sumclient \hat\sumsite_{\fed,0}^\clientsup}{\sumclient \hat{\datasize}_{\fed,0}^\clientsup},
\end{align*}
where in the main paper, we use that
\begin{align*}
\hat\mu_{\fed,1} = \frac{1}{\hat\datasize^\clientsup_{\fed,1}} \hat{\sumsite}_{\fed,1}^\clientsup,\quad
\text{and}\quad
\hat\mu_{\fed,0} = \frac{1}{\hat\datasize^\clientsup_{\fed,0}} \hat{\sumsite}_{\fed,0}^\clientsup.
\end{align*}
\begin{proof}
We rewrite the formula as
\begin{equation}
\hat\sumsite^\clientsup_{\fed,\groupt} = \sum_{\indiv=1}^\datasize \frac{\indic{\select_\indiv = \paren{\client,\groupt}}\outcome_\indiv}{\sumtrtweights}
,\text{ and } 
\hat\sumsite^\clientsup_{\fed,\groupc} = \sum_{\indiv=1}^\datasize \frac{\indic{\select_\indiv = \paren{\client,\groupc}}\outcome_\indiv}{\sumctrweights}.
\end{equation}
As a result, we have that
\begin{align*}
\sumclient \hat\sumsite_{\collab,\groupt} & = \sum_{\indiv=1}^\datasize \sumclient \frac{\indic{\select_\indiv = \paren{\client,\groupt}}\outcome_\indiv}{\sumtrtweights} = \sum_{\indiv=1}^\datasize \frac{\indic{\treatment\paren{\select_\indiv} = \groupt} \outcome_\indiv}{\P\paren{\treatment\paren{\select_\indiv} = \groupt\mid\covariates_\indiv}},\\
\sumclient \hat\sumsite_{\collab,\groupc} & = \sum_{\indiv=1}^\datasize \sumclient \frac{\indic{\select_\indiv = \paren{\client,\groupc}}\outcome_\indiv}{\sumtrtweights} = \sum_{\indiv=1}^\datasize \frac{\indic{\treatment\paren{\select_\indiv} = \groupc} \outcome_\indiv}{\P\paren{\treatment\paren{\select} = \groupc\mid\covariates_\indiv}}.
\end{align*}
Similarly, we get
\begin{align*}
\hat\datasize_{\fed,\groupt} = \sum_{\indiv=1}^\datasize \sumclient \frac{\indic{\select_\indiv = \paren{\client,\groupt}}}{\sumtrtweights} = \sum_{\indiv=1}^\datasize \frac{\indic{\treatment\paren{\select_\indiv} = \groupt}}{\P\paren{\treatment\paren{\select_\indiv} = \groupt\mid\covariates_\indiv}}\\
\hat\datasize_{\fed,\groupc} = \sum_{\indiv=1}^\datasize \sumclient \frac{\indic{\select_\indiv = \paren{\client,\groupc}}}{\sumctrweights} = \sum_{\indiv=1}^\datasize \frac{\indic{\treatment\paren{\select_\indiv} = \groupc}}{\P\paren{\treatment\paren{\select_\indiv} = \groupc\mid\covariates_\indiv}}.
\end{align*}
As a result, $\hat\datasize_{\fed,1}^\inv \hat\sumsite_{\fed,\groupt} - \hat\datasize_{\fed,0}^\inv \hat\sumsite_{\fed,\groupc}$ takes the form of \hajek~type \ipw~estimator. Therefore, we could use Lemma \ref{lem:hajek-to-ht} and get the corresponding \HT-type estimator. Since we have that
\begin{align*}
\E\bracBig{\frac{\indic{\treatment\paren{\select} = \groupt}}{\P\paren{\treatment\paren{\select} = \groupt\mid\covariates}}} = \E\bracBig{\frac{\P\brac{{\treatment\paren{\select} = \groupt}\mid\covariates}}{\P\paren{\treatment\paren{\select} = \groupt\mid\covariates}}} = 1.
\end{align*}
Same result holds for the control group. The \HT~estimators are
\begin{align*}
\paren{\hat\mu_{\fed,\groupt,\HT} - \tau} = \frac{1}{\datasize}\sum_{\indiv=1}^\datasize 
\bracBig{\frac{\indic{\treatment\paren{\select_\indiv} = \groupt}\paren{\outcome_\indiv - \mu_\groupt}}{\P\paren{\treatment\paren{\select_\indiv} = \groupt\mid\covariates_\indiv}} - \frac{\indic{\treatment\paren{\select_\indiv} = \groupc}\paren{\outcome_\indiv - \mu_\groupc}}{\P\paren{\treatment\paren{\select_\indiv} = \groupc\mid\covariates_\indiv}} }
\end{align*}

Using central limit theorem, since we have that
\begin{align*}
&~\E\bracBig{\frac{\indic{\treatment\paren{\select} = \groupt}\paren{\outcome - \mu_\groupt}}{\P\paren{\treatment\paren{\select} = \groupt\mid\covariates}} - \frac{\indic{\treatment\paren{\select} = \groupc}\paren{\outcome - \mu_\groupc}}{\P\paren{\treatment\paren{\select} = \groupc\mid\covariates}}}\\
= &~ \E\bracBig{\frac{\P\paren{\treatment\paren{\select} = \groupt \mid \covariates} \E\brac{\outcome_\groupt - \mu_\groupt\mid\covariates}}{\P\paren{\treatment\paren{\select} = \groupt\mid\covariates}} - \frac{\P\paren{\treatment\paren{\select} = \groupc \mid \covariates} \E\brac{\outcome_\groupc - \mu_\groupc\mid\covariates}}{\P\paren{\treatment\paren{\select} = \groupc\mid\covariates}}}\\
= &~ \E\bracBig{  \E\brac{\outcome_\groupt - \mu_\groupt - \outcome_\groupc + \mu_\groupc\mid\covariates}}\\
= &~ 0.
\end{align*}
and
\begin{align*}
&~\Var\bracBig{\frac{\indic{\treatment\paren{\select} = \groupt}\paren{\outcome - \mu_\groupt}}{\P\paren{\treatment\paren{\select} = \groupt\mid\covariates}} - \frac{\indic{\treatment\paren{\select} = \groupc}\paren{\outcome - \mu_\groupc}}{\P\paren{\treatment\paren{\select} = \groupc\mid\covariates}}}\\
= &~ \E\parenBig{\bracBig{\frac{\indic{\treatment\paren{\select} = \groupt}\paren{\outcome - \mu_\groupt}}{\P\paren{\treatment\paren{\select} = \groupt\mid\covariates}} - \frac{\indic{\treatment\paren{\select} = \groupc}\paren{\outcome - \mu_\groupc}}{\P\paren{\treatment\paren{\select} = \groupc\mid\covariates}}}^2}\\
= &~ \E\parenBig{\frac{\P\paren{\treatment\paren{\select} = \groupt\mid \covariates}\E\brac{\paren{\outcome_\groupt - \mu_\groupt}^2\mid\covariates}}{\P\paren{\treatment\paren{\select} = \groupt\mid\covariates}^2} + \frac{\P\paren{\treatment\paren{\select} = \groupc\mid \covariates}\E\brac{\paren{\outcome_\groupc - \mu_\groupc}^2\mid\covariates}}{\P\paren{\treatment\paren{\select} = \groupc\mid\covariates}^2}}\\
= &~ \E\parenBig{\frac{{\paren{\outcome_\groupt - \mu_\groupt}^2}}{\P\paren{\treatment\paren{\select} = \groupt\mid\covariates}} + \frac{\paren{\outcome_\groupc - \mu_\groupc}^2}{\P\paren{\treatment\paren{\select} = \groupc\mid\covariates}}}.
\end{align*}
Use that $\datasizeall/\datasize \converge{} \P\paren{\select\neq\emptyset}$.
We get that
\begin{equation}
\sqrt{\datasizeall} \parenBig{\hat\tau_\fed - \tau} \converge{d} \normal(0,v_\fed^2),
\end{equation}
with
\begin{equation}
v^2_\fed = \P\paren{{\select}\neq\emptyset}\E\parenBig{\frac{{\paren{\outcome_\groupt - \mu_\groupt}^2}}{\P\paren{\treatment\paren{\select} = \groupt\mid\covariates}} + \frac{\paren{\outcome_\groupc - \mu_\groupc}^2}{\P\paren{\treatment\paren{\select} = \groupc\mid\covariates}}}.
\end{equation}
To compare $v_\fed^2$ and $v_\pooled^2$, we first prove Lemma \ref{lem:concave}.
\begin{lemma}\label{lem:concave}
The function $f(t_1,\ldots,t_\clientnum) = \paren{t_1^\inv + \ldots + t_\clientnum^\inv}^\inv$ with $t_\indiv > 0,~\indiv=1,\ldots,\clientnum$ is concave.
\end{lemma}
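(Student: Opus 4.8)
The plan is to represent $f$ as a pointwise infimum of affine functions of $\paren{t_1,\ldots,t_\clientnum}$, exploiting the fact that a pointwise infimum of a family of concave (here, linear) functions is concave. The key identity I would establish first, writing $S=\sum_\indiv t_\indiv^\inv$, is
\[
\paren{\sum_{\indiv=1}^\clientnum t_\indiv^\inv}^\inv=\min_{\lambda_\indiv>0,\ \sum_\indiv \lambda_\indiv=1}\ \sum_{\indiv=1}^\clientnum \lambda_\indiv^2\, t_\indiv .
\]
To verify it, for fixed $t$ in the positive orthant I minimize the right-hand side over the probability simplex using a Lagrange multiplier for $\sum_\indiv \lambda_\indiv=1$. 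Stationarity gives $2\lambda_\indiv t_\indiv=\mu$, hence $\lambda_\indiv\propto t_\indiv^\inv$; fixing the constant by the normalization gives $\lambda_\indiv = t_\indiv^\inv/S$, and substituting back returns precisely $S^\inv$. Because $\lambda\mapsto\sum_\indiv \lambda_\indiv^2 t_\indiv$ is convex on the simplex, this stationary point is the global minimizer, so the identity holds.

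Given the identity, concavity is immediate: on the positive orthant $f$ coincides with the infimum over $\lambda$ of the linear maps $t\mapsto\sum_\indiv \lambda_\indiv^2 t_\indiv$. Each such map is linear, hence concave, in $t$, the positive orthant is convex, and a pointwise infimum of concave functions is concave; therefore $f$ is concave there, which is the claim.

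As a self-contained alternative I would instead verify negative semidefiniteness of the Hessian directly. With $f=S^\inv$, a short computation gives, for any direction $x$,
\[
x^\top \nabla^2 f\, x = 2S^{-3}\bracBig{\,\paren{\sum_\indiv t_\indiv^{-2}x_\indiv}^2-S\sum_\indiv t_\indiv^{-3}x_\indiv^2\,}.
\]
The bracketed expression is nonpositive by the Cauchy--Schwarz inequality applied to the vectors with entries $t_\indiv^{-1/2}$ and $t_\indiv^{-3/2}x_\indiv$, since $\paren{\sum_\indiv t_\indiv^{-2}x_\indiv}^2\le\paren{\sum_\indiv t_\indiv^\inv}\paren{\sum_\indiv t_\indiv^{-3}x_\indiv^2}=S\sum_\indiv t_\indiv^{-3}x_\indiv^2$. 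Hence $x^\top\nabla^2 f\,x\le0$ for all $x$ and $f$ is concave.

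The only real obstacle is organizational rather than conceptual: in the variational route one must produce the correct representation (the factor $\lambda_\indiv^2$ is exactly what makes the minimizer reproduce the harmonic-mean form), and in the Hessian route the work is the algebraic collapse of the quadratic form into a single Cauchy--Schwarz gap. I would present the variational proof as the main argument, since it avoids second-derivative bookkeeping and makes the source of concavity transparent.
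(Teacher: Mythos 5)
Your proposal is correct, and your main argument takes a genuinely different route from the paper's. The paper proves the lemma by brute force on the Hessian: it computes $\nabla^2 f$, factors $\tfrac{1}{2}\left(\sum_r t_r^{-1}\right)^3\nabla^2 f$ as a rank-one term $vv^\top$ with $v=(t_1^{-2},\dots,t_K^{-2})^\top$ minus the positive-definite diagonal matrix $\left(\sum_r t_r^{-1}\right)\mathrm{diag}(t_1^{-3},\dots,t_K^{-3})$, and then argues negative semidefiniteness by checking the quadratic form along the single direction $v$ using AM--GM. Your variational route --- the identity $f(t)=\min\{\sum_i \lambda_i^2 t_i : \lambda_i>0,\ \sum_i\lambda_i=1\}$, verified via the interior minimizer $\lambda_i = t_i^{-1}/\sum_r t_r^{-1}$ and convexity in $\lambda$, followed by the observation that a pointwise minimum of linear functions of $t$ is concave --- avoids all second-derivative bookkeeping, makes the mechanism of concavity transparent, and generalizes readily; this is a cleaner proof than the paper's. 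Your Hessian alternative is essentially the paper's computation, but it is in fact \emph{more} complete than the paper's argument: you bound $x^\top\nabla^2 f\,x$ for every direction $x$ by Cauchy--Schwarz, whereas the paper's reduction to the single direction $v$ is not sufficient on its own --- a matrix of the form (rank-one PSD) plus (negative definite diagonal) can fail to be negative semidefinite even when its quadratic form is nonpositive along that particular eigenvector of the rank-one part (e.g.\ $v=(1,1)^\top$ with diagonal $-\mathrm{diag}(4,0.1)$ gives $v^\top(vv^\top - \mathrm{diag}(4,0.1))v=-0.1\le 0$ yet the $(2,2)$ entry is $0.9>0$). So either of your two arguments would stand as a valid replacement, and the second one quietly repairs a logical gap in the paper's own proof.
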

\begin{proof}
We directly prove it by showing that its hessian matrix is negative semi-definite. Denoting $\nabla^2 f = \set{H_{kj}}_{1\leq k,j\leq \clientnum}$, we have that
\begin{equation}
H_{kj} = \left\{
\begin{matrix}
\frac{2t_k^{-4}}{\paren{\sum_{\clientp=1}^\clientnum t_\clientp^\inv}^3} - \frac{2 t_k^{-3}}{\paren{\sum_{r=1}^\clientnum t_r^\inv}^2} & \text{if } k=j\\
\frac{2t_k^{-2} t_j^{-2}}{\paren{\sum_{\clientp=1}^\clientnum t_\clientp^\inv}^3} & \text{if } k\neq j.
\end{matrix}
\right.
\end{equation}
By taking out the common factor we get that
\begin{equation}
\frac{1}{2} \parenBig{\sum_{r=1}^\clientnum t_r^\inv}^3 \nabla^2 f\paren{t_1,\ldots,t_\clientnum} = \left(
\begin{matrix}
t_1^{-2} \\ \vdots \\ t_\clientnum^{-2}
\end{matrix}
\right) 
\left(
\begin{matrix}
t_1^{-2} & \ldots & t_\clientnum^{-2}
\end{matrix}
\right) - 
\paren{\sum_{r=1}^\clientnum t_r^\inv} 
\left(
\begin{matrix}
t_1^{-3} & & &\\
& t_2^{-3} & &\\
& & \ddots & \\
& & & t_\clientnum^{-3}
\end{matrix}
\right).
\end{equation}
The second term is negative definite.
The first term only gets one non-zero eigenvalue, with the corresponding eigenvector $v = \paren{t_1^{-2},\ldots,t_\clientnum^{-2}}$. We only need to verify that $v^\top \nabla^2 f v \leq 0$. We have that
\begin{align*}
\frac{1}{2} \parenBig{\sum_{r=1}^\clientnum t_r^\inv}^3 v^\top \nabla^2 f\paren{t_1,\ldots,t_\clientnum} v^\top & = \parenBig{\sumclient t_\client^{-4}}^2 - \paren{\sumclient t_\client^\inv}\paren{\sumclient t_\client^{-7}}\\
& = 2 \sum_{k<j} t_k^{-4}t_j^{-4} - \sum_{k<j} \parenBig{t_k^\inv t_j^{-7} +  t_k^{-7} t_j^\inv}
& \leq 0,
\end{align*}
where the last line is by using the AM-GM inequality and getting that $t_k^\inv t_j^{-7} + t_j^\inv t_k^{-7} \geq 2 t_k^{-4} t_j^{-4}$. This shows that $\nabla^2 f$ is negative semi-definite, which means that $f$ is concave.
\end{proof}
We use Jensen inequality and get that
\begin{align*}
v^2_\pooled & = \frac{2}{\datasize \sumclient \setBig{\E\bracBig{\frac{\paren{\outcome_\groupt - \mu_1}^2}{2\modelprop^\selecttrtsup\paren{\covariates}}} + \E\bracBig{ \frac{\paren{\outcome_\groupc - \mu_0}^2}{2\modelprop^\selectctrsup\paren{\covariates}}}}^\inv} \\
& \geq \frac{1}{\datasize \sumclient \setBig{\E\bracBig{\frac{\paren{\outcome_\groupt - \mu_1}^2}{\modelprop^\selecttrtsup\paren{\covariates}}}}^\inv} + \frac{1}{\datasize \sumclient \setBig{\E\bracBig{ \frac{\paren{\outcome_\groupc - \mu_0}^2}{\modelprop^\selectctrsup\paren{\covariates}}}}^\inv} \\
& \geq \E\bracBig{\frac{1}{\datasize \sumclient \setBig{\frac{\paren{\outcome_\groupt - \mu_1}^2}{\modelprop^\selecttrtsup\paren{\covariates}}}^\inv}} + \E\bracBig{\frac{1}{\datasize \sumclient  \setBig{\frac{\paren{\outcome_\groupc - \mu_0}^2}{\modelprop^\selectctrsup\paren{\covariates}}}^\inv}} \\
& = \E\bracBig{\frac{\paren{\outcome_\groupt - \mu_1}^2}{\datasize \sumclient \modelprop^\selecttrtsup\paren{\covariates}}} + \E\bracBig{\frac{\paren{\outcome_\groupc - \mu_0}^2}{\datasize \sumclient  \modelprop^\selectctrsup\paren{\covariates}}} \\
& = v_{\fed}^2,
\end{align*}
where we use Jensen twice at the second and the third lines. 
\end{proof}

\subsection{Proof of Theorem \ref{thm:balancing}}
\begin{proof}
The proof relies on the definition of independence and Assumption \ref{ass:unconfoundedness}. Using $p_{y,s}$ to denote the joint density function for $\paren{\outcome(1),\outcome(0)}$ and $\select$, and $p_y$, $\P_s$ as their marginal distributions, we have that
\begin{align*}
p_{y,s}\set{\paren{\outcome(1),\outcome(0)} , \select\mid\covariates} & = p_{y}\set{\paren{\outcome(1),\outcome(0)}\mid\covariates}\P_s\set{ \select\mid\covariates}.
\end{align*}
Take expectation conditional on $\P\paren{\select = \paren{\client,\treatmentdata}\mid\covariates} = \modelprop^\clientzsup\paren{\covariates}$, and use the tower property of cognitional expectation, we get that, for the L.H.S., 
\begin{align*}
\E\bracBig{p_{y,s}\set{\paren{\outcome(1),\outcome(0)} , \select\mid\covariates}~\Big|~ \modelprop^\clientzsup\paren{\covariates}} = p_{y,s}\setBig{\paren{\outcome(1),\outcome(0)} , \select\mid\modelprop^\clientzsup\paren{\covariates}};
\end{align*}
for the R.H.S., 
\begin{align*}
\E\bracBig{p_{y}\set{\paren{\outcome(1),\outcome(0)} \mid\covariates} \P_s\set{ \select\mid\covariates} ~\Big|~ \modelprop^\clientzsup\paren{\covariates}} & = \E\bracBig{p_{y}\set{\paren{\outcome(1),\outcome(0)} \mid\covariates}  ~\Big|~ \modelprop^\clientzsup\paren{\covariates}} \modelprop^\clientzsup\paren{\covariates}\\
& = p_{y}\setBig{\paren{\outcome(1),\outcome(0)} \mid\modelprop^\clientzsup\paren{\covariates}}\modelprop^\clientzsup\paren{\covariates}\\
& = p_{y}\setBig{\paren{\outcome(1),\outcome(0)} \mid\modelprop^\clientzsup\paren{\covariates}}\P_s\setBig{\select\mid\modelprop^\clientzsup\paren{\covariates}}.
\end{align*}
This shows that 
\[
\paren{\outcome(1),\outcome(0)} \indep \select \mid \modelprop^\clientzsup\paren{\covariates}.
\]
For the second part, similarly, using tower property, we have that
\begin{align*}
\E\bracBig{p_{y,z}\set{\paren{\outcome(1),\outcome(0)} , \treatment(\select) \mid\covariates}~\Big|~ \P\brac{\treatment(\select)\mid\covariates}} = p_{y,s}\setBig{\paren{\outcome(1),\outcome(0)} , \select\mid\P\brac{\treatment(\select)\mid\covariates}};
\end{align*}
for the R.H.S., 
\begin{align*}
\E\bracBig{p_{y}\set{\paren{\outcome(1),\outcome(0)} \mid\covariates} \P_z\set{ \treatment(\select)\mid\covariates} ~\Big|~ \P_z\set{ \treatment(\select)\mid\covariates}} & = \E\bracBig{p_{y}\set{\paren{\outcome(1),\outcome(0)} \mid\covariates}  ~\Big|~ \modelprop^\clientzsup\paren{\covariates}} \modelprop^\clientzsup\paren{\covariates}\\
& = p_{y}\setBig{\paren{\outcome(1),\outcome(0)} \mid\P_z\set{ \treatment(\select)\mid\covariates}}\P_z\set{ \treatment(\select)\mid\covariates}\\
& = p_{y}\setBig{\paren{\outcome(1),\outcome(0)} \mid\P_z\set{ \treatment(\select)\mid\covariates}}
\P\setBig{\treatment(\select)\mid\P_z\set{ \treatment(\select)\mid\covariates}}.
\end{align*}
This shows that 
\[
\paren{\outcome(1),\outcome(0)} \indep \treatment(\select) \mid \P\set{ \treatment(\select)\mid\covariates}.
\]

\end{proof}

\subsection{Proof of Proposition \ref{prop:identification} and \ref{prop:est-acc}}
We only provide the proof for Proposition \ref{prop:identification}. For the proof of Proposition \ref{prop:est-acc}, see \citet{zhao_entropy_2017} and \citet{lin_estimation_2021}.
\begin{proof}
Suppose that another distribution ${\paren{\select^\prime,\covariates^\prime,\outcome_\groupt^\prime,\outcome_\groupc^\prime}}$ generates the same observed distribution $p(\covariatesdata),$ $p(\covariatesdata\mid \select = \paren{\client,\groupt})$, and $p(\covariatesdata\mid \select = \paren{\client,\groupc})$ for all $\client$. Using Bayes' theorem,
\begin{align*}
\P\paren{\select^\prime = \paren{\client,\groupt}\mid \covariates^\prime} & = \frac{p\paren{\covariatesdata^\prime \mid \select^\prime = \paren{\client,\groupt}}\P\paren{\select^\prime = \paren{\client,\groupt}}}{p\paren{\covariatesdata^\prime}}\\
& = \frac{p\paren{\covariatesdata \mid \select = \paren{\client,\groupt}}\P\paren{\select^\prime = \paren{\client,\groupt}}}{p\paren{\covariatesdata}}\\
& = \P\paren{\select = \paren{\client,\groupt}\mid\covariates}\frac{\P\paren{\select^\prime = \paren{\client,\groupt}}}{\P\paren{\select = \paren{\client,\groupt}}}.
\end{align*}
This shows that $\modelprop^\clientzsup\paren{\covariates}$ is identifiable up to a constant, but $\P\paren{\select = \emptyset}$ is not identifiable.
\end{proof}

\subsection{Proof of Theorem \ref{thm:dml}}
We first give the formulas for $\hat\mu_{\groupc}^\clientsup$:
\begin{align*}
&
\hat{\delta}_{\pooled-\dml,0}^\clientsup = \frac{1}{\hat\datasize_{\pooled,0}^\clientsup}\sum_{\indiv\in\dataset^\clientsup} 
\frac{\paren{\treatment_\indiv} \brac{\outcome_\indiv - \modeltreated\paren{\covariates_\indiv}}}{\modelprop^\clienttrtsup\paren{\covariates_\indiv}},\quad
\hat{\delta}_{\fed-\dml,1}^\clientsup = \frac{1}{\hat\datasize_{\collab,1}^\clientsup}\sum_{\indiv\in\dataset^\clientsup} 
\frac{\paren{1 - \treatment_\indiv} \brac{\outcome_\indiv - \modeltreated\paren{\covariates_\indiv}}}{\sumtrtweights}.
\end{align*}
\begin{proof}
Consider the following estimator using the true outcome and propensity score models:
\begin{equation}
\tilde\tau_{\collab-\dr} = \frac{1}{\datasize^\targetsup}\sum_{\indiv\in\datasettarget} \bracBig{\modeltreated\paren{\covariates_\indiv^\targetsup} - \modelcontrol\paren{\covariates_\indiv^\targetsup}} + \frac{1}{\hat\datasize_{\collab}} \sumclient  \hat\datasize_\fed^\clientsup \tilde\taudr_{\fed}^\clientsup,
\end{equation}
with 
\begin{equation}
\tilde\taudr^\clientsup_{\fed-\textsc{dr}} = \frac{1}{\hat\datasize_\fed^\clientsup}\sum_{\indiv\in\clientdataset^\clientsup}\bracBig{\frac{\treatment\paren{\select_\indiv} \paren{\outcome_\indiv - \modeltreated\paren{\covariates_\indiv}}}{\sumtrtweights} - \frac{\paren{1-\treatment\paren{\select_\indiv}} \paren{\outcome_\indiv - \modelcontrol\paren{\covariates_\indiv}}}{\sumctrweights}},
\end{equation}
where $\modeltreated,$ $\modelcontrol$, and $\modelprop$ are true models. We first prove Lemma \ref{lem:coupling}.
\begin{lemma}\label{lem:coupling}
We have that
    \begin{equation}
\sqrt{\datasizeall}\paren{\hat{\tau}_{\fed-\dr} - \tilde{\tau}_{\fed-\dr}} \converge{d} 0.
\end{equation}
\end{lemma}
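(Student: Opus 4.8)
The plan is to run the standard cross-fitting argument for \dml~estimators, adapted to the \emph{decoupled} structure in which the outcome models are averaged over $\datasettarget$ while the inverse-propensity residual corrections are formed on the source datasets $\dataset^\clientsup$. First I would use Lemma \ref{lem:hajek-to-ht} to pass from the self-normalized (Hájek) numerators to their Horvitz--Thompson analogues, replacing the random denominators $\hat\datasize^\clientsup_{\fed,1}$ and $\hat\datasize^\clientsup_{\fed,0}$ (and hence the weights $\hat\ratio^\clientsup_{\fed,z}\propto\hat\datasize^\clientsup_{\fed,z}$) by their deterministic limits up to $o_P(1/\sqrt{\datasizeall})$; since these are continuous functions of sample averages that converge at the nuisance rates, the continuous mapping theorem confines their effect to higher-order terms. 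After this reduction, $\sqrt{\datasizeall}(\hat\tau_{\fed-\dr}-\tilde\tau_{\fed-\dr})$ splits into a target-dataset piece, the rescaled average over $\datasettarget$ of $(\hat\modeltreated-\modeltreated)(\covariates)-(\hat\modelcontrol-\modelcontrol)(\covariates)$, and source-correction pieces in which both the outcome residual $\outcome-\modeltreated$ and the aggregated propensity $e_1(\covariates):=\P(\treatment\paren{\select}=1\mid\covariates)=\sumrvtrtweights$ are perturbed from the truth.

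Next I would decompose each piece into a centered (empirical-process) part and a conditional-mean (bias) part. For the centered parts, the independence assumed in the first condition of Theorem \ref{thm:dml} (obtained by sample splitting) lets me condition on the fitted nuisances, so each centered sum is an i.i.d.\ average of mean-zero summands whose conditional variance is governed by the squared estimation errors, of order $\datasize^{-2\xi_\modeloutcome}$ for the outcome contributions and $\datasize^{-2\xi_\modelprop}$ for the propensity contributions (boundedness, the third condition of Theorem \ref{thm:dml}, keeps $\hat e_1$ away from $0$). Each centered part is therefore $O_P(\datasize^{-1/2-\min(\xi_\modeloutcome,\xi_\modelprop)})$, and multiplying by $\sqrt{\datasizeall}$ still yields $o_P(1)$.

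The crux is the conditional-mean part. Combining the bias of the target piece with that of the source corrections and conditioning on $\covariates$, I would use Assumption \ref{ass:unconfoundedness} to write $\E[\outcome\mid\covariates,\treatment\paren{\select}=1]=\modeltreated(\covariates)$, so that the residual $\E[\outcome\mid\covariates,\treatment\paren{\select}=1]-\modeltreated(\covariates)$ vanishes at the true model; this is exactly what annihilates the term linear in the propensity error and leaves only the product $\E[(\hat\modeltreated-\modeltreated)(\hat e_1-e_1)/\hat e_1]$ (and its $\mu_0$ analogue). This is the Neyman-orthogonality cancellation. By Cauchy--Schwarz the surviving product is bounded by $\ltwo{\hat\modeltreated-\modeltreated}\cdot\ltwo{\hat e_1-e_1}=O(\datasize^{-(\xi_\modeloutcome+\xi_\modelprop)})$, so $\sqrt{\datasizeall}$ times the bias is $O(\datasize^{1/2-\xi_\modeloutcome-\xi_\modelprop})=o(1)$, using the product-error rate condition of Theorem \ref{thm:dml} (equivalently $\xi_\modeloutcome+\xi_\modelprop>1/2$). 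Adding the three contributions gives $\sqrt{\datasizeall}(\hat\tau_{\fed-\dr}-\tilde\tau_{\fed-\dr})=o_P(1)$, which is the claim.

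The main obstacle is the bookkeeping forced by the decoupling: because the outcome-model average lives on $\datasettarget$ while the propensity correction lives on the $\dataset^\clientsup$, the cancellation of the linear term is not automatic as in textbook \dr~and must be checked by matching the target average $\E_\target[\hat\modeltreated-\modeltreated]$ against the expectation of the correction term. I would verify this with the same conditional-expectation computation used earlier to establish $\E[\hat\mu_{\fed,\groupt}]=\mu_1$, now applied to the perturbations $\hat\modeltreated-\modeltreated$ and $\hat e_1-e_1$, which is where the $1/\hat e_1$ factors and the target/source expectations must be reconciled term by term.
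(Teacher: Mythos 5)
Your proposal is correct, and it shares the paper's overall skeleton: reduce to Horvitz--Thompson form via Lemma \ref{lem:hajek-to-ht}, decompose the difference, and kill each piece using the rate hypotheses. The genuine difference is in how the propensity-error terms are handled, and here your route is the one that actually delivers the claimed rate. The paper's decomposition keeps the residual $\outcome_\indiv - \hat\modeltreated\paren{\covariates_\indiv}$ intact inside its terms $\Delta_1,\Delta_2$ and applies a single Cauchy--Schwarz step; but the second factor produced there, $\sqrt{\E\set{\brac{\outcome(1)-\modeltreated\paren{\covariates}}^2}}$, is an $O(1)$ outcome variance, not an $O(\datasize^{-\xi_\modeloutcome})$ quantity, so the displayed chain only gives $\E\brac{\sqrt{\datasize}\abs{\Delta_1}} = O(\datasize^{1/2-\xi_\modelprop})$ rather than the asserted $\datasize^{1/2-\xi_\modeloutcome-\xi_\modelprop}$; as written it would need $\xi_\modelprop>1/2$ on its own. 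Your extra splitting $\outcome - \hat\modeltreated = \paren{\outcome - \modeltreated} + \paren{\modeltreated - \hat\modeltreated}$ is exactly what repairs this: the first piece times the propensity error is conditionally centered (by the sample-splitting independence and $\E\brac{\outcome\mid\covariates,\treatment\paren{\select}=1} = \modeltreated\paren{\covariates}$), so it contributes $O_P(\datasize^{-1/2-\xi_\modelprop})$, while the second piece is the Neyman-orthogonality product, bounded via Cauchy--Schwarz by $O(\datasize^{-\xi_\modeloutcome-\xi_\modelprop})$, hence $o(\datasize^{-1/2})$ under the sum condition $\xi_\modeloutcome+\xi_\modelprop>1/2$ --- which is the reading of the theorem's rate hypothesis under which the lemma holds. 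The paper's remaining terms $\Delta_3$--$\Delta_6$ coincide with your centered outcome-error terms, only bounded with Bernstein's inequality instead of a conditional-variance argument; both work.

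Two details to pin down when writing it out. First, the cancellation you flag as the ``crux'' --- matching the target-side average of $\hat\modeltreated - \modeltreated$ against the source-side linear term --- is correct and is the same mechanism by which the paper's $\Delta_3$ and $\Delta_5$ (respectively $\Delta_4$ and $\Delta_6$) are centered against a common expectation: it rests on the balancing identity $\E\brac{\treatment\paren{\select} f\paren{\covariates}/\P\paren{\treatment\paren{\select}=1\mid\covariates}} = \E\brac{f\paren{\covariates}}$ applied to $f = \modeltreated - \hat\modeltreated$. Second, you need $\sumesttrtweights$ bounded away from zero to control the $1/\hat e_1$ factors; the paper assumes this implicitly through its $\pslower^{-2}$ bounds, and it does not literally follow from ``the models are bounded,'' so state it explicitly.
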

\begin{proof}
Similar to the proof of Theorem \ref{thm:fed-hajek}, using Lemma \ref{lem:hajek-to-ht}, we have that
\begin{equation}\label{eqn:dr-hajek-to-ht}
\sqrt{\datasize}\paren{\hat\tau_{\fed-\dr} - \hat{\tau}_{\fed-\dr-\HT}} \converge{d} 0\quad\text{and}\quad \sqrt{\datasize}\paren{\tilde\tau_{\fed-\dr} - \tilde{\tau}_{\fed-\dr-\HT}} \converge{d} 0,
\end{equation}
with 
\begin{align*}
&~\hat\tau_{\fed-\dr-\HT}\\
 = &~ \frac{1}{\datasize^\targetsup} \sum_{\indiv \in \datasettarget}\bracBig{\hat\modeltreated\paren{\covariates} - \hat\modelcontrol\paren{\covariates}}  + \frac{1}{\datasize}\sum_{\indiv=1}^\datasize \bracBig{\frac{\treatment\paren{\select_\indiv} \paren{\outcome_\indiv - \hat\modeltreated\paren{\covariates_\indiv}}}{\sumesttrtweights} - \frac{\paren{1-\treatment\paren{\select_\indiv}} \paren{\outcome_\indiv - \hat\modelcontrol\paren{\covariates_\indiv}}}{\sumestctrweights}},
 \end{align*}
and
 \begin{align*}
&~\tilde\tau_{\fed-\dr-\HT}\\
 = &~ \frac{1}{\datasize^\targetsup} \sum_{\indiv \in \datasettarget}\bracBig{\modeltreated\paren{\covariates} - \modelcontrol\paren{\covariates}}  + \frac{1}{\datasize}\sum_{\indiv=1}^\datasize\bracBig{\frac{\treatment\paren{\select_\indiv}\paren{\outcome_\indiv - \modeltreated\paren{\covariates}}}{\P\paren{\treatment\paren{\select_\indiv} = \groupt\mid\covariates_\indiv}} - \frac{\paren{\groupt - \treatment\paren{\select_\indiv}}\paren{\outcome_\indiv - \modelcontrol\paren{\covariates}}}{\P\paren{\treatment\paren{\select_\indiv} = \groupc\mid\covariates_\indiv}}}.
\end{align*}
We decompose $\hat{\tau}_{\fed-\dr-\HT} - \tilde{\tau}_{\fed-\dr-\HT}$
\begin{align*}
&~\hat{\tau}_{\fed-\dr-\HT} - \tilde{\tau}_{\fed-\dr-\HT} \\
= &~ \frac{1}{\datasize} \sum_{\indiv=1}^\datasize\bracBig{{\frac{\treatment\paren{\select_\indiv}}{\P\paren{\treatment\paren{\select_\indiv} = \groupt\mid\covariates_\indiv}} - \frac{\treatment\paren{\select_\indiv}}{\sumesttrtweights}}}\paren{\outcome_\indiv - \hat\modeltreated\paren{\covariates_\indiv}} \\
&~ - \frac{1}{\datasize} \sum_{\indiv=1}^\datasize\bracBig{{\frac{1-\treatment\paren{\select_\indiv}}{\P\paren{\treatment\paren{\select_\indiv} = \groupc\mid\covariates_\indiv}} - \frac{1-\treatment\paren{\select_\indiv}}{\sumestctrweights}}}\paren{\outcome_\indiv - \hat\modelcontrol\paren{\covariates_\indiv}}\\
&~ + \frac{1}{\datasize} \sum_{\indiv=1}^\datasize \setBig{\bracBig{\frac{\treatment\paren{\select_\indiv}}{\P\paren{\treatment\paren{\select_\indiv} = \groupt\mid\covariates_\indiv}}}\bracBig{\modeltreated\paren{\covariates_\indiv} - \hat\modeltreated\paren{\covariates_\indiv}} - \E\bracBig{\modeltreated\paren{\covariates_\indiv} - \hat\modeltreated\paren{\covariates_\indiv}}}\\
&~ - \frac{1}{\datasize} \sum_{\indiv=1}^\datasize   \setBig{\bracBig{\frac{1-\treatment\paren{\select_\indiv}}{\P\paren{\treatment\paren{\select_\indiv} = \groupc\mid\covariates_\indiv}}}\bracBig{\modelcontrol\paren{\covariates_\indiv} - \hat\modelcontrol\paren{\covariates_\indiv}} - \E\bracBig{\modelcontrol\paren{\covariates_\indiv} - \hat\modelcontrol\paren{\covariates_\indiv}}}\\
&~ + \frac{1}{\datasize^\targetsup} \sum_{\indiv\in\datasettarget} \setBig{\bracBig{\hat\modeltreated\paren{\covariates_\indiv^\targetsup} - \modeltreated\paren{\covariates_\indiv^\targetsup}} - \E\bracBig{\hat\modeltreated\paren{\covariates} - \modeltreated\paren{\covariates}}}\\
&~+ \frac{1}{\datasize^\targetsup} \sum_{\indiv\in\datasettarget} \setBig{\bracBig{\hat\modelcontrol\paren{\covariates_\indiv^\targetsup} - \modelcontrol\paren{\covariates_\indiv^\targetsup}} - \E\bracBig{\hat\modelcontrol\paren{\covariates} - \modelcontrol\paren{\covariates}}}.
\end{align*}
Denote the above terms as $\Delta_1,\ldots,\Delta_6$. We bound each of them. 
\begin{align*}
\abs{\Delta_1} \leq &~ \sqrt{\frac{1}{\datasize}\sum_{\indiv=1}^\datasize \frac{\treatment\paren{\select_\indiv}^2 \bracBig{\P\paren{\treatment\paren{\select_\indiv} = \groupt\mid\covariates_\indiv} - \sumesttrtweights}^2}{\P\paren{\treatment\paren{\select_\indiv} = \groupt\mid\covariates_\indiv}^2\brac{\sumesttrtweights}^2}}\sqrt{\frac{1}{\datasize}\sum_{\indiv=1}^\datasize \bracBig{\outcome_\indiv\paren{1} - \modeltreated\paren{\covariates_\indiv}}^2} \\
\leq &~ \sqrt{\frac{\pslower^{-2}}{\datasize}\sum_{\indiv=1}^\datasize  \bracBig{\P\paren{\treatment\paren{\select_\indiv} = \groupt\mid\covariates_\indiv} - \sumesttrtweights}^2}\sqrt{\frac{1}{\datasize}\sum_{\indiv=1}^\datasize \bracBig{\outcome_\indiv\paren{1} - \modeltreated\paren{\covariates_\indiv}}^2},
\end{align*}
By taking expectation and applying Jensen inequality, we get that
\begin{align*}
\E\brac{\sqrt{\datasize}\abs{\Delta_1}} \leq &~ \sqrt{\datasize}\sqrt{ \pslower^{-2} \E\bracBig{\P\paren{\treatment\paren{\select} = 1\mid\covariates} - \sumrvtrtweights}^2} \sqrt{\E\setBig{\bracBig{\outcome\paren{1} - \modeltreated\paren{\covariates}}^2}} \\
\leq &~ {\datasize^{1/2 -\ratepower_{\modeloutcome} - \ratepower_{\modelprop}}} \converge{} 0,
\end{align*}
as $\datasize\to\infty$. This shows that $\sqrt{\datasize}\abs{\Delta_1} \converge{P}0$. We could prove that $\sqrt{\datasize}\Delta_2\converge{P}0$ with the same manner.
Using the Bernstein Inequality for bounded random variables \citep{vershynin_high-dimensional_2018}, we have that
\begin{align*}
\P\setBig{\sqrt{\datasize}\abs{\Delta_3} \geq t/\pslower} \leq &~ \P\setBig{\sqrt{\datasize}\absBig{\sum_{\indiv=1}^\datasize\bracBig{\modeltreated\paren{\covariates_\indiv} - \hat\modeltreated\paren{\covariates_\indiv}} - \E\bracBig{\modeltreated\paren{\covariates_\indiv} - \hat\modeltreated\paren{\covariates_\indiv}}}\geq t} \\
\leq &~ 2\exp\parenBig{-\frac{t^2/2}{\sum_{\indiv=1}^\datasize\Var\brac{\hat\modeltreated\paren{\covariates_\indiv} - \modeltreated\paren{\covariates_\indiv}}/\datasize + \boundy t/\paren{3\sqrt{\datasize}}}} \\
\leq &~ 2\exp\parenBig{-\frac{t^2/2}{\E\set{\brac{\hat\modeltreated\paren{\covariates} - \modeltreated\paren{\covariates}}^2} + \boundy t/\paren{3\sqrt{\datasize}}}} \\
\leq &~ 2\exp\parenBig{-\frac{t^2/2}{\datasize^{-2\xi_\modeloutcome} + \datasize^{-1/2}\boundy t/{3}}}  \converge{} 0,
\end{align*}
for any $t > 0$ and $\datasize\converge{}\infty$. This proves that $\sqrt{\datasize}\Delta_3 \converge{P} 0$. We could prove that $\sqrt{\datasize}\Delta_4\converge{P}0$ with the same manner. At last, for $\Delta_5$, we have that
\begin{align*}
\P\setBig{\sqrt{\datasize}\abs{\Delta_3} \geq t} \leq &~ 2\exp\parenBig{-\frac{t^2/2}{\sum_{\indiv\in\datasettarget}\Var\brac{\hat\modeltreated\paren{\covariates_\indiv} - \modeltreated\paren{\covariates_\indiv}}/\datasize^\targetsup + \boundy t/\paren{3\sqrt{\datasize^\targetsup}}}} \\
\leq &~ 2\exp\parenBig{-\frac{t^2/2}{\E\set{\brac{\hat\modeltreated\paren{\covariates} - \modeltreated\paren{\covariates}}^2} + \boundy t/\paren{3\sqrt{\datasize^\targetsup}}}} \\
\leq &~ 2\exp\parenBig{-\frac{t^2/2}{\paren{\datasize^\targetsup}^{-2\xi_\modeloutcome} + \paren{\datasize^\targetsup}^{-1/2}\boundy t/{3}}}  \converge{} 0,
\end{align*}
for any $t > 0$ and $\datasize \converge{}\infty$. This proves that $\sqrt{\datasize}\Delta_5 \converge{P} 0$. We could prove that $\sqrt{\datasize}\Delta_6\converge{P}0$ with the same manner. Combining $\Delta_1,\ldots,\Delta_6$ with Equation \eqref{eqn:dr-hajek-to-ht} together, we finish the proof of Lemma \ref{lem:coupling}.
\end{proof}
By Lemma \ref{lem:coupling}, we only need to consider $\tilde\tau_{\fed-\dr}$. Using CLT, we have that
\begin{equation}
\sqrt{\datasize}\paren{\tilde\tau_{\fed-\dr} - \tau}\converge{d} \normal\paren{0,v_{\collab-\dr}^2},
\end{equation}
since
\begin{align*}
\E\paren{\tilde{\tau}_{\fed-\dr}} = &~ \E\bracBig{\modeltreated\paren{\covariates^\targetsup} - \modelcontrol\paren{\covariates^\targetsup}} + \E \bracBig{\frac{\treatment\paren{\select}\paren{\outcome_\groupt - \modeltreated\paren{\covariates}}}{\P\paren{\treatment\paren{\select}=1\mid\covariates}} - \frac{\paren{1-\treatment\paren{\select}}\paren{\outcome_\groupc - \modelcontrol\paren{\covariates}}}{\P\paren{\treatment\paren{\select}=0\mid\covariates}}}\\
= &~ \E\brac{\outcome_\groupt - \outcome_\groupc},
\end{align*}
and with
\begin{align*}
v_{\collab-\dr}^2 =&~  \Var\bracBig{\sqrt{\datasize}\paren{\tilde{\tau}_{\fed-\dr} - \tau}} \\
= &~ \frac{\datasize}{\datasize^\targetsup}\Var\bracBig{\modeltreated\paren{\covariates^\targetsup} - \modelcontrol\paren{\covariates^\targetsup}} \\
&~ + \frac{\datasize}{\datasize}\Var \bracBig{\frac{\treatment\paren{\select}\paren{\outcome_\groupt - \modeltreated\paren{\covariates}}}{\P\paren{\treatment\paren{\select}=1\mid\covariates}} - \frac{\paren{1-\treatment\paren{\select}}\paren{\outcome_\groupc - \modelcontrol\paren{\covariates}}}{\P\paren{\treatment\paren{\select}=0\mid\covariates}}}\\
=&~  \tarsrcratio^\inv \E\bracBig{\brac{\modeltreated\paren{\covariates} - \modelcontrol\paren{\covariates}}^2} -\tarsrcratio^\inv \tau^2 + \E\bracBig{\frac{\paren{\outcome_\groupt - \modeltreated\paren{\covariates}}^2}{\P\paren{\treatment\paren{\select}=1\mid\covariates}} + \frac{\paren{\outcome_\groupc - \modelcontrol\paren{\covariates}}^2}{\P\paren{\treatment\paren{\select}=0\mid\covariates}}}.
\end{align*}
This proves Theorem \ref{thm:dml}.

\end{proof}

\subsection{Proof of Theorem \ref{thm:orthogonal}}
It is a direct result from  Appendix B.2 in \citet{foster_orthogonal_2020}.

\section{Experiments}
\subsection{Extra Details}
For the incorrect scenario, using subscript $i$ to denote different dimensions of $\covariates$, we let $\covariates_1^\prime = \covariates_1 \covariates_2$, $\covariates_2^\prime = \covariates_2^2$, and $\covariates_3^\prime = \covariates_3 / \max\set{1,\covariates_1^\prime}$. Using $\covariates^\prime$ as the regressors for misspecified propensity and outcome models.

\subsection{Ablations}
We provide the $KL-$MSE plots with misspecified models in Figure \ref{fig:ablations}. All experiment settings are the same with Figure \ref{fig:kl-mse}, but we perturb the models. We construct false models also with $\covariates^\prime$. The results show the same trend with Figure \ref{fig:kl-mse}. It is worth noting that in Figure \ref{fig:TF}, the \dml~estimator has similar variance with \poolipw~when $KL$ distance is large. We attribute this result to numerical instability, as we find there are occasionally divergent learned parameters due to extreme heterogeneity. The \collabipw~estimator maintains low MSE against heterogeneity.
\begin{figure}[ht]
    \centering
    \begin{subfigure}{0.31\textwidth}
        \centering
        \caption{True PS; False OM}
        \label{fig:TF}
        \includegraphics[width=\linewidth]{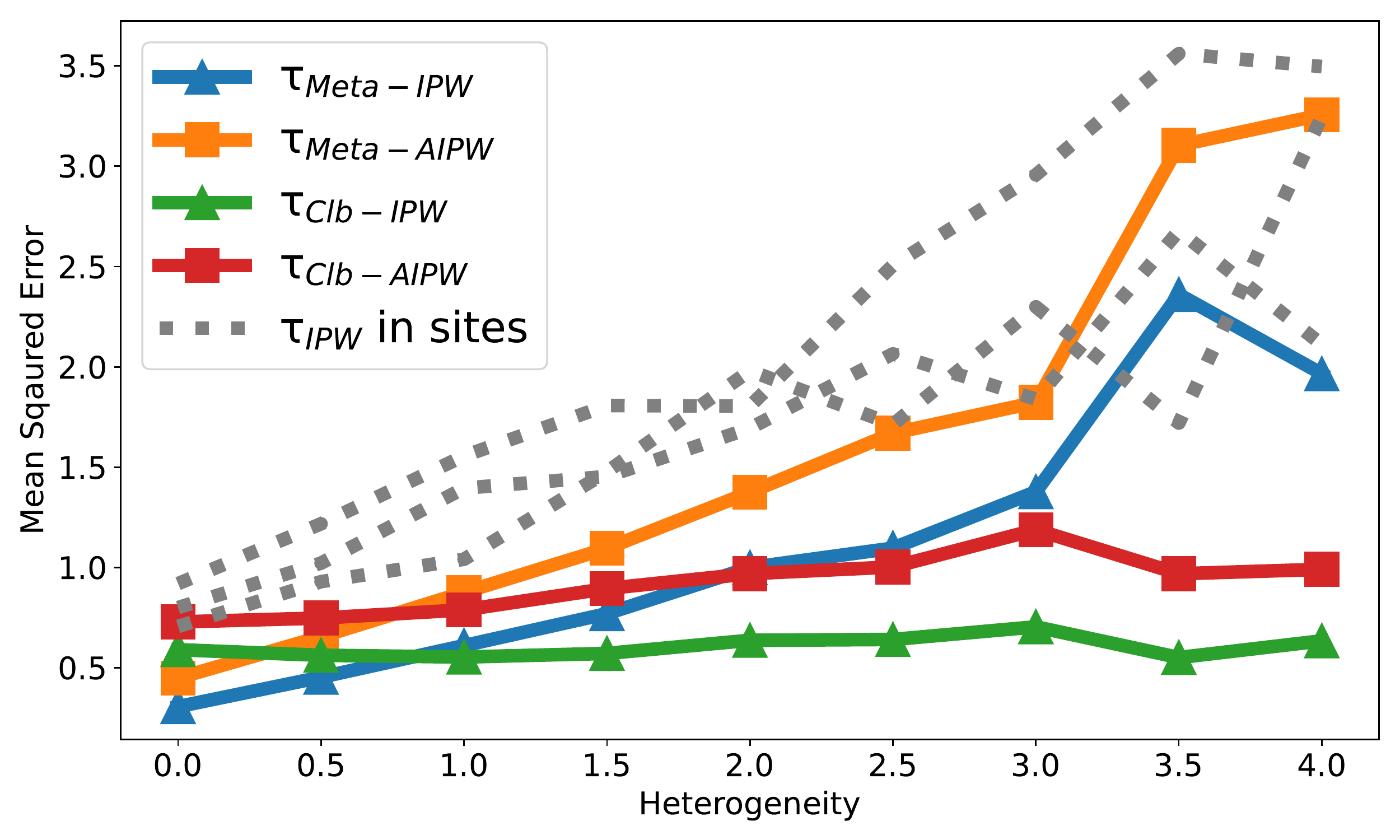}
        \end{subfigure}
    \begin{subfigure}{0.31\textwidth}
        \centering
        \caption{False PS; True OM}
        \label{fig:FT}
        \includegraphics[width=\linewidth]{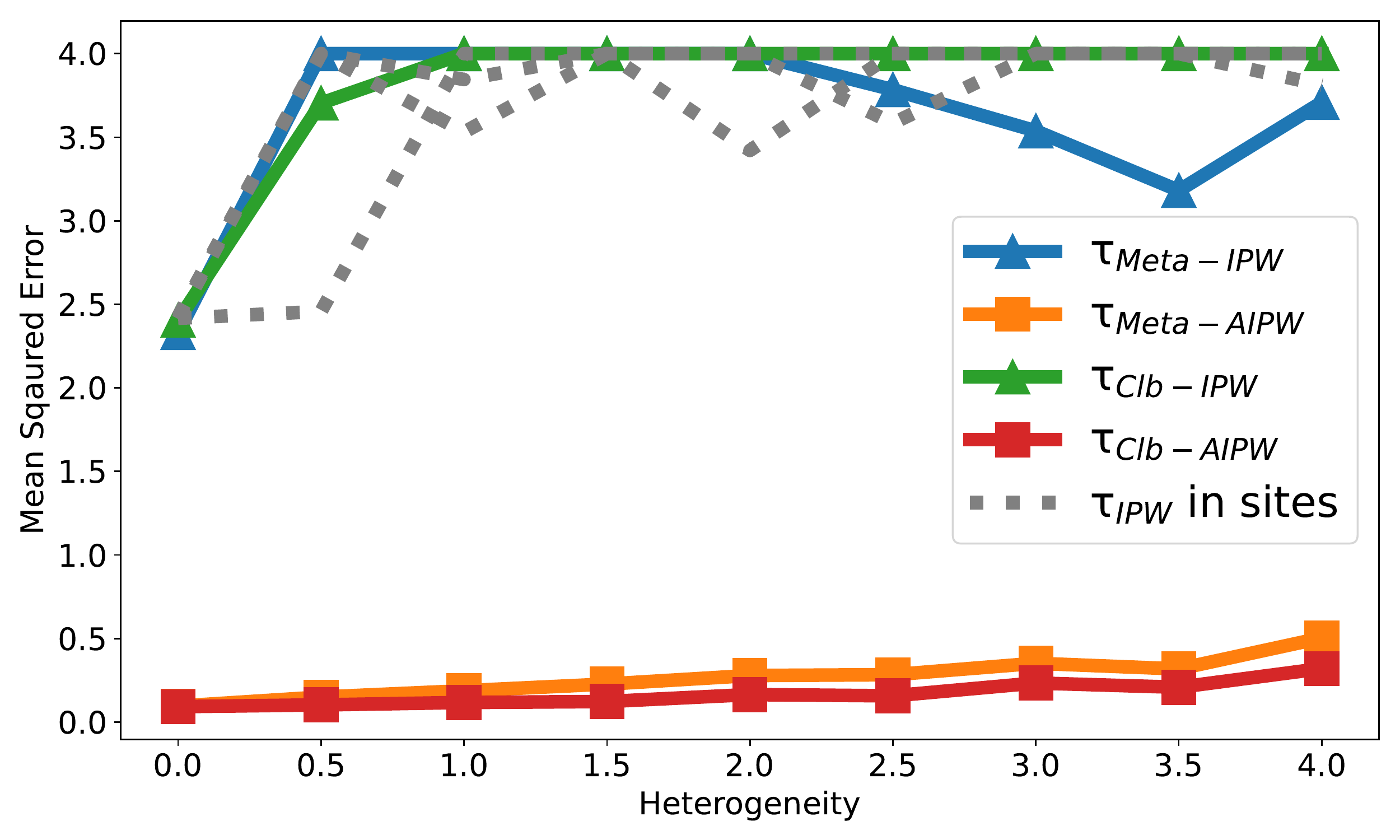}
    \end{subfigure}
    \begin{subfigure}{0.31\textwidth}
        \centering
        \caption{False PS; False OM}
        \label{fig:FF}
        \includegraphics[width=\linewidth]{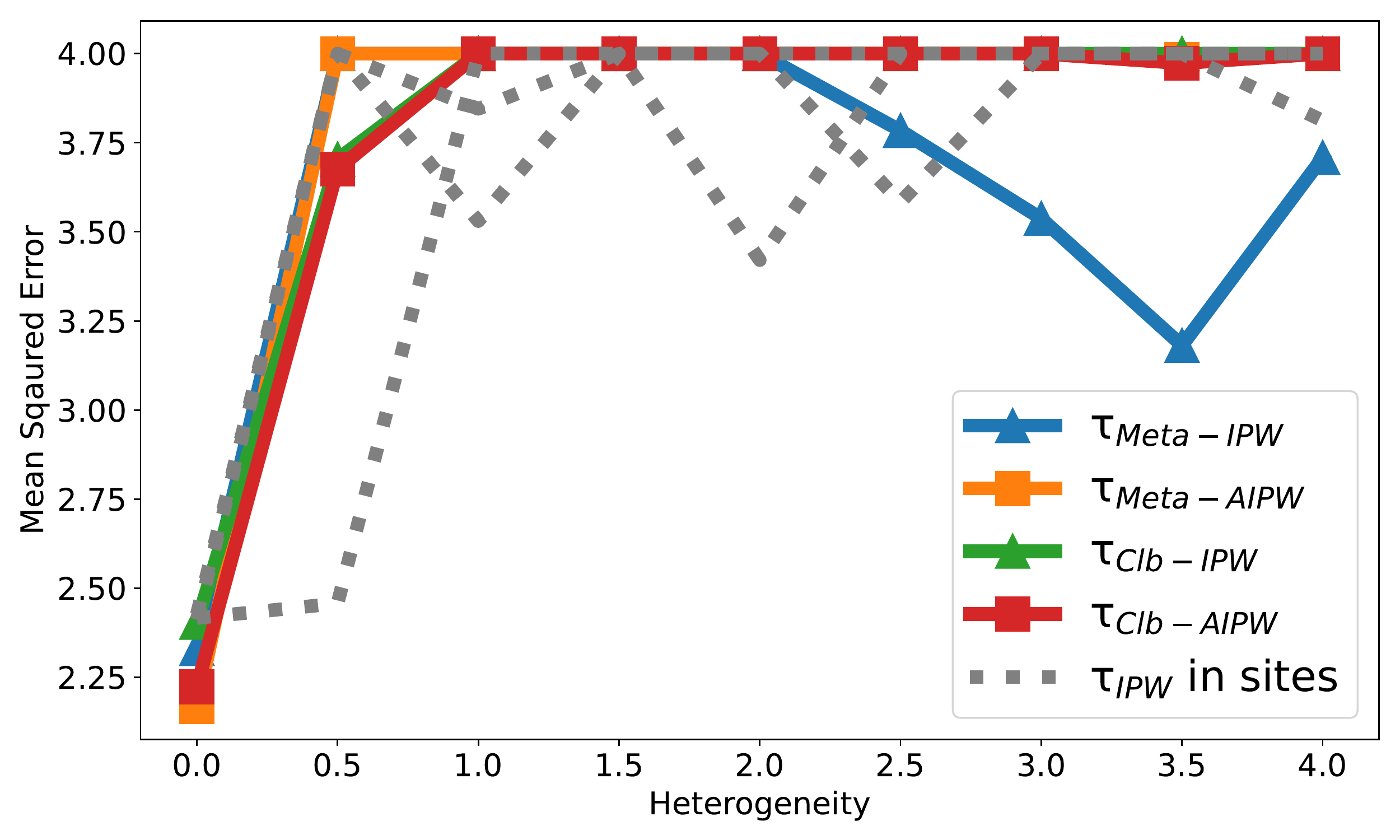}
    \end{subfigure}
    \caption{The mean squared error changing with heterogeneity. We use $\covariates^\prime$ for all misspecified models. When both models fail to fit the data, there's no theoretical guarantee and all estimators have huge mean squared error. The better performance of \poolipw~there is meaningless.}
    \label{fig:ablations}
\end{figure}

\end{document}